\documentclass{article}

\usepackage{microtype}
\usepackage{graphicx}
\usepackage{subcaption}
\usepackage{bbm}
\usepackage{booktabs} 
\usepackage[table]{xcolor}
\usepackage{makecell}

\usepackage{xspace}
\usepackage{adjustbox}
\usepackage{tikz}
\usetikzlibrary{arrows.meta,calc,positioning,shapes.geometric, decorations.pathmorphing, shapes, decorations.markings}
\usepackage{placeins}

\usepackage{hyperref}

\usepackage[preprint]{icml2026}

\usepackage{amsmath}
\usepackage{amssymb}
\usepackage{mathtools}
\usepackage{amsthm}
 \usepackage{enumitem} 

\newcommand{\name}{\textsc{MacroGuide}\xspace}

\usepackage[capitalize,noabbrev]{cleveref}

\theoremstyle{plain}
\newtheorem{theorem}{Theorem}[section]

\newtheorem{lemma}[theorem]{Lemma}

\theoremstyle{definition}

\theoremstyle{remark}

\usepackage[textsize=tiny]{todonotes}

\icmltitlerunning{\name: Topological Guidance for Macrocycle Generation}

\begin{document}

\twocolumn[
  \icmltitle{\name: Topological Guidance for Macrocycle Generation}



  \icmlsetsymbol{equal}{*}

  \begin{icmlauthorlist}
    \icmlauthor{Alicja Maksymiuk}{equal,ox,aithyra}
    \icmlauthor{Alexandre Duplessis}{equal,ens}
    \icmlauthor{Michael Bronstein}{ox,aithyra}
    \icmlauthor{Alexander Tong}{aithyra}
    \icmlauthor{Fernanda Duarte}{ox}
    \icmlauthor{{\.I}smail {\.I}lkan Ceylan}{tu,aithyra}
  \end{icmlauthorlist}

  \icmlaffiliation{ox}{University of Oxford}
  \icmlaffiliation{aithyra}{AITHYRA}
  \icmlaffiliation{ens}{ENS Ulm, PSL}
  \icmlaffiliation{tu}{TU Wien}
  
  \icmlcorrespondingauthor{Alicja Maksymiuk}{alicja.maksymiuk@cs.ox.ac.uk}

  \icmlkeywords{Machine Learning}

  \vskip 0.3in
]



\printAffiliationsAndNotice{\icmlEqualContribution}

\begin{abstract}
Macrocycles are ring-shaped molecules that offer a promising alternative to small-molecule drugs due to their enhanced selectivity and binding affinity against difficult targets.  Despite their chemical value, they remain underexplored in generative modeling, likely owing to their scarcity in public datasets and the challenges of enforcing topological constraints in standard deep generative models.
We introduce \textsc{MacroGuide}: Topological Guidance for Macrocycle Generation, a diffusion guidance mechanism that uses Persistent Homology to steer the sampling of pretrained molecular generative models toward the generation of macrocycles, in both unconditional and conditional (protein pocket) settings. 
At each denoising step, \textsc{MacroGuide} constructs a Vietoris-Rips complex from atomic positions and promotes ring formation by optimizing persistent homology features. 
Empirically, applying \textsc{MacroGuide} to pretrained diffusion models increases macrocycle generation rates from $1\%$ to $99\%$, while matching or exceeding state-of-the-art performance on key quality metrics such as chemical validity, diversity, and PoseBusters checks. 
\end{abstract}

\section{Introduction}
\label{sec:intro}

\newcommand{\namemath}{\text{\rmfamily\textsc{MacroGuide}}}
\newcommand{\fuzzyblob}[4]{%
  \begin{scope}[shift={(#1, #2)}, rotate=#3]
    \foreach \k/\op in {1.15/0.05, 1.00/0.12, 0.85/0.12, 0.70/0.12, 0.55/0.12, 0.40/0.12, 0.25/0.12}{
      \fill[#4, opacity=\op] (0,0) ellipse ({2.3*\k} and {1.3*\k});
    }
  \end{scope}
}

\colorlet{cyclicbg}{green!5}

\colorlet{greeny}{green!50!black}

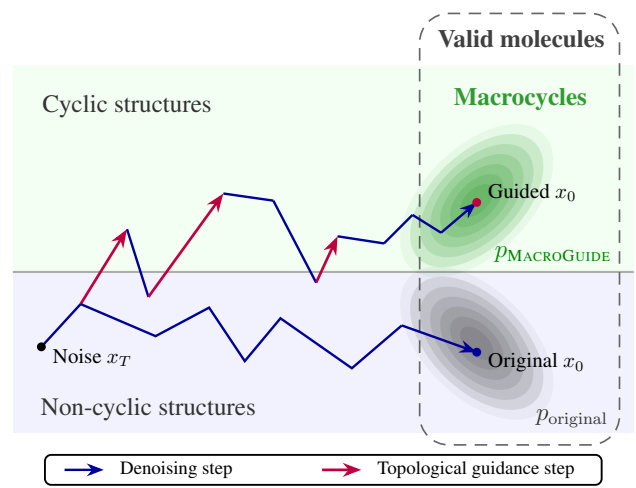
\begin{figure}[h!]

  \centering
  \begin{adjustbox}{max width=\columnwidth, margin=0pt}
  \begin{tikzpicture}[font=\sffamily,
  every node/.style={font=\sffamily},
      >=Stealth,
    set/.style={draw, fill opacity=0.2, thick},
    particle/.style={circle, fill=black, inner sep=1.5pt},
    grad_arrow/.style={->, thick},
    path_line/.style={draw, dashed, gray!60},
    x=0.60cm, y=0.60cm,
  ]

    \tikzset{
  traj/.style={line width=1.15pt, blue!60!black, },}
  \tikzset{
      guided/.style={line width=1.30pt, purple, ->},}
    \tikzset{
  legendbox/.style={
    draw=black,
    line width=0.9pt,
    fill=white,
    rounded corners=3pt,
    inner xsep=2pt,  
    inner ysep=6pt    
  }
}

    \def\xL{-2}
    \def\xR{15.4}

    \def\yTop{5.8}   
\def\yBot{-6.4}  
\def\yMid{0}

    \def\vmL{9.4}
\def\vmR{15.0}
\def\vmB{-4.55}
\def\vmT{7.25}
    \path[use as bounding box] (\xL,\yBot) rectangle (\xR,\vmT);

    \fill[cyclicbg]        (\xL,\yMid) rectangle (\xR,\yTop);
    \fill[blue!5!]   (\xL,\yMid) rectangle (\xR,-4.5);

    \draw[black!32, line width=1.pt] (\xL,\yMid) -- (\xR,\yMid);

    \node[black!80, font=\large]   at (1.2, 4.65) {Cyclic structures};

    \node[black!80, font=\large] at (1.8, -3.85) {Non-cyclic structures};

\draw[dashed, dash pattern=on 6pt off 4pt, line width=0.8pt,
      black!60, rounded corners=14pt]
  (\vmL,-4.85) rectangle (\vmR,\vmT);

\node[font=\bfseries\large, black!75, anchor=north west]
  at (\vmL+0.30,\vmT-0.25) {Valid molecules};

\node[font=\bfseries\large, greeny!80, anchor=north west]
at (\vmL+0.45+0.30,\vmT-2+0.1) {Macrocycles};

\fuzzyblob{11.0}{2.0}{45}{greeny}
\node[greeny, text opacity=1,
      rounded corners=1pt, inner sep=2pt,
      font=\large,
      anchor=east, ]
  at (14.90,.5) {$p_{\namemath}$};

\fuzzyblob{11.0}{-2.}{135}{black!80}
\node[black!80,
       text opacity=1, 
       font=\large,
      rounded corners=1pt, inner sep=2pt,
      anchor=east]
  at (14.80,-4.1) {\(p_{\mathrm{original}}\)};

\coordinate (start) at (-1.2, -2.1);
\coordinate (split) at (-0.1, -0.9);

\coordinate (target_top) at (11.0, 1.95);
\coordinate (target_bot) at (11.0, -2.25);

    \coordinate (t1)  at (1.2, 1.2);
    \coordinate (t2)  at (1.8, -0.7);
    \coordinate (t3)  at (3.9, 2.2);
    \coordinate (t4)  at (4.2, -0.5);
    \coordinate (t5)  at (5.5, 0.7);
    \coordinate (t6)  at (6.5, -0.3);
    \coordinate (t7)  at (7.1, 1.0);
    \coordinate (t8)  at (8.4, 0.8);
    \coordinate (t9)  at (9.2, 1.6);
    \coordinate (t10) at (10.0, 1.1);
    \coordinate (tnew) at (5.3, 2.);
    \coordinate (b1) at (2.0, -1.8);
    \coordinate (b2) at (3.5, -1.0);
    \coordinate (b3) at (4.5, -2.5);
    \coordinate (b4) at (5.5, -1.3);
    \coordinate (b5) at (7.5, -2.7);
    \coordinate (b6) at (8.9, -1.5);

    \draw[traj] (start) -- (split);

    \draw[guided] (split) -- (t1);
    \draw[traj]   (t1) -- (t2);
    \draw[guided] (t2) -- (t3);
    \draw[traj]   (t3) -- (tnew);
    \draw[traj]   (tnew) -- (t6);
    \draw[guided] (t6) -- (t7);
    \draw[traj]   (t7) -- (t8);
    \draw[traj]   (t8) -- (t9);
    \draw[traj]   (t9) -- (t10);
    \draw[traj, ->] (t10) -- (target_top);

    \draw[traj] (split) -- (b1) -- (b2) -- (b3) -- (b4) -- (b5) -- (b6);
    \draw[traj, ->] (b6) -- (target_bot);

particle/.style={circle, fill=black, inner sep=1.5pt},

\node[particle] at (start) {};
\node[
  anchor=west,
  font=\normalsize,
  xshift=2pt,
  yshift=-5pt
] at (start) {Noise $x_T$};
\node[particle, fill=blue!60!black] at (target_bot) {};
\node[
  anchor=west,
  font=\normalsize,
  xshift=2pt,
  yshift=-5pt
] at (target_bot) {Original $x_0$};
\node[particle, purple] at (target_top) {};
\def\legY{-5.55}
\node[
  anchor=west,
  font=\normalsize,
  xshift=2pt,
  yshift=+5pt
] at (target_top) {Guided $x_0$};

\def\legendSideGap{0.9}

\def\legendInnerPad{0.55}

\def\legendArrowLen{1.1}

\pgfmathsetmacro{\legL}{\xL + \legendSideGap}
\pgfmathsetmacro{\legR}{\xR - \legendSideGap}

\draw[black, line width=0.9pt, rounded corners=3pt, fill=white]
  (\legL, \legY-0.42) rectangle (\legR, \legY+0.42);

\draw[traj, ->] (\legL+\legendInnerPad, \legY) -- ++(\legendArrowLen,0);
\node[anchor=west, font=\small] at
  (\legL+\legendInnerPad+\legendArrowLen+0.25, \legY)
  {Denoising step};

\def\legendGap{0.25} 

\node[anchor=east, font=\small] (denlbl) at
  (\legR-\legendInnerPad, \legY)
  {Topological guidance step};

\coordinate (denArrowEnd)   at ($(denlbl.west)+(-\legendGap,0)$);
\coordinate (denArrowStart) at ($(denArrowEnd)+(-\legendArrowLen,0)$);
\draw[guided, ->] (denArrowStart) -- (denArrowEnd);

  \end{tikzpicture}
  \end{adjustbox}
  
  \caption{\textbf{Method overview.} \name drives the denoising trajectory towards macrocyclic structures using updates from a topological objective.}
  \label{fig:mainfig}
\end{figure}

Macrocycles -- cyclic molecules with a ring of 12 or more heavy atoms -- have attracted growing interest as drug candidates due to their improved \emph{selectivity} and \emph{binding affinity} against difficult targets 
\citep{garcia2023macrocycles, mallinson2012macrocycles, giordanetto2014macrocyclic}. 
The improvement in selectivity is largely attributed to the ring structure, which restricts molecular flexibility relative to linear analogs. This reduces off-target binding arising from conformational changes. Furthermore, macrocycles' larger size enables more protein-ligand interactions, allowing them to target challenging binding sites or even external protein surfaces \citep{yudin2015macrocycles}. The improvement in binding affinity comes from macrocycles' greater rigidity too, as it decreases their entropy in solvent and hence reduces entropic penalty upon binding. 

\textbf{Macrocycles in drug discovery.} \ \ 
Many macrocycles can fold into conformations that mask polar groups \citep{naylor2017cyclic}, which allows them to achieve oral bioavailability and to defy classical drug-likeness rules (e.g.  Rule of 5 by \citeauthor{lipinski1997experimental} \yrcite{lipinski1997experimental}). Macrocycles play a critical role in modern therapeutics with 
17 macrocyclic drugs approved by the FDA in the last five years alone \cite{Du25} (out of  
over 75 approved to date \cite{jiang2026macrocycle}). This includes clinical successes such as the immunosuppressant cyclosporine or the oncological drug lorlatinib (which outperforms its linear counterpart, crizotinib
\citep{ermert2017design,doi:10.1056/NEJMoa2027187}). 
Despite these developments and the great therapeutic promise of macrocycles, they remain underexplored in deep generative modeling. 

\textbf{Macrocycle generation.} \ 
Existing work on \mbox{macrocycle}\footnote{In this paper, \textit{macrocycles} refer to molecules with a cycle of at least~12 heavy atoms, although this term is sometimes used to describe a specific subset of such molecules, namely \textit{cyclic peptides}, which are built out of amino acids and peptide bonds.} generation has mostly focused on cyclic peptides (see \mbox{Appendix}~\ref{ap:cyclic_peptides} for a literature overview). These methods leverage peptide-specific traits, such as backbone homogeneity arising from a small set of amino-acid building blocks, which makes them inapplicable to structurally-diverse, arbitrary macrocycles. 

Non-peptidic macrocycle design has been explored in two prior works. Macformer, a SMILES-based transformer trained on linear-to-macrocyclic pairs, exploits the macrocyclization of pre-existing linear scaffolds to generate JAK2 inhibitors \citep{diao2023macrocyclization}. Macro-Hop performs macrocycle scaffold hopping via reinforcement learning and successfully produces PDGFR$\alpha$ inhibitors by generating structures that satisfy predefined constraints and 3D similarity to a reference molecule  \citep{liang2025designing}. However, both methods rely on handcrafted scaffolds or linear precursors, which severely limits their applicability in \textit{de novo} design, where neither the appropriate reference nor a linear equivalent is known \textit{a priori}.

\textbf{Problem statement and challenges.} \ \ 
Despite the rapid progress of diffusion-based models for small-molecule design \citep{wang2025diffusionmodelsmoleculessurvey, peng2023moldiff, hoogeboom2022equivariant, vignac2022digress}, to the best of our knowledge, \emph{there is currently no generative method explicitly designed to produce arbitrary macrocycles}. This gap can be largely attributed to two factors. First, public chemical datasets contain relatively few macrocycles, often much simpler than therapeutically-relevant ones \citep{garcia2023macrocycles}. Second, the existence of a large ring is a global topological property, whereas most generative models focus on approximating local chemical validity. As a result, existing unconstrained generative algorithms rarely produce macrocycles (see \cref{tab:all_models_unconditional}).

\textbf{Contributions.} \ \ 
We introduce Topological Guidance for Macrocycle Generation (\name), a diffusion guidance mechanism that steers pretrained molecular generative models -- originally designed for unconstrained molecule generation -- towards the generation of macrocycles, with or without conditioning (Figures~\ref{fig:mainfig} and \ref{fig:generated_macrocycles}). It works by computing gradients of the persistent homology features of a Vietoris-Rips complex at each step of the denoising process.

\begin{itemize}[topsep=0pt, partopsep=0pt, itemsep=0pt, parsep=0pt, leftmargin=*]
    \item Conceptually, \name is (i)~\textit{training-free}, generating macrocyclic topologies that may be rare or completely absent from the training data of the base model, without a need for retraining or finetuning; (ii)~\textit{lightweight}, introducing minimal computational overhead; (iii)~\textit{general}, as it can be plugged into various diffusion-based models; and (iv)~\textit{flexible}, allowing the user to specify both the number of rings and their sizes.
   \item Experimentally, we evaluate \name in both unconditional and conditional (protein pocket) settings; demonstrating a $99\%$ rate of macrocycle generation, compared to a baseline of $0\%$-$5\%$. Importantly, our method matches or exceeds state-of-the-art models across key quality metrics such as chemical validity, diversity, PoseBusters checks \citep{buttenschoen2024posebusters}, and pharmacophore satisfaction.
\end{itemize}

This work introduces \emph{the first \textit{de novo} arbitrary macrocycle generation method}, addressing a critical gap in deep generative modeling for drug discovery.

\begin{figure}[tb]
    \centering
\begin{minipage}{\linewidth}
    \centering
    \begin{minipage}{0.33\linewidth}
        \centering
        \includegraphics[width=\linewidth, trim=12pt 15pt 5pt 5pt, clip]{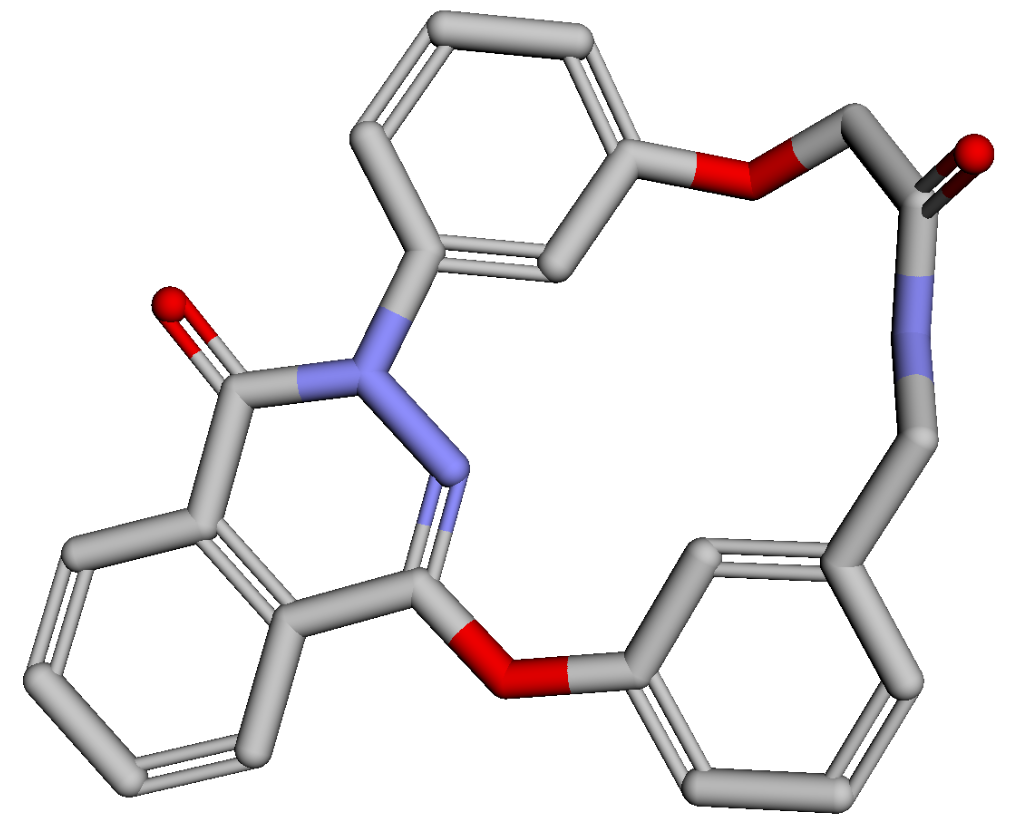}
    \end{minipage}\hfill
    \begin{minipage}{0.33\linewidth}
        \centering
        \includegraphics[width=\linewidth, trim=15pt 15pt 5pt 7pt, clip]{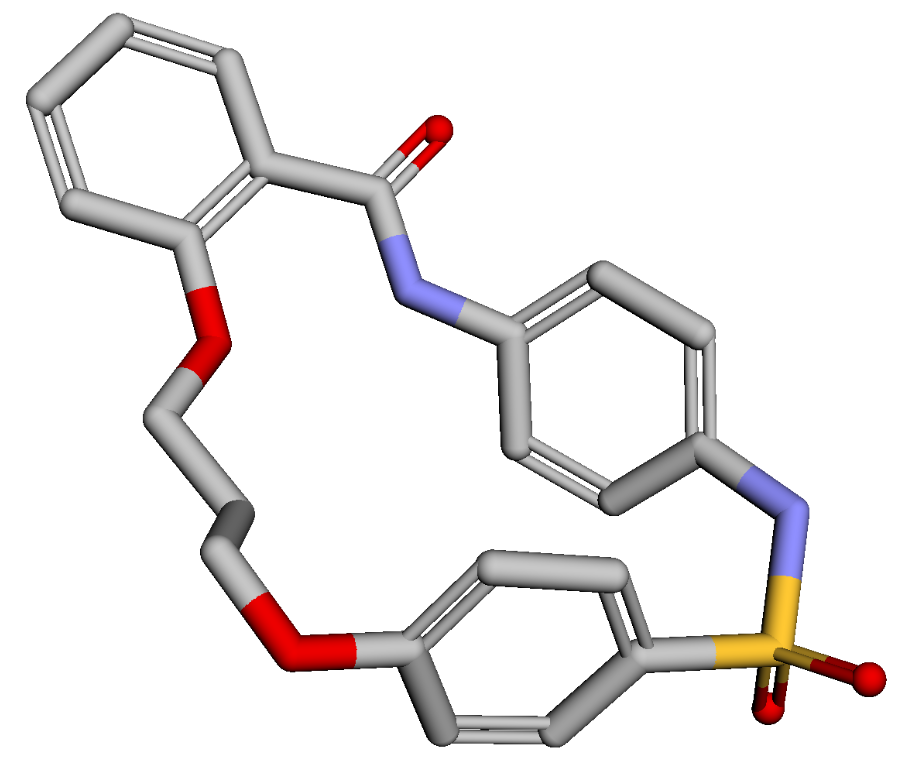}
    \end{minipage}\hfill
    \begin{minipage}{0.33\linewidth}
        \centering
        \includegraphics[width=\linewidth, trim=10pt -5pt 12pt 8pt, clip]{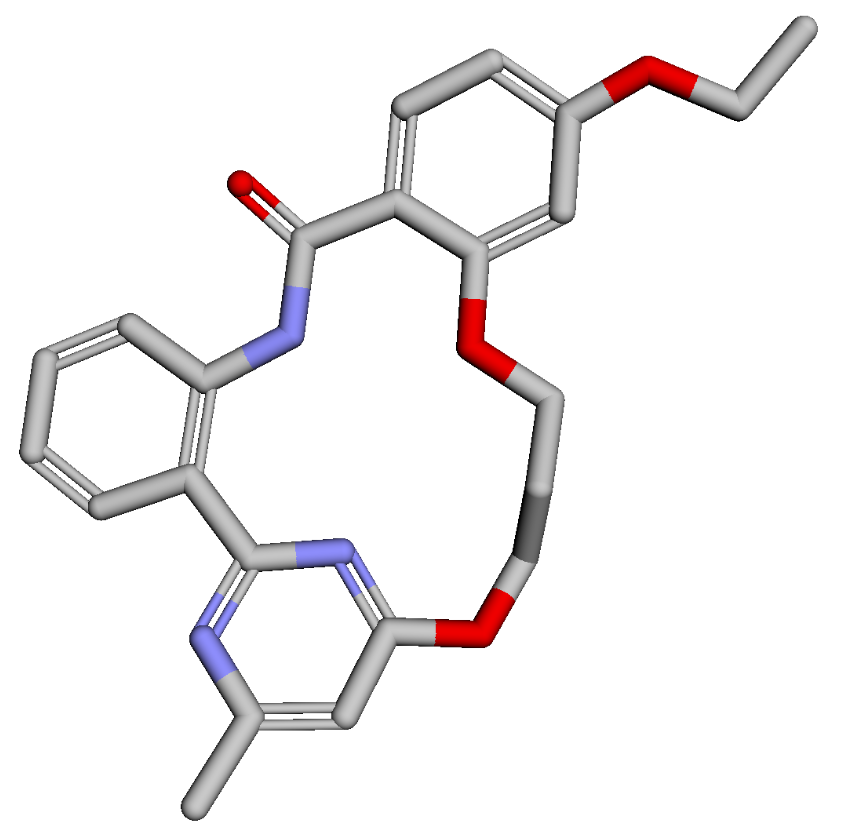}
    \end{minipage}
\end{minipage}

    \begin{minipage}{\linewidth}
        \centering
        \setlength{\fboxsep}{0pt}%
        \fbox{%
            \begin{minipage}{0.49\linewidth}%
                \centering
                \includegraphics[width=\linewidth, trim={19pt 0 0 0}, clip]{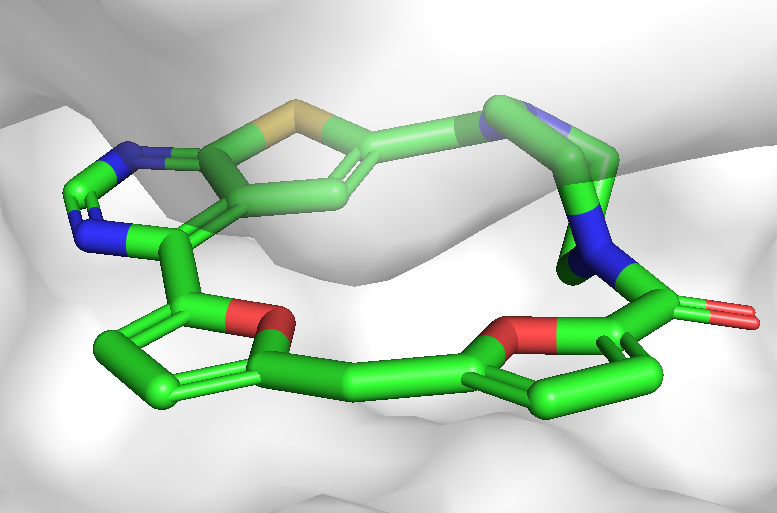}%
            \end{minipage}}
            \fbox{%
            \begin{minipage}{0.49\linewidth}%
                \centering
                \includegraphics[width=\linewidth, trim={0 0 0 15pt}, clip]{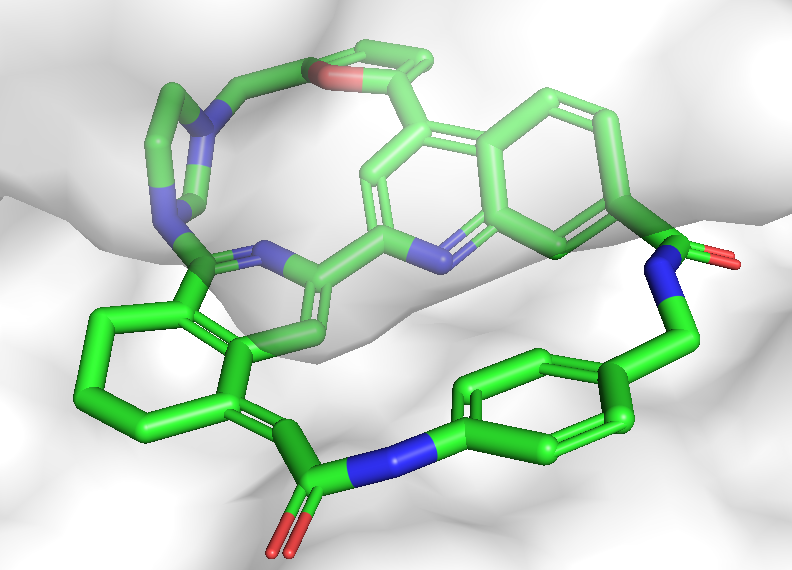}%
            \end{minipage}%
        }%
    \end{minipage}

    \caption{\textbf{Examples of generated macrocycles.} \textit{\underline{Top:}} Unconditional generation. \textit{\underline{Bottom:}} Protein conditioning. \textit{\underline{Bottom right:}} This molecule was specifically optimized to be bicyclic (two rings). Molecule fragments appear transparent when hidden by parts of the protein pocket.}
    \label{fig:generated_macrocycles}
\end{figure}

\section{Preliminaries}
\subsection{Diffusion-based Molecule Generation}
Diffusion models \citep{sohl2015deep,ho2020denoising,song2019generative} are a class of generative models that produce data by reversing a gradual noising process. In the forward process, a clean sample $x_0$ is progressively perturbed according to a Markov chain yielding noisy samples $x_t$ that converge to an isotropic Gaussian as $t$ increases. At the core of these models is the learning of the \emph{score function}
\begin{equation*}
    \mathbf{s}_\theta(x_t,t) \approx \nabla_{x_t} \log p_t(x_t),
\end{equation*}
which estimates the gradient of the log-density of the perturbed data. It is then applied iteratively, starting from pure noise, to generate samples during the reverse process, also known as denoising \citep{song2019generative,song2020score}. 

Diffusion models have been applied to tasks such as \textit{de novo} molecule generation \citep{hoogeboom2022equivariant, vignac2022digress, peng2023moldiff, schneuing2024structure, ziv2025molsnapper, schneuing2025multidomain}, as their iterative denoising formulation allows models to capture global structural dependencies while refining fine-grained details. In the molecular setting, diffusion can either operate directly in continuous 3D coordinate space of atoms \citep{hoogeboom2022equivariant}, on molecular graphs represented as adjacency matrices \citep{vignac2022digress}, or in a natural combination of both where the atom positions, atom types and bonds are included in the denoising network \citep{peng2023moldiff, vignac2023midi}. In contrast to sequential (autoregressive) models, diffusion-based approaches can more easily generate molecules in their energetic minima without the need for further redocking, as they model the full joint distribution of atoms.

\subsection{Persistent Homology for Molecular Point Clouds} Persistent homology is a tool from topological data analysis (TDA) that characterizes the topology of data across multiple scales \citep{edelsbrunner2002topological, edelsbrunner2010computational}.
In the context of a molecular 3D point cloud, one constructs a family of \emph{simplicial complexes} that encode proximity relationships between points, forming a nested sequence known as a \emph{filtration}.
A common choice is the \emph{Vietoris-Rips complex} at scale $\varepsilon$:
\[
\mathrm{VR}_\varepsilon(X) = \{\sigma \subseteq X \mid d(x_i, x_j) \le \varepsilon; \forall\, x_i,x_j \in \sigma \},
\]
where $d$ is typically the Euclidean distance.
As $\varepsilon$ increases, the topology of $\mathrm{VR}_\varepsilon(X)$ evolves: connected components (captured by $H_0$) merge, loops and cycles ($H_1$) form and fill in, and in higher dimensions ($H_2$, etc.) voids appear and vanish.
For each component $p_i^{(D)}$ of dimension $D$, persistent homology records the \emph{birth} $b_i^{(D)}$ and \emph{death} $d_i^{(D)}$ scales, producing a \emph{persistence diagram} or \emph{barcode}.
In the context of molecules, $H_0$ features correspond to clusters of atoms, $H_1$ components often correspond to chemical rings or other cycle-like structures, and $H_2$ can capture internal cavities, relevant for instance in molecular cages.
These descriptors are invariant to rigid transformations, robust to small perturbations and differentiable, making them useful for integrating topology information into various machine learning models (see Appendix \ref{ap:other_tda_papers} for a literature overview).

\section{Methodology}
\subsection{Method Overview}\label{sec:methodoverview}
For each diffusion time step $t$, we define the state of a molecule as $M_t=\{X_t,A_t,B_t\}$, where $X_t \in \mathbb{R}^{N\times3}$ describes the positions of the $N$ atoms, $A_t\in [0,1]^{N\times a}$ is the probability distribution on their atomic types (with $a$ the number of atom types), and $B_t\in [0,1]^{N\times N \times b}$ is the distribution on the bond types (if available), with $b$ the number of allowed bond types, including the absence of a bond.

We update the score provided by the denoising architecture $s_\theta (X_t, t)$ with our guidance function $\mathcal{F}_{\mathrm{TDA}}$
\begin{equation}
\label{eq:diffusion}
    \tilde s_\theta (X_t, t) = s_\theta (X_t, t) - \lambda_t \nabla_{X_t}\mathcal{F}_{\mathrm{TDA}}(X_t)
\end{equation} where $\lambda_t$ allows for scheduling of the guidance. 

The parameters of the original architecture remain fixed and the network is \emph{not} retrained. The guidance term (\Cref{fig:molecule_guidance}) is based on the computation of a Vietoris-Rips complex, and is defined as
\begin{equation}
\label{eq:tda_loss_new}
\mathcal{F}_{\mathrm{TDA}}(X)
=
F^{H_1}_{\mathrm{death}}(X)
+
F^{H_1}_{\mathrm{birth}}(X)
+
F^{H_0}_{\mathrm{death}}(X)
\end{equation}

\begin{itemize}[topsep=0pt, partopsep=0pt, itemsep=3pt, parsep=0pt, leftmargin=*]
    \item \textbf{Cycle size -- $H_1$ death.} $F^{H_1}_{\mathrm{death}}$ encourages the size of the largest cycle to be in a given range. Specifically, we optimize $d_{i^*}^{(1)}$ (where $i^\star = \arg\max_i d_i^{(1)}$), the death time of the $H_1$ component (ring) that dies last, to lie in a target interval $[d_{\min}, d_{\max}]$:
\begin{flalign}
\label{eq:h1_death}
F^{H_1}_{\mathrm{death}}(X)
=
 &\Big(\mathrm{ReLU}\big(d_{\min}- d^{(1)}_{i^\star}(X)\big)\Big)^2 + \notag\\
 &\Big(\mathrm{ReLU} \big( d^{(1)}_{i^\star}(X)-d_{\max}\big)\Big)^2
\end{flalign}

This promotes the formation of a large cycle and enables control over its size. The link between the number of atoms in the cycle and the death time of its associated $H_1$ component is detailed in \Cref{sec:macrosizecontrol}.
\item \textbf{Cycle connectivity -- $H_1$ birth.} $F^{H_1}_{\mathrm{birth}}$ acts as a proxy for cycle connectivity, by ensuring each edge is not longer than the maximum allowed bond length $\ell^\star$. In practice
\begin{equation}
\label{eq:h1_birth}
    F^{H_1}_{\mathrm{birth}}(X) = \mathrm{ReLU} \big(b^{(1)}_{i^\star}(X) - \ell^\star\big)
\end{equation}
constrains the largest edge size in the cycle $b_{i^\star}$.
Since bond information is not explicitly modeled, \mbox{$F^{H_1}_{\mathrm{birth}}$} does not formally guarantee cycle connectivity, but we find it sufficient in practice. The absence of a squared penalty for this term, in contrast to the other guidance terms, empirically leads to improved performance (see Appendix~\ref{app:ablations_square}).

\item \textbf{Molecule connectivity -- $H_0$ death.} $F^{H_0}_{\mathrm{death}}$ promotes the existence of one single connected component by making sure all the distances between any two adjacent atoms (i.e. the death $d^{(0)}_{j}$ of an $H_0$ component) are of length less than the maximum allowed bond length $\ell^\star$:
\begin{equation}
\label{eq:h0}
    F^{H_0}_{\mathrm{death}}(X) = \sum\limits_{j=1}^{N_0} \Big(\mathrm{ReLU} \big(d^{(0)}_{j}(X) - \ell^\star\big)\Big)^2
\end{equation}
where $N_0$ is the number of finite-death $H_0$ components.
\end{itemize}

Detailed algorithms can be found in Appendix~\ref{app:algo}.
\begin{figure}[t]
  \centering
  \begin{tikzpicture}[scale=0.85, transform shape]
    \node[inner sep=0] (img) {%
      {\includegraphics[height=0.8\linewidth]{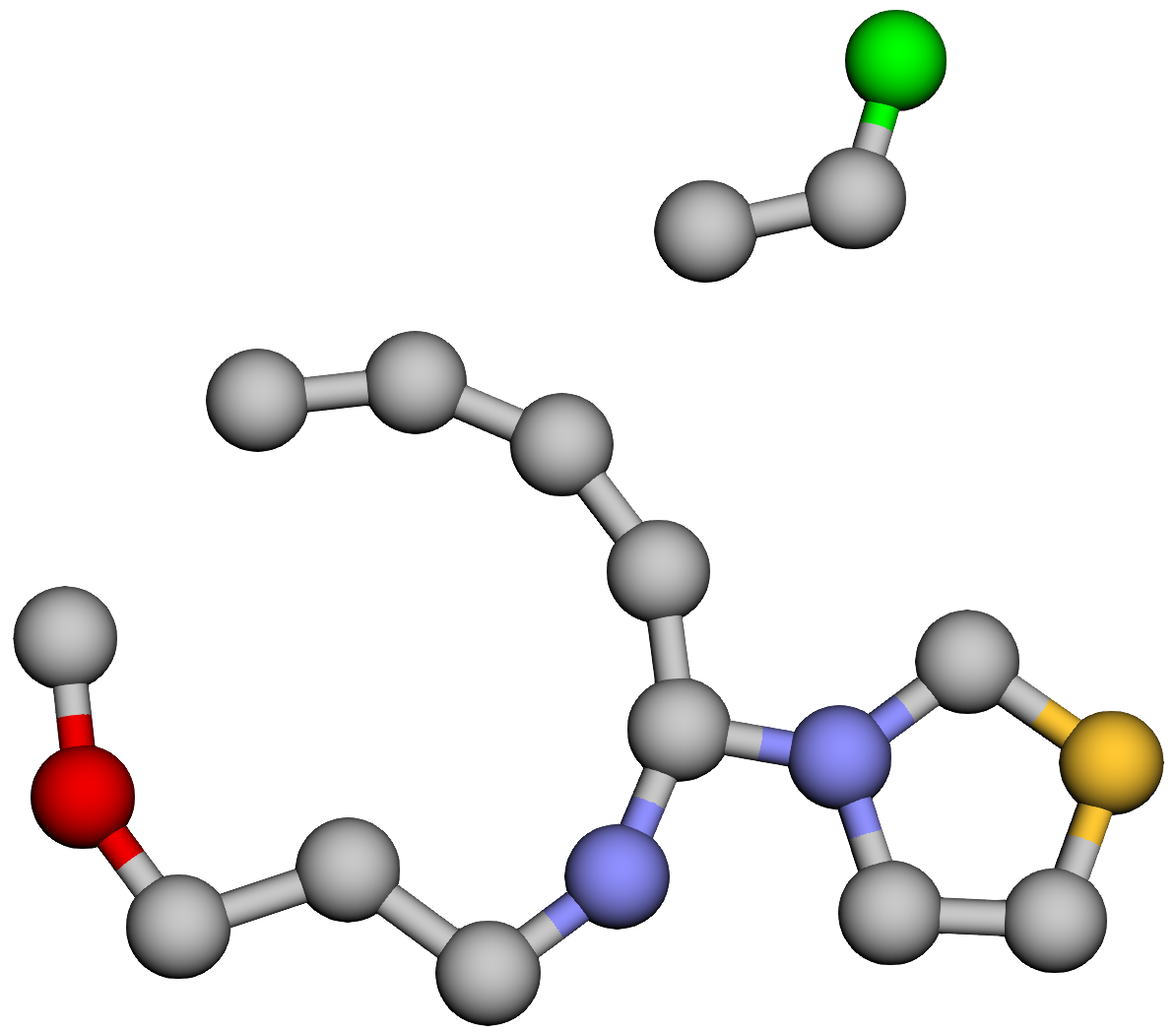}}%
    };

    \node[font=\sffamily\large, text=green!60!black, anchor=north west, xshift=2.3cm, yshift=-3.8cm]
      at (img.north west) {$F^{H_1}_{\mathrm{death}}$};

    \node[font=\sffamily\large, text={rgb,255:red,200; green,95; blue,10}
, anchor=north west, xshift=4.2cm, yshift=-1.9cm]
      at (img.north west) {$F^{H_0}_{\mathrm{death}}$};

\node[
    font=\sffamily\large,
    text=blue!80!black,
    anchor=south east,
    align=center,
    xshift=-5.45cm,
    yshift=2.7cm
] at (img.south east) {$F^{H_1}_{\mathrm{birth}}$};

        \node[
  text=black!50,
  align=center,
  anchor=south east,
  xshift=-4.02cm,
  yshift=4.6cm
] at (img.south east) {Largest (target)\\$H_1$ component};
  
  \node[
  text=black!50,
  align=center,
  anchor=south east,
  xshift=0.05cm,
  yshift=2.85cm
] at (img.south east) {Small (non-target)\\$H_1$ component};

\coordinate (A) at (1.0,0.0);
\coordinate (B) at (2.0,0.0);
\coordinate (C) at (3.0,0.0);
\coordinate (D) at (4.0,0.0);
    \draw[->, draw={rgb,255:red,230; green,110; blue,12},
    line width=2pt,
    -{Stealth[length=10pt, width=10pt]}]
        (0.55,1.55) -- (0,0.7);

  \draw[
    <-,
    blue!90!black,
    line width=2pt,
    -{Stealth[length=10pt, width=10pt]}
]
(-2.36,0.54) -- (-2.686,0.137);

\draw[
    <-,blue!90!black,
    line width=2pt,
    -{Stealth[length=10pt, width=10pt]}
]
(-3.2,-0.5) -- (-2.874,-0.097);
 
\draw[
    ->,
    green!60!black,
    line width=2pt,
    -{Stealth[length=10pt, width=10pt]}
]
(-1.090,-1.289) -- (-1.44,-1.96);

\draw[
    <-,
    green!60!black,
    line width=2pt,
    -{Stealth[length=10pt, width=10pt]}
]
(-0.670,-0.491) -- (-0.32,0.18);

    \draw[
        black!40,
        very thick,
        dotted
    ]
    (-1.38,-1.1) circle (2.4cm);

        \draw[
        black!40,
        very thick,
        dotted
    ]
    (2.48,-1.8) circle (1.35cm);

  \end{tikzpicture}

  \caption{\textbf{Topological guidance for diffusion-based macrocycle generation.}}
  \label{fig:molecule_guidance}
\end{figure}

\textbf{Addressing gradient sparsity.} \ \ 
Each topological feature is created and destroyed by specific simplices. As a consequence, the gradient signal comes only from the few points involved in critical birth and death events \citep{poulenard2018topological}. Although this is generally not problematic, as the critical points typically change across denoising steps \citep{carrière2021optimizingpersistenthomologybased}, it can render the  optimization of $H_0$ components unstable and potentially divergent. We address this issue by masking the gradient of the atom closest to the molecular centroid. Further details and theoretical guarantees are provided in Appendix \ref{ap:h0}. 

\subsection{Posterior Sampling}\label{sec:posteriorsampling}
Following \citet{guo2024gradientguidancediffusionmodels}, the score
of a conditional distribution admits the decomposition
\begin{equation*}
\label{eq:cond_score}
\nabla_x \log p_t(x\mid y)
=
\nabla_x \log p_t(x)
+
\nabla_x \log p_t(y\mid x).
\end{equation*}
In our setting, the guidance term can thus be interpreted as a conditional score term
\begin{equation}
\label{eq:tda_cond_score}
\nabla_x \log p_t(y\mid x)
=
-\lambda_t\,\nabla_x \mathcal{F}_{\mathrm{TDA}}(x),
\qquad \lambda>0,
\end{equation}
which corresponds to an energy-based conditional likelihood
\begin{equation*}
\label{eq:tda_likelihood}
p(y\mid x)\;\propto\;\exp\!\big(-\lambda_t\,\mathcal{F}_{\mathrm{TDA}}(x)\big).
\end{equation*}

In particular Bayes' Theorem gives
\begin{equation*}
\label{eq:tda_tilt}
p_t(x\mid y)\;\propto\;p_t(x)\exp\!\big(-\lambda_t\,\mathcal{F}_{\mathrm{TDA}}(x)\big),
\end{equation*}
and guidance can be interpreted as approximating sampling from this conditional distribution in the continuous-time limit with an exact score.

In practice, however, this interpretation only holds approximately. First, the learned score is imperfect, so the dynamics do not exactly track the target SDE. \citet{gao2025regrectifiedgradientguidance} analyze this discrepancy and propose rectified guidance formulations to reduce bias from score errors. Second, practical samplers discretize the SDE. \citet{guo2024gradientguidancediffusionmodels} show that discretization introduces systematic deviations in the effective distribution, and propose variance-preserving corrections. Nevertheless, we find this approximation sufficient for our task, as shown in Section \ref{sec:results}. 

\subsection{Macrocycle Size Control}\label{sec:macrosizecontrol}

\paragraph{Combinatorial and geometric sizes.}
The \emph{size} of a macrocycle is defined here as the number $n$ of heavy atoms it contains. However, we only have direct control over a geometric notion of size, which is the \emph{death} of the corresponding $H_1$ component in the Vietoris-Rips filtration. Therefore we need to establish an explicit relationship between these two definitions.

A convenient and analytically tractable way to do so is to consider an idealized but informative model: an equilateral polygon with $n$ vertices embedded in the
plane (representing atoms), and edges of length $\ell$ (representing bonds). However, regularity is not a realistic assumption as tetrahedral carbon arrangements impose smaller bond angles than the ones of a regular polygon ($\theta=109.5^\circ$, which can be observed in \Cref{fig:generated_macrocycles}). We thus study crown-shaped equilateral $n$-gons with interior angles $\theta$ (see \Cref{fig:crown_proof} of Appendix~\ref{ap:h0} for a visualization). The following theorem provides a simple relationship $n = f(d,\ell)$ in this framework (Proof in Appendix~\ref{app:proof}).

\begin{theorem}[Vietoris-Rips death time of a tetrahedral cycle] \label{thm:rips_death_tetra} Consider a regular cyclic conformation of n atoms ($n$ even) with bond length $\ell$ and bond angle $\theta$. The death time d of the dominant $H_1$ component in the Vietoris-Rips filtration is given by: \begin{equation} d = \ell \sqrt{2(1-\cos\theta)} \cdot \frac{\sin\left( \lceil \frac{n}{6} \rceil \frac{2\pi}{n} \right)}{\sin\left( \frac{2\pi}{n} \right)} \label{eq:closed_form}\end{equation} In the limit of large n, \begin{equation}\label{eq:linearapprox} n = \frac{4\pi}{\sqrt{6(1-\cos\theta)}}\frac{d}{\ell} + \mathcal{O}(1) \end{equation} \end{theorem}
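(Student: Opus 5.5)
The plan is to reduce the statement to the known Vietoris--Rips behaviour of points equally spaced on a circle. I take the crown-shaped conformation of \Cref{fig:crown_proof} to be the equilateral $n$-gon whose vertices $x_0,\dots,x_{n-1}$ alternate between two concentric ``levels'': either two radii in the plane, or two heights on a cylinder. Its $n$-fold rotational structure is such that the vertices of equal parity, $x_0,x_2,\dots,x_{n-2}$, form a regular $m$-gon with $m=n/2$.

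\textbf{Step 1: the second-neighbour distance.} By the law of cosines in the triangle $x_{i-1}x_ix_{i+1}$, the distance between atoms two bonds apart is $c=\ell\sqrt{2(1-\cos\theta)}$. The even-indexed vertices therefore form a regular $m$-gon of side $c$. Its circumradius is $R=c/(2\sin(\pi/m))=c/(2\sin(2\pi/n))$, and the distance between $x_{2i}$ and $x_{2i+2k}$ is
\[
2R\sin(2\pi k/n)=c\,\frac{\sin(2\pi k/n)}{\sin(2\pi/n)}.
\]
This already has the shape of \eqref{eq:closed_form}, with $k=\lceil n/6\rceil=\lceil m/3\rceil$.

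\textbf{Step 2: the death time.} For $m$ points equally spaced on a circle, the Vietoris--Rips $H_1$ class (the one born when consecutive points connect) dies exactly when a triangle ``wrapping around'' the circle becomes available. This happens at the chord spanning $\lceil m/3\rceil$ steps, by the Adamaszek--Adams description of Vietoris--Rips complexes of cyclic point sets; before that scale the complex is homotopy equivalent to a circle. I would invoke this result for the even sublattice directly.

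The main obstacle is showing that the interleaved odd vertices do not change the death scale. I would argue this as follows. At every scale $\varepsilon$ between the bond length and the claimed death, each odd vertex $x_{2i+1}$ is within $\varepsilon$ of every even vertex that is within $\varepsilon$ of both of its even neighbours. This holds because the odd vertex sits ``between'' them angularly, and its radial or height offset is smaller than the relevant chord differences. Hence the odd vertex is dominated by a neighbouring even vertex, so removing it is a homotopy equivalence of $\mathrm{VR}_\varepsilon$ (a strong collapse / dominated-vertex argument).

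I would check this domination inequality explicitly using the coordinates of the crown. If it fails for small $n$, I would restrict the claim to $n$ beyond a threshold and treat the remaining cases numerically. Once domination is established, the full complex has the same $H_1$ persistence as the even $m$-gon, which proves \eqref{eq:closed_form}.

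\textbf{Step 3: asymptotics.} Write $\lceil n/6\rceil=n/6+\delta$ with $0\le\delta<1$. Then
\[
\sin\!\big(2\pi\lceil n/6\rceil/n\big)=\sin(\pi/3)+\mathcal{O}(1/n)=\tfrac{\sqrt3}{2}+\mathcal{O}(1/n),
\qquad
\sin(2\pi/n)=\tfrac{2\pi}{n}\big(1+\mathcal{O}(n^{-2})\big).
\]
Substituting into \eqref{eq:closed_form} gives
\[
d=\frac{\sqrt3\,c}{4\pi}\,n+\mathcal{O}(1).
\]
Inverting, and using $\sqrt3\,c=\ell\sqrt{6(1-\cos\theta)}$, yields \eqref{eq:linearapprox}.
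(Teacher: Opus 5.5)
Steps 1 and 3, and your use of Adamaszek--Adams for the inner $m$-gon, are exactly the paper's proof. The paper only asserts that death edges join atoms of the inner circle; you are right that this is the crux, but your justification of it fails.

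Write $N[v]$ for the closed $\varepsilon$-neighbourhood of $v$ and $E$ for the even atoms. The inequality you state, $E\cap N[x_{2i}]\cap N[x_{2i+2}]\subseteq N[x_{2i+1}]$, is the reverse of what a strong collapse needs. The vertex $x_{2i+1}$ is dominated by $x_{2i}$ iff $N[x_{2i+1}]\subseteq N[x_{2i}]$, including the odd neighbours of $x_{2i+1}$. That inclusion fails on whole ranges of scales:
\begin{itemize}
\item for $\ell\le\varepsilon<\ell'=\ell\sqrt{2(1-\cos\theta)}$ the complex is the bare $n$-cycle, and deleting the odd atoms disconnects it;
\item for $|x_{2i+1}-x_{2i+3}|\le\varepsilon<|x_{2i}-x_{2i+3}|$ (a nonempty range below the death for tetrahedral $\theta$ and $n\ge 8$) you get $x_{2i+3}\in N[x_{2i+1}]\setminus N[x_{2i}]$ and, symmetrically, $x_{2i-1}\in N[x_{2i+1}]\setminus N[x_{2i+2}]$.
\end{itemize}

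More seriously, the reduction is false for some $\theta$, so any proof must use the depth $R_o-R_i$ of the crown (outer minus inner radius). Restricting to large $n$ is not the issue. At $\theta=(n-2)\pi/n$ the crown is a regular $n$-gon, and the paper's own Lemma then gives death at $\lceil n/3\rceil$ steps. For $n=8$ that is $\ell\sin(3\pi/8)/\sin(\pi/8)\approx 2.41\,\ell$, whereas \eqref{eq:closed_form} gives $\approx 2.61\,\ell$; by stability of persistence the discrepancy survives for nearby $\theta$. In your cylinder variant both parity classes lie on the same circle, so nothing singles out the even $m$-gon. You should commit to the planar two-radius model of \Cref{fig:crown_proof}, with the even atoms on the inner radius $R_i$ and $\theta$ the angle at the outer atoms.

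A route that works in that model replaces domination by two bounds.
\begin{itemize}
\item \textbf{Upper bound.} For $\varepsilon\ge\ell'$ the triangles $\{x_{2i},x_{2i+1},x_{2i+2}\}$ are present. So the $n$-cycle is homologous to the inner $m$-cycle and dies no later than the Lemma's value $\varepsilon^*$.
\item \textbf{Lower bound.} The $n$-cycle winds once around the centre $0$. If it bounds a $2$-chain in $\mathrm{VR}_\varepsilon$, map the simplices affinely into the plane. Some triangle of the chain must contain $0$, since otherwise the image would be a $2$-chain in $\mathbb{R}^2\setminus\{0\}$ bounding a loop of winding number one. Such a triangle has a largest angular gap $g\cdot 2\pi/n$ with $\lceil n/3\rceil\le g\le n/2$. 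For angles in $[\pi/2,\pi]$ a chord only lengthens when an endpoint moves from $R_i$ out to $R_o$. Hence the triangle's longest edge is at least $2R_i\sin(\lceil n/3\rceil\pi/n)$.
\end{itemize}

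For $n\equiv 0,4\pmod 6$ the lower bound equals $\varepsilon^*$, which proves \eqref{eq:closed_form} for every planar crown. For $n\equiv 2\pmod 6$ the only competitors are triangles with gaps $(\tfrac{n+1}{3},\tfrac{n+1}{3},\tfrac{n-2}{3})$, whose two long edges are inner--outer chords. You therefore need the extra hypothesis $|x_0-x_{(n+1)/3}|\ge\varepsilon^*$. For large $n$ this is roughly $\theta\le 120^\circ$, and it holds for tetrahedral angles. The asymptotic formula \eqref{eq:linearapprox} is unaffected in all cases, since both bounds equal $\tfrac{\sqrt3}{4\pi}\ell' n+\mathcal{O}(\ell)$.
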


\textbf{Accuracy of the linear approximation.} \ \ 
Solving Equation~\eqref{eq:closed_form} for $n$ does not admit an easy closed-form solution, making the linear approximation of Equation \ref{eq:linearapprox} very attractive from a practical point of view, assuming it is tight enough. Consider a macrocycle of size $n=14$ (crown shaped) with a typical bond length $\ell = 1.5$ \AA, and $\theta=109.5^\circ$. The linearization provided incurs a relative error of approximately $3\%$. This confirms that for drug-like macrocycles, the higher-order terms are negligible in practice. We further validate Equation \ref{eq:linearapprox} empirically in \Cref{sec:validationformula}.

\section{Experiments}
\label{sec:results}

\subsection{Main Results}

\begin{table*}[t]
\begin{center}
\begin{small}
\caption{\textbf{Performance of unconditional macrocycle generation.} Results obtained from 1000 molecules with 30 heavy atoms.}
\begin{tabular}{lccccccc}
\toprule
    Metrics ($\uparrow$; [0-1])  & \makecell[c]{MolDiff\\(no guid.)} &  \makecell[c]{MolDiff\\(finetuned)}&\makecell[c]{Naive\\(12 atoms)}& \makecell[c]{Naive\\(14 atoms)}& \makecell[c]{Naive\\(16 atoms)}&\makecell[c]{Torus noise\\initialization}& \makecell[c]{MolDiff+\name\\(ours)}\\ \hline
    Validity     & 0.991  &  \textbf{0.994}&0.937& 0.930& 0.918&0.993&0.989\\ 
    Connectivity  & 0.958  &  0.993&0.985& 0.990& 0.987&0.960&\textbf{0.999}\\
    Successfulness  & 0.949  &  0.987&0.923& 0.921& 0.906&0.953&\textbf{0.988}\\
\hline
    Out of successful:&    &  && & &&\\
        \rowcolor[gray]{.9}\hspace{1em}\textbf{Macrocycles}& 0.053  &  0.851&0.370& 0.364& 0.369&0.054&\textbf{0.997}\\

\bottomrule
    
\end{tabular}
\label{tab:all_models_unconditional}
\end{small}
\end{center}
\vskip -0.1in
\end{table*}

\begin{table*}[t]
\begin{center}
\begin{small}
\caption{\textbf{Performance of macrocycle generation with protein conditioning.}}

\begin{tabular}{lcccccc}
\toprule
    Metrics ($\uparrow$; [0-1])  & \makecell[c]{MolSnapper\\(no guid.)}&  \makecell[c]{MolSnapper\\(finetuned)}&\makecell[c]{Naive\\(12 atoms)}& \makecell[c]{Naive\\(14 atoms)}& \makecell[c]{Naive\\(16 atoms)}& \makecell[c]{MolSnapper+\name\\(ours)}\\ \hline
    Validity     & 0.858 &  0.795 &0.819& 0.791& 0.763&\textbf{0.925}\\ 
    Connectivity  & 1.000 &  1.000 &1.000 & 1.000 & 1.000&1.000\\
    Successfulness  & 0.858 &  0.795 &0.819& 0.791& 0.763&\textbf{0.925}\\
\hline
    Out of successful:&    &  && & &\\
        \rowcolor[gray]{.9}\hspace{1em}\textbf{Macrocycles}& 0.003 &  0.180 &0.013&0.000 & 0.003&\textbf{0.995}\\

\bottomrule
    
\end{tabular}
\label{tab:all_models_conditional}
\end{small}
\end{center}
\vskip -0.1in
\end{table*}

\subsubsection{Unconditional Generation} 
\label{sec:unconditional}

To demonstrate the ability of \name to generate macrocycles we apply it to MolDiff, a diffusion model that generates molecules by denoising a 3D point cloud of atoms, together with bonds \citep{peng2023moldiff}. MolDiff was pretrained on GEOM-Drug \citep{axelrod2022geom}, a dataset of small molecules with drug-like properties, where only 0.14\% of the training data are macrocycles.
In our guidance method, the simplicial complex is built using Pytorch Topological \citep{rieck2022torch_topological}. Further setup details are described in Appendix~\ref{app:exp}.

\textbf{Baselines.} \ \ 
As \name is the first arbitrary macrocycle generation method, we cannot rely on established baselines. Instead, we design the following custom baselines. First, we evaluate whether macrocycles can be generated by \textbf{finetuning} existing models on macrocyclic datasets (Appendix~\ref{app:finetuning}). Second, we construct a \textbf{naive} guidance mechanism that forces a chosen subset of atoms to form a ring, where adjacent atoms are kept close and opposite atoms are pushed apart (Appendix~\ref{app:naive}). We check the performance for subset sizes from 12 to 16 atoms. \mbox{Finally,} we test whether initializing the denoising process with \mbox{\textbf{torus-shaped noise}} resembling a macrocycle improves generation (Appendix~\ref{app:torus}).

\textbf{Metrics.} \ \ 
For consistency with previous approaches \citep{peng2023moldiff}, we reuse the following metric definitions: molecules are considered \textbf{valid} if they can be parsed by RDKit \citep{landrum2013rdkit}, \textbf{connected} if they have only one graph connected component, and \textbf{successful} if they meet both properties at the same time.
If the generated molecule is successful, we test if the molecule is a \textbf{macrocycle} by checking for the presence of a chordless cycle of size at least~12 using \texttt{Chem.GetSymmSSSR} from RDKit. This only includes cycles without edges between non-neighboring vertices, which excludes fused small rings where the outer boundary can have $\geq 12$ atoms.

\vspace{1cm}

\textbf{Parameter choice.} \ \ 
During the experiments, we set $\lambda_t = 1$,  $[d_{\min}, d_{\max}] = [4.45\ \text{\AA}, 5.05\ \text{\AA}]\,$, and $\ell^\star = 2\,\text{\AA}$, which is chosen based on chemical knowledge and is slightly higher than longest common bonds in organic chemistry.

\textbf{Results.} \ \ 
Table~\ref{tab:all_models_unconditional} demonstrates that \name increases the number of generated macrocycles by almost 20-fold compared to MolDiff, and outperforms all the other baselines. Finetuning does yield a respectable generation rate, although generating about 15\% fewer macrocycles than \name. Visualizations of generated molecules are provided in Appendix~\ref{app:vis_unc}.

\subsubsection{Protein Conditioning}
\label{sec:conditional}
\name can also be applied to molecular generation models conditioned on proteins which we demonstrate by guiding MolSnapper \citep{ziv2025molsnapper}. MolSnapper requires specifying the protein pocket placement and choosing a few atoms from a reference ligand to act as a pharmacophore. We follow the setup provided by the original work: we use the same protein pocket and choose reference atoms to guide generation within the protein. We reuse the same parameters as in Section~\ref{sec:unconditional}. Further setup details are described in Appendix~\ref{app:protein}.

\textbf{Baselines.} \ \ 
For comparison, we include the same baselines as in Section~\ref{sec:unconditional}, albeit without the torus noise initialization because of its poor performance in the unconditional setting and because of the non-triviality of choosing the right placement and angle of the initial macrocycle-like noise. 

\textbf{Results.} \ \ 
We report performance in Table~\ref{tab:all_models_conditional}. Similarly to the unconditional setting, the macrocycle generation rate increases from 0.3\% to nearly 100\%, representing a 300-fold increase. Importantly, \name is the only efficient method, as all of the baselines exhibit a dramatic performance drop. In particular, the performance of the finetuned model drops to 18\%. This collapse can be attributed to the protein constraints pushing the finetuned model outside of its training distribution, leaving no explicit mechanism to constrain the trajectory back towards macrocyclic \mbox{topologies}. Consequently, we do not consider finetuning to be a robust solution for this task and do not assess it further. Visualizations of generated molecules are provided in Appendix~\ref{app:vis_cond}.

\subsubsection{Assessing Macrocycle Quality}

To further assess the quality of the macrocycles generated with \name we evaluate the following additional criteria: \mbox{\textbf{novelty}}, indicating whether a molecule is absent from the training dataset of the original model; \textbf{uniqueness}, measuring whether it differs from all the other generated samples; and \textbf{diversity}, assessed via fingerprint similarity across all pairs of generated molecules \citep{peng2023moldiff}. Finally, to assess the chemical quality of the generated macrocycles we run \textbf{PoseBusters metrics} \cite{buttenschoen2024posebusters} and report the fraction of molecules that pass all tests, as well as the performance for individual checks.

\textbf{Conditional metrics.} \ \ 
For conditional generation we also calculate PoseBusters metrics related to the protein. Additionally, \textbf{pharmacophore satisfaction} is evaluated by matching each reference atom to the closest atom in the generated molecule. A match is valid if the atom types agree and the distance is within 1~\AA \  \citep{ziv2025molsnapper}. A~pharmacophore is considered satisfied if at least 80\% of the reference atoms are matched. Finally, we assess whether the generated molecules are likely to be orally available. Although there are no clearly established \textbf{Lipinski rules for macrocycles}, we follow \citet{garcia2023macrocycles} and \citet{viarengo2021defining} to propose the following rules: Molecular Weight (MW) $<$~1000 Da, number of Hydrogen Bond Donors (HBD) $\leq~$7 and lipophilicity between 2.4 and 6, and we report the fraction of macrocycles meeting all of the criteria. 

\begin{table}[t]
\begin{center}
\begin{small}
\caption{\textbf{Unconditional macrocycle quality metrics.} Values were obtained by generating molecules until at least 1000 macrocycles were found to ensure reliable and comparable estimates. }

\begin{tabular}{lcc}
\toprule
Metrics ($\uparrow$; [0-1])  & \makecell[c]{MolDiff\\(no guid.)}& \makecell[c]{+\name\\(ours)}\\ \hline
        Diversity&0.707&\textbf{0.771}\\
        Novelty& 1.000&1.000\\
        Uniqueness& 1.000&1.000\\
        \rowcolor{gray!20}\textbf{All PoseBusters tests}& 0.663&\textbf{0.805}\\ 
            \hspace{1em}Bond lengths    & 0.977&\textbf{0.990}\\ 
            \hspace{1em}Bond angles    & 0.949&\textbf{0.987}\\ 
            \hspace{1em}Internal steric clash    & 0.722&\textbf{0.844}\\ 
            \hspace{1em}Aromatic ring flatness& 0.980&\textbf{0.999}\\ 
            \hspace{1em}Non-ar. ring non-flatness& 0.981&\textbf{0.999}\\ 
            \hspace{1em}Double bond flatness& 0.965&\textbf{0.989}\\ 
            \hspace{1em}Internal energy    & 0.961&\textbf{0.984}\\ 
\bottomrule
    
\end{tabular}
\label{tab:results_unconditional_macro}
\end{small}
\end{center}
\vskip -0.1in
\end{table}

\begin{table}[t]
\begin{center}
\begin{small}
\caption{\textbf{Protein-conditioned macrocycle quality metrics.}}
\setlength{\tabcolsep}{2pt}
\begin{tabular}{l@{}cc}
\toprule
    Metrics ($\uparrow$; [0-1])  & \makecell[c]{MolSnapper\\(no guid.)}& \makecell[c]{+\name\\(ours)}\\ \hline

        Diversity&0.626&\textbf{0.712}\\
        Novelty& 1.000&1.000\\
        Uniqueness& 1.000&1.000\\
        \rowcolor{gray!20}\textbf{All PoseBusters tests}&  0.440&\textbf{0.575}\\
            \hspace{1.em}Ligand PoseBusters& 0.539&\textbf{0.626}\\
                \hspace{2em}Bond lengths    & 0.844&\textbf{0.860}\\ 
                \hspace{2em}Bond angles    & \textbf{0.913}&0.888\\ 
                \hspace{2em}Internal steric clash    & \textbf{0.862}&0.854\\ 
                \hspace{2em}Aromatic ring flatness& \textbf{0.996} &0.992\\ 
                \hspace{2em}Non-ar. ring non-flatness & 0.993&\textbf{0.999}\\ 
                \hspace{2em}Double bond flatness& 0.980&\textbf{0.993}\\ 
                \hspace{2em}Internal energy    & 0.818&\textbf{0.921}\\ 
            \hspace{1.em}Protein PoseBusters&  0.806&\textbf{0.911}\\
                \hspace{2em}Protein-ligand max. distance &  1.000&1.000\\
                \hspace{2em}Min. distance to protein&  0.806&\textbf{0.907}\\
                \hspace{2em}Volume overlap with protein&  1.000&1.000\\
        Pharmacophore satisfaction &  0.769&\textbf{0.789} \\
        Macrocycle Lipinski &  0.551&\textbf{0.638} \\
\bottomrule

\end{tabular}
\label{tab:results_protein_macro}
\end{small}
\end{center}
\vskip -0.1in
\end{table}
\vspace{1cm}
\textbf{Results.} \ \ 
Table~\ref{tab:results_unconditional_macro} and Table~\ref{tab:results_protein_macro} summarize the quality metrics of the generated macrocycles, with standard deviations reported in Appendix~\ref{app:stds}. \name substantially outperforms the base model for most of the tested metrics, both in the unconditional and conditional settings.
Additionally, our protein-conditioned macrocycles exhibit an important improvement in the internal energy test, which checks pose deviation from the relaxed molecule conformation.
This suggests that \name addresses one of the prominent challenges in protein-conditioned molecular design: generated molecules are often not in their energetic minimum, resulting in low binding affinities and changes in pose after redocking \citep{harris2023benchmarking}. 
Finally, we observe an increase in the rate of pharmacophore satisfaction and compliance with Lipinski rules for macrocycles.

Furthermore, we examine atom type and ring size distributions of generated molecules, and show that \name preserves molecule complexity during macrocycle generation (Appendix~\ref{app:atoms} and \ref{app:rings}). 
We also show that \name macrocycles tend to have fewer small strained rings, whose presence often poses~a~synthetic~\mbox{challenge}.

\textbf{Discussion.} \ \ 
Notably, \name does not explicitly guide towards better 3D structure or molecular properties. The observed improvement in generative metrics possibly stems from the fact that \name fixes the global shape of the molecule at early stages of the denoising process, which allows the denoiser to spend more time refining the local structure of the molecule (see \Cref{fig:featurestracking}). In contrast, without guidance the macrocycles represent a deviation from the training distribution, and likely assume cyclical shapes only later during the generation process. 

\subsection{Additional \name Applications}
\subsubsection{Bicyclic Molecule Generation}
Macrocycles can achieve better selectivity than small molecules, but as the ring grows, so does the conformational space of macrocycles, potentially allowing for off-target interactions. One way to generate larger binders without reducing selectivity is to generate bicyclic compounds \cite{heinis2009phage, rowland2025rational}. To generate bicycles, we modify the topological guidance to optimize the two largest $H_1$ components, and we keep the same desired death time range. In Table~\ref{tab:bicycles} we show 97\% and 90\% bicyclic performance rates in the unconditional and conditional setups respectively. An example of a generated bicyclic molecule is shown in Figure~\ref{fig:generated_macrocycles}.
\begin{table}[ht]
\begin{center}
\begin{small}
\setlength{\tabcolsep}{5.8pt}

\caption{\textbf{Performance of bicyclic molecule generation in unconditional and conditional settings.} A molecule is considered bicyclic if it has at least two chordless cycles of at least 12 atoms each. Results obtained from 1000 molecules with 45 heavy atoms.}

\begin{tabular}{lcc}
\toprule
    Metrics ($\uparrow$; [0-1])  & \makecell[c]{MolDiff\\+\name}& \makecell[c]{MolSnapper\\+\name}\\ \hline
    Validity     & 0.998&0.821\\ 
    Connectivity  & 0.931&1.000\\
    Successfulness  & 0.929&0.821\\
\hline
    Out of successful:&   &\\
        \rowcolor[gray]{.9}\hspace{1em}\textbf{Bicyclic molecules}& 0.970&0.895\\
 \bottomrule

\end{tabular}
\label{tab:bicycles}
\end{small}
\end{center}
\vskip -0.1in
\end{table}

\subsubsection{$H_0$ Death Guidance Improves Large Molecule Generation}
While originally designed as a stability safeguard for our guidance method, ablation studies reveal that $H_0$ death optimization improves large molecule generation beyond the macrocycle task. Notably, existing methods struggle with the generation of large molecular structures \citep{le2023navigatingdesignspaceequivariant} and often produce several disconnected components. 

We show that it is possible to preserve molecule connectivity as the number of atoms increases, by applying our $H_0$ death optimization (\Cref{eq:h0} with the gradient masking detailed in Appendix~\ref{ap:h0}). The results in \Cref{fig:large_molecules} show a drastic improvement in connectivity ($80\%$ versus $20\%$ for $n=110$). $H_0$ optimization therefore appears as an attractive sampling-time guidance method when dealing with large molecules.

\begin{figure}[htb]
    \centering
    \includegraphics[width=0.9\linewidth]{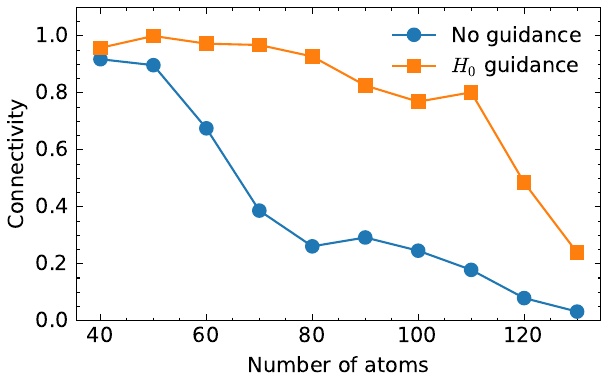}
    \caption{\textbf{Performance of MolDiff with increasing molecular size.} Adding $H_0$ guidance term improves performance for large molecule sizes. Results obtained from $200$ samples each.}
    \label{fig:large_molecules}
\end{figure}

\subsection{Further Analysis}

\subsubsection{Empirical Macrocycle Size Control}\label{sec:validationformula}
We empirically evaluate the results of \Cref{sec:macrosizecontrol} on macrocycle size control. \Cref{fig:size} confirms the validity of the provided control equation, although we observe a slight underestimation of the actual values by the theoretical predictions. One possible explanation is that elongated elliptic rings have a lower death time than circular rings with the same number of atoms (see \Cref{fig:ellipse} of Appendix~\ref{ap:h0} for a visualization).

\begin{figure}[htb]
    \centering
    \includegraphics[width=1\linewidth]{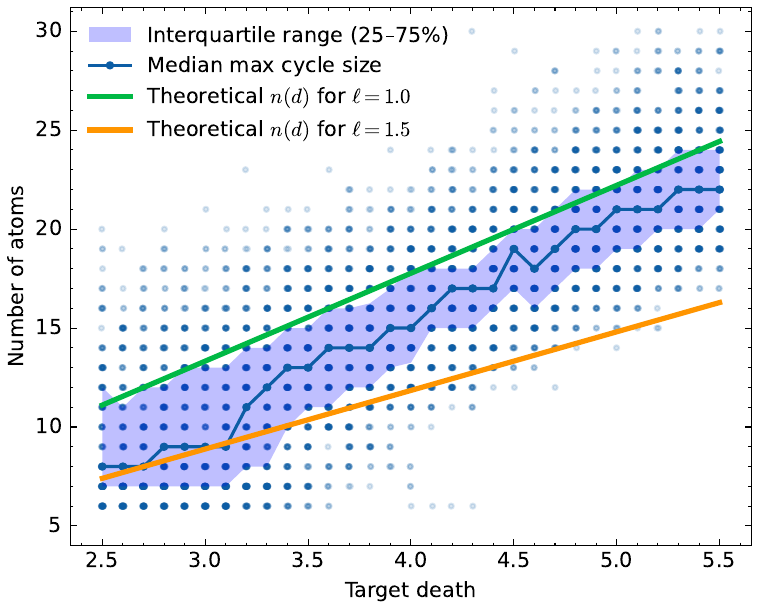}
    \caption{\textbf{Median max cycle size as a function of the target death.} The empirical results are compared to the theoretical formulas for $\ell=1.0$ and $\ell=1.5$, the minimum and maximum typical bond lengths, respectively. Results are computed for $200$ samples of 30 heavy atoms, with each target size $d^\star$ being constrained in the relaxed form of an interval $[d^\star-0.05, d^\star+0.05]$, sampled at 0.1 intervals.}
    \label{fig:size}
\end{figure}

\subsubsection{Runtime Comparison}
\label{sec:complexity}
Importantly, \name introduces only a very moderate computational overhead on top of the base denoising process.
\Cref{fig:complexity} provides a breakdown of the average time per denoising step for MolSnapper ($\approx 17 ms$) and \name ($\approx 21ms$).
The extra cost of our guidance method is dominated by the computation of the Vietoris-Rips complex, which grows quadratically with $n$. This is reasonable since the denoising process is quadratic too and \name represents only $20\%$ of the total cost. Nevertheless, we explored two different strategies to accelerate our guidance method. Both methods rely on selecting only a subset of the denoising steps to apply the guidance function.

\textbf{Guidance every $k$ steps.} \ \ 
Our first strategy consists of applying the topological guidance only once every $k$ denoising steps, instead of at every step. Results for $k=2,3,4,5$ are reported in Appendix~\ref{app:subsampling}. Importantly, we are able to divide the additional cost of \name by at least $5$ with less than $15\%$ degradation in macrocycle generation rates and molecular quality metrics relative to full guidance. Notably, it also improves the rate of pharmacophore satisfaction, compared both to MolSnapper and \name with $k=1$. This trade-off makes sparse guidance an attractive default choice in practice.

\textbf{Late-start guidance.} \ \ 
A second strategy we investigated consists of activating the topological guidance only during the later stages of the diffusion process. In practice, however, this approach consistently underperformed the previous strategy, which can be explained by the fact that noising processes tend to corrupt high-frequency information first \citep{falck2025fourierspaceperspectivediffusion}, which means the shape of a molecule is likely decided early during denoising. For completeness, we report the corresponding results in Appendix~\ref{app:starting_late}.

Consequently, applying guidance every $k$ steps appears to be a good practical choice when trying to sample a handful of molecules, or when using high values of $n$.

\begin{figure}
    \centering
    \includegraphics[width=1\linewidth]{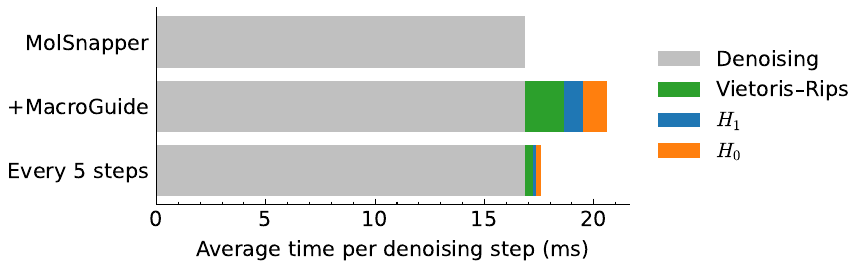}
    \caption{\textbf{Runtime comparison.} \name produces only a small computational overhead that can be further reduced by applying guidance only every $k$ steps.}
    \label{fig:complexity}
\end{figure}

\subsubsection{Stability Analysis}
The considerable sparsity of gradients may cast doubt on the stability of the method. That is why we provide insights about the guidance process by tracking optimized topological features and their gradients throughout denoising. Full results and discussion can be found in Appendix~\ref{app:tracking}. In particular \Cref{fig:featurestracking} demonstrates the stability of \name, after as few as $100$ denoising steps. More importantly, it provides an explanation to the success of our method: guidance shapes macrocyclic structure early, while late-stage denoising (last $200$ steps) performs fine-grained chemical validity optimization without interference.

\begin{figure}[t]
    \centering
    \includegraphics[width=1\linewidth]{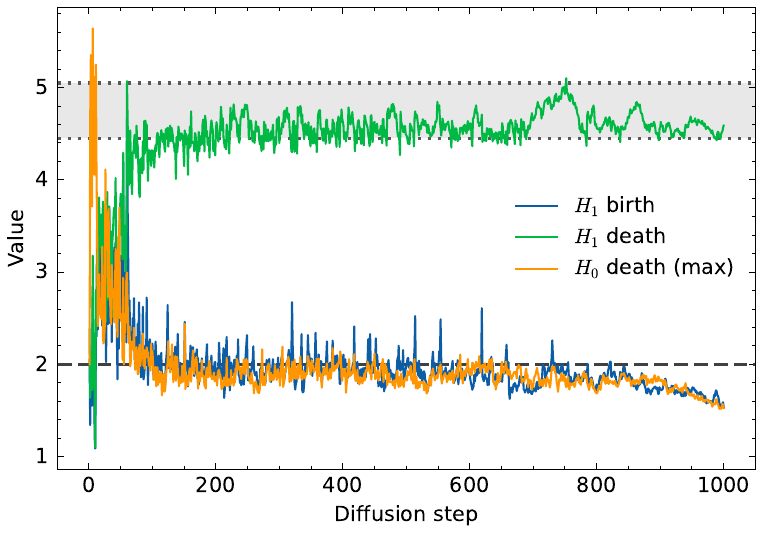}
    \caption{\textbf{Key topological features throughout denoising}. Maximum $H_0$ death (i.e. the longest distance to the closest neighbor over all atoms), and birth and death of the maximum-death $H_1$ component (largest cycle).}
    \label{fig:featurestracking}
\end{figure}

\subsection{Ablations} 

We further analyze the contribution of the different components of our guidance mechanism through a series of ablations. In Appendix~\ref{app:ablations}, we remove individual terms of the guidance function and show that all components are necessary in the unconditional setting. In the conditional setting, however, the $H_0$ death term (molecule connectivity) can be omitted, as the confined volume of the protein pocket naturally prevents atom disconnection. In Appendix~\ref{app:ablations_square}, we investigate how adding or removing the square of the topological terms influences performance. Finally, we study the sensitivity to guidance strength motivating the chosen
value (Appendix~\ref{app:strength}).

\section{Conclusion}

In this work, we presented \name, the first diffusion-based framework designed to generate arbitrary macrocycles without the need for model retraining or specialized datasets. By leveraging persistent homology to guide pretrained models, our method successfully bridges the gap between local chemical validity and global topological constraints. Specifically, our method drives the denoising path to a stable region of the set of cyclic structures, where validity optimization can be performed during later steps without interference from the guidance terms.

Experimentally, \name achieves a 99\% macrocycle generation rate in both unconditional and protein-conditioned settings. Furthermore, the resulting molecules demonstrate superior 3D structural quality, exceeding or matching state-of-the-art performance on chemical validity, diversity, and PoseBusters benchmarks.

By providing a lightweight and training-free mechanism, \name offers a robust solution for exploring the macrocyclic chemical space. We provide a detailed discussion of promising future directions in Appendix~\ref{sec:discussion}. We anticipate that this framework will integrate naturally with the expanding class of 3D point-cloud-based models, allowing topological guidance to contribute to continued improvements in generative drug discovery.

\section*{Acknowledgements}
The authors would like to thank Aleksy Kwiatkowski for insightful discussions.
AM is partially funded by AITHYRA and the UKRI Engineering and Physical Sciences Research Council (EPSRC) with grant code EP/S024093/1. This research is partially supported by the EPSRC Turing AI World-Leading Research Fellowship No. EP/X040062/1 and EPSRC AI Hub No. EP/Y028872/1.

\section*{Impact Statement}

This work aims to improve controllable molecular generative modeling for drug discovery. By enabling more structured exploration of chemical space, such methods have the potential to accelerate early-stage drug design, reduce the cost of candidate discovery, and support the development of novel therapeutics, including for diseases with limited or no existing treatments and for traditionally challenging or undruggable targets. Improved generative efficiency may also help lower barriers to early-stage molecular design, enabling broader access to computational drug discovery tools across research communities.

At the same time, as with any general-purpose molecular generation approach, there are potential risks. Techniques that facilitate the design of therapeutic molecules could also be misused to generate harmful or toxic compounds. This work focuses on methodological development rather than deployment, and practical applications require appropriate domain expertise, experimental validation, and regulatory oversight. Generative models should therefore be viewed as decision-support tools rather than autonomous systems for drug design.

Overall, we believe the potential benefits of these methods for drug discovery outweigh the associated risks when applied responsibly within established scientific and ethical frameworks.

\bibliography{example_paper}

@article{Du25,
  title={{FDA}-approved drugs featuring macrocycles or medium-sized rings},
  author={Du, Youlong and Semghouli, Anas and Wang, Qian and Mei, Haibo and Kiss, Lor{\'a}nd and Baecker, Daniel and Soloshonok, Vadim A and Han, Jianlin},
  journal={Archiv der Pharmazie},
  volume={358},
  number={1},
  pages={e2400890},
  year={2025},
  publisher={Wiley Online Library}
}

@inproceedings{
parktopology,
title={Topology-aware Graph Diffusion Model with Persistent Homology},
author={Joonhyuk Park and Donghyun Lee and Yujee Song and Guorong Wu and Won Hwa Kim},
booktitle={The Thirty-ninth Annual Conference on Neural Information Processing Systems},
year={2025},
}

@inproceedings{gupta2024topodiffusionnet,
 author = {Gupta, Saumya and Samaras, Dimitris and Chen, Chao},
 booktitle = {International Conference on Learning Representations},
 pages = {31699--31713},
 title = {Topo{D}iffusion{N}et: A Topology-aware Diffusion Model},
 volume = {2025},
 year = {2025}
}

@article{garcia2023macrocycles,
  title={Macrocycles in drug discovery - learning from the past for the future},
  author={Garcia Jimenez, Diego and Poongavanam, Vasanthanathan and Kihlberg, Jan},
  journal={Journal of Medicinal Chemistry},
  volume={66},
  number={8},
  pages={5377--5396},
  year={2023},
  publisher={ACS Publications}
}

@article{mallinson2012macrocycles,
  title={Macrocycles in new drug discovery},
  author={Mallinson, Jamie and Collins, Ian},
  journal={Future Medicinal Chemistry},
  volume={4},
  number={11},
  pages={1409--1438},
  year={2012},
  publisher={Taylor \& Francis}
}

@article{rettie2025accurate,
  title={Accurate de novo design of high-affinity protein-binding macrocycles using deep learning},
  author={Rettie, Stephen A and Juergens, David and Adebomi, Victor and Bueso, Yensi Flores and Zhao, Qinqin and Leveille, Alexandria N and Liu, Andi and Bera, Asim K and Wilms, Joana A and {\"U}ffing, Alina and others},
  journal={Nature Chemical Biology},
  pages={1--9},
  year={2025},
  publisher={Nature Publishing Group US New York}
}

@article{watson2023novo,
  title={De novo design of protein structure and function with {RF}diffusion},
  author={Watson, Joseph L and Juergens, David and Bennett, Nathaniel R and Trippe, Brian L and Yim, Jason and Eisenach, Helen E and Ahern, Woody and Borst, Andrew J and Ragotte, Robert J and Milles, Lukas F and others},
  journal={Nature},
  volume={620},
  number={7976},
  pages={1089--1100},
  year={2023},
  publisher={Nature Publishing Group UK London}
}

@article{grambow2023accurate,
  title={Accurate and Efficient Structural Ensemble Generation of Macrocyclic Peptides using Internal Coordinate Diffusion},
  author={Grambow, Colin A and Weir, Hayley and Diamant, Nathaniel L and Scalia, Gabriele and Biancalani, Tommaso and Chuang, Kangway V},
  journal={arXiv preprint arXiv:2305.19800},
  year={2023}
}

@inproceedings{schiff2022augmenting,
  title={Augmenting molecular deep generative models with topological data analysis representations},
  author={Schiff, Yair and Chenthamarakshan, Vijil and Hoffman, Samuel C and Ramamurthy, Karthikeyan Natesan and Das, Payel},
  booktitle={ICASSP 2022-2022 IEEE International Conference on Acoustics, Speech and Signal Processing (ICASSP)},
  pages={3783--3787},
  year={2022},
  organization={IEEE}
}

@inproceedings{vignac2022digress,
  title={Di{G}ress: Discrete denoising diffusion for graph generation},
  author={Clement Vignac and Igor Krawczuk and Antoine Siraudin and Bohan Wang and Volkan Cevher and Pascal Frossard},
  booktitle={The Eleventh International Conference on Learning Representations },
    year={2023}
}

@article{rettie2025cyclic,
  title={Cyclic peptide structure prediction and design using {A}lpha{F}old2},
  author={Rettie, Stephen A and Campbell, Katelyn V and Bera, Asim K and Kang, Alex and Kozlov, Simon and Bueso, Yensi Flores and De La Cruz, Joshmyn and Ahlrichs, Maggie and Cheng, Suna and Gerben, Stacey R and others},
  journal={Nature Communications},
  volume={16},
  number={1},
  pages={4730},
  year={2025},
  publisher={Nature Publishing Group UK London}
}

@article{jumper2021highly,
  title={Highly accurate protein structure prediction with {A}lpha{F}old},
  author={Jumper, John and Evans, Richard and Pritzel, Alexander and Green, Tim and Figurnov, Michael and Ronneberger, Olaf and Tunyasuvunakool, Kathryn and Bates, Russ and {\v{Z}}{\'\i}dek, Augustin and Potapenko, Anna and others},
  journal={nature},
  volume={596},
  number={7873},
  pages={583--589},
  year={2021},
  publisher={Nature Publishing Group UK London}
}

@InProceedings{hoogeboom2022equivariant,
  title = 	 {Equivariant Diffusion for Molecule Generation in 3{D}},
  author =       {Hoogeboom, Emiel and Satorras, V\'{\i}ctor Garcia and Vignac, Cl{\'e}ment and Welling, Max},
  booktitle = 	 {Proceedings of the 39th International Conference on Machine Learning},
  pages = 	 {8867--8887},
  year = 	 {2022},
  volume = 	 {162},
  publisher =    {PMLR},
}

@article{edelsbrunner2002topological,
  title={Topological persistence and simplification},
  
  author={Edelsbrunner, Herbert and Letscher, David and Zomorodian, Afra},
  journal={Discrete \& computational geometry},
  volume={28},
  pages={511--533},
  year={2002},
  publisher={Springer}
}

@article{huang2025hog,
  title={H{OG}-{D}iff: Higher-Order Guided Diffusion for Graph Generation},
  author={Huang, Yiming and Birdal, Tolga},
  journal={arXiv preprint arXiv:2502.04308},
  year={2025}
}

@inproceedings{
song2020score,
title={Score-Based Generative Modeling through Stochastic Differential Equations},
author={Yang Song and Jascha Sohl-Dickstein and Diederik P Kingma and Abhishek Kumar and Stefano Ermon and Ben Poole},
booktitle={International Conference on Learning Representations},
year={2021},

}

@InProceedings{peng2023moldiff,
  title = 	 {{M}ol{D}iff: Addressing the Atom-Bond Inconsistency Problem in 3{D} Molecule Diffusion Generation},
  author =       {Peng, Xingang and Guan, Jiaqi and Liu, Qiang and Ma, Jianzhu},
  booktitle = 	 {Proceedings of the 40th International Conference on Machine Learning},
  pages = 	 {27611--27629},
  year = 	 {2023},
  volume = 	 {202},
  publisher =    {PMLR},

}

@article{landrum2013rdkit,
  title={{RDK}it documentation},
  author={Landrum, Greg},
  journal={Release},
  volume={1},
  number={1-79},
  pages={4},
  year={2013}
}

@InProceedings{gabrielsson2020topology,
  title = 	 {A Topology Layer for Machine Learning},
  author =       {Gabrielsson, Rickard Br\"uel and Nelson, Bradley J. and Dwaraknath, Anjan and Skraba, Primoz},
  booktitle = 	 {Proceedings of the Twenty Third International Conference on Artificial Intelligence and Statistics},
  pages = 	 {1553--1563},
  year = 	 {2020},
  volume = 	 {108},
  publisher =    {PMLR},

}

@article{hu2024topology,
  title={Topology-aware latent diffusion for 3{D} shape generation},
  author={Hu, Jiangbei and Fei, Ben and Xu, Baixin and Hou, Fei and Yang, Weidong and Wang, Shengfa and Lei, Na and Qian, Chen and He, Ying},
  journal={arXiv preprint arXiv:2401.17603},
  year={2024}
}

@article{townsend2020representation,
  title={Representation of molecular structures with persistent homology for machine learning applications in chemistry},
  author={Townsend, Jacob and Micucci, Cassie Putman and Hymel, John H and Maroulas, Vasileios and Vogiatzis, Konstantinos D},
  journal={Nature communications},
  volume={11},
  number={1},
  pages={3230},
  year={2020},
  publisher={Nature Publishing Group UK London}
}

@article{giordanetto2014macrocyclic,
  title={Macrocyclic drugs and clinical candidates: what can medicinal chemists learn from their properties?},
  author={Giordanetto, Fabrizio and Kihlberg, Jan},
  journal={Journal of medicinal chemistry},
  volume={57},
  number={2},
  pages={278--295},
  year={2014},
  publisher={ACS Publications}
}

@article{diao2023macrocyclization,
  title={Macrocyclization of linear molecules by deep learning to facilitate macrocyclic drug candidates discovery},
  author={Diao, Yanyan and Liu, Dandan and Ge, Huan and Zhang, Rongrong and Jiang, Kexin and Bao, Runhui and Zhu, Xiaoqian and Bi, Hongjie and Liao, Wenjie and Chen, Ziqi and others},
  journal={Nature Communications},
  volume={14},
  number={1},
  pages={4552},
  year={2023},
  publisher={Nature Publishing Group UK London}
}

@article{liang2025designing,
  title={Designing Macrocyclic Kinase Inhibitors Using Macrocycle Scaffold Hopping with Reinforced Learning ({M}acro-{H}op)},
  author={Liang, Hong and Huang, Shengjie and Xu, Xinxin and Yin, Zhao and Hussain, Muzammal and Song, Xiaojuan and Yi, Jianqiao and He, Yingqi and Guo, Jing and Tu, Zhengchao and others},
  journal={Journal of Medicinal Chemistry},
  volume={68},
  number={6},
  pages={6698--6717},
  year={2025},
  publisher={ACS Publications}
}

@article{wang2025diffusionmodelsmoleculessurvey,
  publtype={informal},
  author={Liang Wang and Chao Song and Zhiyuan Liu and Yu Rong and Qiang Liu and Shu Wu and Liang Wang},
  title={Diffusion Models for Molecules: A Survey of Methods and Tasks},
  year={2025},
  cdate={1738368000000},
  journal={CoRR},
  volume={abs/2502.09511}
}

@inproceedings{carriere2020perslay,
  title={Pers{L}ay: A neural network layer for persistence diagrams and new graph topological signatures},
  author={Carri{\`e}re, Mathieu and Chazal, Fr{\'e}d{\'e}ric and Ike, Yuichi and Lacombe, Th{\'e}o and Royer, Martin and Umeda, Yuhei},
  booktitle={International Conference on Artificial Intelligence and Statistics},
  pages={2786--2796},
  year={2020},
  organization={PMLR}
}

@inproceedings{charlier2019phom,
  title={P{H}om-{G}e{M}: Persistent homology for generative models},
  author={Charlier, Jeremy and State, Radu and Hilger, Jean},
  booktitle={2019 6th Swiss Conference on Data Science (SDS)},
  pages={87--92},
  year={2019},
  organization={IEEE}
}

@inproceedings{
demir2023multiparameter,
title={Multiparameter Persistent Homology for Molecular Property Prediction},
author={Andac Demir and Bulent Kiziltan},
booktitle={ICLR 2023 - Machine Learning for Drug Discovery workshop},
year={2023},
}

@article{wang2025topologyguidancecontrollingoutput,
  title={Topology Guidance: Controlling the Outputs of Generative Models via Vector Field Topology},
  author={Wang, Xiaohan and Berger, Matthew},
  journal={arXiv preprint arXiv:2505.06804},
  year={2025}
}

@inproceedings{sohl2015deep,
  title={Deep Unsupervised Learning using Nonequilibrium Thermodynamics},
  author={Sohl-Dickstein, Jascha and Weiss, Eric and Maheswaranathan, Niru and Ganguli, Surya},
  booktitle={International Conference on Machine Learning},
  pages={2256--2265},
  year={2015},
  organization={PMLR}
}

@article{ho2020denoising,
  title={Denoising Diffusion Probabilistic Models},
  author={Ho, Jonathan and Jain, Ajay and Abbeel, Pieter},
  journal={Advances in Neural Information Processing Systems},
  volume={33},
  pages={6840--6851},
  year={2020}
}

@article{song2019generative,
  title={Generative Modeling by Estimating Gradients of the Data Distribution},
  author={Song, Yang and Ermon, Stefano},
  journal={Advances in Neural Information Processing Systems},
  volume={32},
  year={2019}
}

@inproceedings{
vignac2023midi,
title={Mi{D}i: Mixed Graph and 3{D} Denoising Diffusion for Molecule Generation},
author={Clement Vignac and Nagham Osman and Laura Toni and Pascal Frossard},
booktitle={ICLR 2023 - Machine Learning for Drug Discovery workshop},
year={2023}
}

@book{edelsbrunner2010computational,
  title={Computational topology: an introduction},
  author={Edelsbrunner, Herbert and Harer, John},
  year={2010},
  publisher={American Mathematical Soc.}
}

@misc{rieck2022torch_topological,
  title        = {torch\_topological: A Topological Machine Learning Framework for PyTorch},
  author       = {Bastian Rieck},
  note         = {Commit from December 1, 2022},
  year         = {2022}
}

@inproceedings{
jiang2025zeroshot,
title={Zero-Shot Cyclic Peptide Design via Composable Geometric Constraints},
author={Dapeng Jiang and Xiangzhe Kong and Jiaqi Han and Mingyu Li and Rui Jiao and Wenbing Huang and Stefano Ermon and Jianzhu Ma and Yang Liu},
booktitle={Forty-second International Conference on Machine Learning},
year={2025}
}

@inproceedings{
zhou2025designing,
title={Designing Cyclic Peptides via Harmonic {SDE} with Atom-Bond Modeling},
author={Xiangxin Zhou and Mingyu Li and Yi Xiao and Jiahan Li and Dongyu Xue and Zaixiang Zheng and Jianzhu Ma and Quanquan Gu},
booktitle={Forty-second International Conference on Machine Learning},
year={2025}
}

@article{yudin2015macrocycles,
  title={Macrocycles: lessons from the distant past, recent developments, and future directions},
  author={Yudin, Andrei K},
  journal={Chemical Science},
  volume={6},
  number={1},
  pages={30--49},
  year={2015},
  publisher={Royal Society of Chemistry}
}

@article{naylor2017cyclic,
  title={Cyclic peptide natural products chart the frontier of oral bioavailability in the pursuit of undruggable targets},
  author={Naylor, Matthew R and Bockus, Andrew T and Blanco, Maria-Jesus and Lokey, R Scott},
  journal={Current Opinion in Chemical Biology},
  volume={38},
  pages={141--147},
  year={2017},
  publisher={Elsevier}
}

@article{ermert2017design,
  title={Design, properties and recent application of macrocycles in medicinal chemistry},
  author={Ermert, Philipp},
  journal={Chimia},
  volume={71},
  number={10},
  pages={678--678},
  year={2017}
}

@inproceedings{poulenard2018topological,
  title={Topological function optimization for continuous shape matching},
  author={Poulenard, Adrien and Skraba, Primoz and Ovsjanikov, Maks},
  booktitle={Computer Graphics Forum},
  volume={37},
  pages={13--25},
  year={2018},
  organization={Wiley Online Library}
}

@inproceedings{carrière2021optimizingpersistenthomologybased,
  title={Optimizing persistent homology based functions},
  author={Carri{\`e}re, Mathieu and Chazal, Fr{\'e}d{\'e}ric and Glisse, Marc and Ike, Yuichi and Kannan, Hariprasad and Umeda, Yuhei},
  booktitle={International conference on machine learning},
  pages={1294--1303},
  year={2021},
  organization={PMLR}
}

@article{axelrod2022geom,
  title={{GEOM}, energy-annotated molecular conformations for property prediction and molecular generation},
  author={Axelrod, Simon and Gomez-Bombarelli, Rafael},
  journal={Scientific Data},
  volume={9},
  number={1},
  pages={185},
  year={2022},
  publisher={Nature Publishing Group UK London}
}

@article{ziv2025molsnapper,
  title={Mol{S}napper: conditioning diffusion for structure-based drug design},
  author={Ziv, Yael and Imrie, Fergus and Marsden, Brian and Deane, Charlotte M},
  journal={Journal of Chemical Information and Modeling},
  volume={65},
  number={9},
  pages={4263--4273},
  year={2025},
  publisher={ACS Publications}
}

@inproceedings{
gao2025regrectifiedgradientguidance,
title={{REG}: Rectified Gradient Guidance for Conditional Diffusion Models},
author={Zhengqi Gao and Kaiwen Zha and Tianyuan Zhang and Zihui Xue and Duane S Boning},
booktitle={Forty-second International Conference on Machine Learning},
year={2025}
}

@inproceedings{
guo2024gradientguidancediffusionmodels,
title={Gradient Guidance for Diffusion Models: An Optimization Perspective},
author={Yingqing Guo and Hui Yuan and Yukang Yang and Minshuo Chen and Mengdi Wang},
booktitle={The Thirty-eighth Annual Conference on Neural Information Processing Systems},
year={2024}
}

@article{Adamaszek_2017,
  title={The Vietoris--Rips complexes of a circle},
  author={Adamaszek, Micha{\l} and Adams, Henry},
  journal={Pacific Journal of Mathematics},
  volume={290},
  number={1},
  pages={1--40},
  year={2017},
  publisher={Mathematical Sciences Publishers}
}

@inproceedings{
harris2023benchmarking,
title={PoseCheck: Generative Models for 3D Structure-based Drug Design Produce Unrealistic Poses},
author={Charles Harris and Kieran Didi and Arian Jamasb and Chaitanya Joshi and Simon Mathis and Pietro Lio and Tom Blundell},
booktitle={NeurIPS 2023 Generative AI and Biology (GenBio) Workshop},
year={2023}
}

@article{doi:10.1056/NEJMoa2027187,
  title={First-line lorlatinib or crizotinib in advanced {ALK}-positive lung cancer},
  author={Shaw, Alice T and Bauer, Todd M and de Marinis, Filippo and Felip, Enriqueta and Goto, Yasushi and Liu, Geoffrey and Mazieres, Julien and Kim, Dong-Wan and Mok, Tony and Polli, Anna and others},
  journal={New England Journal of Medicine},
  volume={383},
  number={21},
  pages={2018--2029},
  year={2020},
  publisher={Mass Medical Soc}
}

@article{viarengo2021defining,
  title={Defining and navigating macrocycle chemical space},
  author={Viarengo-Baker, Lauren A and Brown, Lauren E and Rzepiela, Anna A and Whitty, Adrian},
  journal={Chemical science},
  volume={12},
  number={12},
  pages={4309--4328},
  year={2021},
  publisher={Royal Society of Chemistry}
}

@article{buttenschoen2024posebusters,
  title={PoseBusters: AI-based docking methods fail to generate physically valid poses or generalise to novel sequences},
  author={Buttenschoen, Martin and Morris, Garrett M and Deane, Charlotte M},
  journal={Chemical Science},
  volume={15},
  number={9},
  pages={3130--3139},
  year={2024},
  publisher={Royal Society of Chemistry}
}

@article{falck2025fourierspaceperspectivediffusion,
  title={A Fourier Space Perspective on Diffusion Models},
  author={Falck, Fabian and Pandeva, Teodora and Zahirnia, Kiarash and Lawrence, Rachel and Turner, Richard and Meeds, Edward and Zazo, Javier and Karmalkar, Sushrut},
  journal={arXiv preprint arXiv:2505.11278},
  year={2025}
}

@inproceedings{
le2023navigatingdesignspaceequivariant,
title={Navigating the Design Space of Equivariant Diffusion-Based Generative Models for De Novo 3D Molecule Generation},
author={Tuan Le and Julian Cremer and Frank Noe and Djork-Arn{\'e} Clevert and Kristof T Sch{\"u}tt},
booktitle={The Twelfth International Conference on Learning Representations},
year={2024}
}

@article{rowland2025rational,
  title={Rational design of cyclic peptides, with an emphasis on bicyclic peptides},
  author={Rowland, Catherine E and Bezerra, Gustavo Arruda and Skynner, Michael J},
  journal={Current Opinion in Structural Biology},
  volume={92},
  pages={103025},
  year={2025},
  publisher={Elsevier}
}

@article{heinis2009phage,
  title={Phage-encoded combinatorial chemical libraries based on bicyclic peptides},
  author={Heinis, Christian and Rutherford, Trevor and Freund, Stephan and Winter, Greg},
  journal={Nature Chemical Biology},
  volume={5},
  number={7},
  pages={502--507},
  year={2009},
  publisher={Nature Publishing Group US New York}
}

@article{beattie2003cyclin,
  title={Cyclin-dependent kinase 4 inhibitors as a treatment for cancer. Part 1: identification and optimisation of substituted 4, 6-bis anilino pyrimidines},
  author={Beattie, John F and Breault, Gloria A and Ellston, Rebecca PA and Green, Stephen and Jewsbury, Philip J and Midgley, Catherine J and Naven, Russell T and Minshull, Claire A and Pauptit, Richard A and Tucker, Julie A and others},
  journal={Bioorganic \& medicinal chemistry letters},
  volume={13},
  number={18},
  pages={2955--2960},
  year={2003},
  publisher={Elsevier}
}

@article{jiang2026macrocycle,
  title={Macrocycle-{DB}: a comprehensive database for macrocycle-based drug discovery},
  author={Jiang, Minchuan and Liu, Tianyue and Hussain, Muzammal and Luo, Yixin and Zheng, Rui and Hou, Tingjun and Lu, Xiaoyun and Zhou, Yang},
  journal={Nucleic Acids Research},
  volume={54},
  number={D1},
  pages={D1469--D1476},
  year={2026},
  publisher={Oxford University Press}
}

@article{schaufelberger2026generating,
  title={Generating Cyclic Conformers with Flow Matching in Cremer-Pople Coordinates},
  author={Schaufelberger, Luca and Hartgers, Aline and Jorner, Kjell},
  journal={arXiv preprint arXiv:2601.12859},
  year={2026}
}

@article{lipinski1997experimental,
  title={Experimental and computational approaches to estimate solubility and permeability in drug discovery and development settings},
  author={Lipinski, Christopher A and Lombardo, Franco and Dominy, Beryl W and Feeney, Paul J},
  journal={Advanced Drug Delivery Reviews},
  volume={23},
  number={1-3},
  pages={3--25},
  year={1997},
  publisher={Elsevier}
}

@inproceedings{
schneuing2025multidomain,
title={Multi-domain Distribution Learning for De Novo Drug Design},
author={Arne Schneuing and Ilia Igashov and Adrian W. Dobbelstein and Thomas Castiglione and Michael M. Bronstein and Bruno Correia},
booktitle={The Thirteenth International Conference on Learning Representations},
year={2025}
}

@article{schneuing2024structure,
  title={Structure-based drug design with equivariant diffusion models},
  author={Schneuing, Arne and Harris, Charles and Du, Yuanqi and Didi, Kieran and Jamasb, Arian and Igashov, Ilia and Du, Weitao and Gomes, Carla and Blundell, Tom L and Lio, Pietro and others},
  journal={Nature Computational Science},
  volume={4},
  number={12},
  pages={899--909},
  year={2024},
  publisher={Nature Publishing Group US New York}
}
\bibliographystyle{icml2026}

\newpage
\appendix
\onecolumn

\section{Additional Related Literature}
\subsection{Cyclic Peptide Generation}
\label{ap:cyclic_peptides}

Most existing approaches focus on a restricted subclass of macrocycles, called cyclic peptides, where all building blocks are amino acids linked by peptide bonds. AfCycDesign adapts the AlphaFold framework \citep{jumper2021highly} by encoding cyclicity through a modified positional encoding, enabling the generation and structural prediction of cyclic peptides \citep{rettie2025cyclic}. Similarly, RFpeptides builds on RFdiffusion \citep{watson2023novo} and introduces a cyclic relative positional encoding to generate macrocyclic peptide backbones \citep{rettie2025accurate}. Other works instead incorporate explicit geometric constraints or harmonic SDE formulations \cite{jiang2025zeroshot, zhou2025designing}. However, the design space of cyclic peptides is substantially smaller and more structured than that of general macrocycles, making the generation task comparatively easier. As a result, these methods do not readily extend to the broader and more diverse class of non-peptidic macrocycles.

Additionally, a closely related problem is the prediction of 3D macrocyclic conformations from a given 2D molecular graph. This task was addressed by RINGER, which introduced a diffusion-based transformer model to generate ensembles of 3D conformations for macrocyclic peptides \citep{grambow2023accurate}. More recently, PuckerFlow proposed a flow-matching approach operating in Cremer-Pople space to generate conformers of small cyclic molecules \cite{schaufelberger2026generating}.

\subsection{Applications of Persistent Homology in Machine Learning}
\label{ap:other_tda_papers}

\paragraph{General use of persistent homology in ML.}
Persistent homology (PH) has established itself as a critical framework in machine learning for characterizing data topology across multiple scales. Early integration strategies focused on creating differentiable, topology-aware architectures and losses. Notably, Topology Layer \citep{gabrielsson2020topology} and PersLay \citep{carriere2020perslay} allow deep neural networks to directly process persistence diagram features. PHom‑GeM \citep{charlier2019phom} instead introduced a topological loss function based on the bottleneck distance, ensuring that generated distributions faithfully reproduce the topology of the training set.

\paragraph{Applications to diffusion models.}
More recently, topological insights have been effectively adapted for diffusion-based generative frameworks across image, 3D, and graph domains. In the context of 3D and visual generation, Topology-Aware Latent Diffusion models \citep{hu2024topology} leverage PH to guide the formation of 3D shapes with specific topological characteristics. Similarly, \citet{wang2025topologyguidancecontrollingoutput} demonstrated that diffusion models can be steered using coordinate-based networks to satisfy rigorous topological constraints, while TopoDiffusionNet \citep{gupta2024topodiffusionnet} introduced a differentiable mechanism to regulate the number of objects (0-dim) and holes (1-dim) in generated image masks. In the graph domain, PH is increasingly used to capture complex connectivity; \citet{parktopology} developed a loss function based on persistence diagram matching for graph diffusion, and HOG-Diff \citep{huang2025hog} introduced a framework for generating higher-order structures via cell complex filtering and spectral diffusion.

\paragraph{Molecular applications.} In the realm of chemistry, persistent homology has proven effective for encoding molecular structure into ML-friendly representations. \citet{townsend2020representation} introduced persistence images, 2D embeddings of PH-derived features, as molecular fingerprints to screen vast chemical libraries like GDB‑9. Later, \citet{demir2023multiparameter} extended this with multiparameter PH fingerprints, incorporating atomic descriptors such as partial charges and bond types to enhance property prediction tasks.
Finally, TDA has been used to augment a VAE with topological molecule representations which resulted in an improvement over a range of molecular generation metrics \citep{schiff2022augmenting}.
\newpage
\section{Discussion and Future Work}\label{sec:discussion}

\paragraph{Leveraging base model advancements.} Since \name enhances generation metrics, it is well-positioned to benefit from rapid advancements in base model architectures. A promising avenue for future work is the integration of \name with recent, high-capacity models, potentially unlocking even greater performance.

\textbf{Bond-level information.} \ \ 
Our method relies on atomic positions to enforce macrocyclicity, without explicitly constraining the neighboring atoms in the cycle to form bonds. This design naturally aligns with point-cloud-based models such as EDM, which do not represent bonds explicitly \citep{hoogeboom2022equivariant}. For architectures that additionally model bond information (e.g., MolDiff), one could in principle incorporate bond-level constraints into the guidance. Our preliminary investigations (Appendix~\ref{app:bond}) suggest that the added complexity of this approach outweighs the potential benefits, though it remains an open option for future work.

\textbf{Explicit hydrogens.} \ \ 
We apply \name to models that do not explicitly model hydrogens during training, a common setting in recent works, as the hydrogens can be easily inferred during post-processing to avoid unnecessary time and memory overhead. However, if one chooses to work with a model that represents hydrogens explicitly, their monovalence can hinder cycle closure. In this case, a simple workaround is to exclude hydrogens from the Vietoris-Rips complex.

\textbf{Synthetic accessibility.} \ \ 
Macrocycles offer attractive pharmacological properties, but their synthesis typically involves an additional entropic cost associated with ring closure. As macrocycle-generating models move closer to practical deployment, incorporating synthetic considerations will become increasingly important. A natural and promising next step is to integrate synthesis-aware constraints and retrosynthetic planning into the generative process, enabling closer alignment between generated candidates and experimentally accessible chemistries.

\newpage
\section{Proof of Theorem \ref{thm:rips_death_tetra}}
\label{app:proof}

\begin{lemma}[Vietoris-Rips death time of a regular n-gon] \label{lemma:ngon} Consider a regular n-gon of side length $\ell$. The death time d of the dominant $H_1$ component in the Vietoris-Rips filtration is given by: \begin{equation*} d = \ell \frac{\sin\left( \lceil \frac{n}{3} \rceil \frac{\pi}{n} \right)}{\sin\left( \frac{\pi}{n} \right)}  \end{equation*}
\end{lemma}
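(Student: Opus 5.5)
We plan to deduce the lemma from the known structure of Vietoris-Rips complexes of points on a circle \citep{Adamaszek_2017}. Put the $n$ vertices $x_0,\dots,x_{n-1}$ of the regular $n$-gon on a circle of radius $R=\ell/(2\sin(\pi/n))$. The distance between two vertices that are $k$ steps apart is then
\[
\delta_k = 2R\sin(k\pi/n) = \ell\,\frac{\sin(k\pi/n)}{\sin(\pi/n)}.
\]
This quantity increases strictly in $k$ for $0\le k\le \lfloor n/2\rfloor$. So the filtration only changes at the values $\delta_1<\delta_2<\dots$. For $\delta_k\le\varepsilon<\delta_{k+1}$, the complex $\mathrm{VR}_\varepsilon$ is the clique complex of the cyclic graph $C_n^k$, in which each vertex is joined to its $k$ nearest neighbours on each side.

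The proof therefore reduces to the combinatorial question of when the clique complex of $C_n^k$ stops having the fundamental loop.

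\textbf{Birth.} At $\varepsilon=\delta_1=\ell$ the complex is the $n$-cycle, so a single $H_1$ class $\gamma$ is born there. This is the dominant class. Any other $1$-cycle appearing later is generated by short chords and bounds triangles, so it dies immediately.

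\textbf{Survival for $3k<n$.} We use the Adamaszek–Adams result that $\mathrm{VR}(C_n^k)$ is homotopy equivalent to $S^1$ whenever $k/n<1/3$, with the inclusion $C_n^{k-1}\subset C_n^k$ inducing isomorphisms on $H_1$. If we prefer a self-contained argument, we can use a winding-number map instead. Send each simplex to the arc of the circle it spans. Every simplex of $C_n^k$ with $3k<n$ spans an arc of length less than $2\pi/3<\pi$. Hence the map sending a vertex to its angle extends to a map $\mathrm{VR}\to S^1$, defined simplexwise by taking the short arc. This map carries $\gamma$ to the generator of $H_1(S^1)$, so $\gamma\neq 0$.

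\textbf{Death once $3k\ge n$.} Take the smallest $k$ with $3k\ge n$, namely $k=\lceil n/3\rceil$. We can then choose three vertices $x_0,x_a,x_{a+b}$ with cyclic gaps $a,b,c\le k$ and $a+b+c=n$. This triangle lies in $C_n^k$. Its boundary is homologous to $\gamma$: each of its three edges is homologous to the corresponding path along the polygon, via fans of triangles inside the clique complex. Therefore $\gamma$ dies exactly at $\varepsilon=\delta_{\lceil n/3\rceil}$, which is the formula in the lemma.

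We expect the main obstacle to be the survival step. The non-vanishing of $\gamma$ for all $k<n/3$ must be justified rigorously, and it must also be checked that no earlier filling occurs. We would first try the simplexwise short-arc extension argument. If a fully rigorous treatment is needed, we fall back on citing \citet{Adamaszek_2017}.

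The same scheme, with the appropriate planar projection, should then give Theorem~\ref{thm:rips_death_tetra}. A crown with $n$ even projects onto a regular $n/2$-gon-like arrangement. In that arrangement, next-nearest neighbours at distance $\ell\sqrt{2(1-\cos\theta)}$ play the role of edges, which explains the replacement of $\pi/n$ by $2\pi/n$ and of $\lceil n/3\rceil$ by $\lceil n/6\rceil$.
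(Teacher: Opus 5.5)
Your proof is correct and has the same skeleton as the paper's: put the $n$-gon on its circumcircle, identify $\mathrm{VR}_\varepsilon$ for $\delta_k\le\varepsilon<\delta_{k+1}$ with the clique complex of $C_n^k$, locate the death of the fundamental loop at the first $k$ with $k/n\ge 1/3$, and convert $k=\lceil n/3\rceil$ into a length via the chord ratio $\sin(k\pi/n)/\sin(\pi/n)$. The difference lies in how you justify the threshold.

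\emph{What each route buys.} The paper only invokes the Adamaszek--Adams winding-fraction classification. That result gives the full homotopy type at every scale, so the uniqueness of the $H_1$ bar and its exact death come for free. Your self-contained route uses a degree-one short-arc map to $S^1$ for survival and an explicit filling triangle with gaps $a,b,c\le k$ for death. It avoids that machinery and exhibits the filling concretely, including the case $3\mid n$ with $a=b=c=n/3$.

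\emph{Two additions needed to make it stand alone.}

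First, justify that for $3k<n$ every clique of $C_n^k$ lies in an arc of at most $k$ steps; this is what makes the short-arc extension well defined. Fix a vertex of the clique; all other vertices lie within $k$ steps of it. Let the two extreme vertices sit at offsets $-a$ and $b$ from it. Then $n-a-b\ge n-2k>k$, and since these two vertices are adjacent in $C_n^k$, this forces $a+b\le k$.

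Second, replace the loose claim that other $1$-cycles ``bound triangles and die immediately''. That claim is false as stated: for instance, a cycle of chords winding once around does not bound before the threshold. The correct statement is that your fans of triangles show every $1$-cycle at every scale is homologous to a multiple of $\gamma$. Hence $H_1$ is generated by $[\gamma]$ throughout, the barcode has the single bar $[\ell,\delta_{\lceil n/3\rceil})$, and that bar is the dominant component.
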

\begin{proof}[Proof of Lemma \ref{lemma:ngon}]
Following the theory of Vietoris-Rips complexes of the circle established by \citet{Adamaszek_2017}, the homotopy type of $VR(X; r)$ for a finite subset $X \subset S^1$ is determined by its winding fraction $wf(X; r)$. For a regular $n$-gon with side length $\ell$, the complex is equivalent to the clique complex of the cyclic graph $C_n^k$. 

 \textbf{Threshold for death:} The $H_1$ component  persists as long as the winding fraction satisfies $0 < wf < 1/3$.

 \textbf{Geometric jump:} For a regular $n$-gon, the winding fraction is defined as $wf = k/n$, where $k$ is the maximum number of edges a single chord of length $d$ can span. Thus, death occurs at the smallest $d$ such that $k = \lceil n/3 \rceil$.

\textbf{Exact formula:} Let $R$ be the circumradius of the $n$-gon. The side length $\ell$ and the death distance $d$ are chords related by:
\begin{equation*}
\ell = 2R \sin\left(\frac{\pi}{n}\right), \quad d = 2R \sin\left(\lceil n/3 \rceil \frac{\pi}{n}\right)
\end{equation*}
Dividing these yields the exact relation:
\begin{equation*}
d = \ell \frac{\sin\left( \lceil \frac{n}{3} \rceil \frac{\pi}{n} \right)}{\sin\left( \frac{\pi}{n} \right)}
\end{equation*}
\end{proof}

\begin{proof}[Proof of \Cref{thm:rips_death_tetra}]

\textbf{Extension to tetrahedral conformation:} Consider $n$ atoms in a zigzag cycle. Because the bond angles are constrained, the atoms alternate between two concentric radii. Death edges can only be between two of the $n/2$ atoms of the interior circle. These atoms form a planar regular $N$-gon where $N=n/2$. That means we can apply Lemma~\ref{lemma:ngon}. The proof geometry is illustrated in \Cref{fig:crown_proof}.

\textbf{Effective side length:} The distance between two adjacent atoms in this sub-polygon is determined by the bond length $\ell$ and the bond angle $\theta$. More precisely the law of cosines gives
\begin{equation*} \ell' = \sqrt{\ell^2 + \ell^2 - 2\ell^2 \cos\theta} = \ell \sqrt{2(1-\cos\theta)} \end{equation*}

\textbf{Exact formula:} Substituting $N=n/2$ and $\ell'$ into the Lemma: \begin{equation*} d = \ell \sqrt{2(1-\cos\theta)} \cdot \frac{\sin\left( \lceil \frac{n}{6} \rceil \frac{2\pi}{n} \right)}{\sin\left( \frac{2\pi}{n} \right)} \end{equation*}

\textbf{Linearization:} As $n\rightarrow +\infty$, we get
\begin{equation*} d \approx \ell \sqrt{2(1-\cos\theta)} \frac{\sqrt{3}/2}{2\pi/n} = \frac{n \ell \sqrt{6(1-\cos\theta)}}{4\pi} \end{equation*} Solving for $n$ yields the linear form: \begin{equation*} n = \frac{4\pi}{\sqrt{6(1-\cos\theta)}}\frac{d}{\ell} + \mathcal{O}(1) \end{equation*}
\end{proof}

\begin{figure}[htbp]
    \centering
    \begin{tikzpicture}[scale=2.5, cap=round, join=round]

        \def\N{12}     
        \def\Rout{1.6} 
        \def\Rin{1.15} 
        
        \foreach \i in {1,...,\N} {
            \pgfmathsetmacro{\angle}{360/\N * (\i - 1) + 90} 
            \ifodd\i
                \coordinate (A\i) at (\angle:\Rout);
            \else
                \coordinate (A\i) at (\angle:\Rin);
            \fi
        }

        \draw[blue!40, dashed, line width=0.8pt] 
            (A2) -- (A4) -- (A6) -- (A8) -- (A10) -- (A12) -- cycle;
        
        \draw[gray!70, line width=2.5pt] 
            (A1) -- (A2) -- (A3) -- (A4) -- (A5) -- (A6) -- 
            (A7) -- (A8) -- (A9) -- (A10) -- (A11) -- (A12) -- cycle;

        \foreach \i in {1,...,\N} {
            \ifodd\i
                \node[circle, fill=red!70, inner sep=2.5pt, draw=black!70] at (A\i) {};
            \else
                \node[circle, fill=blue!70, inner sep=2.5pt, draw=black!70] at (A\i) {};
            \fi
        }

        \pgfmathanglebetweenpoints{\pgfpointanchor{A1}{center}}{\pgfpointanchor{A12}{center}}
        \let\angLeft\pgfmathresult
        
        \pgfmathanglebetweenpoints{\pgfpointanchor{A1}{center}}{\pgfpointanchor{A2}{center}}
        \let\angRight\pgfmathresult
        
        \draw[orange, thick] (A1) +(\angLeft:0.35) arc (\angLeft:\angRight:0.35);
        
        \node[orange, font=\footnotesize, font=\bfseries] at ($(A1) + ({(\angLeft+\angRight)/2}:0.5)$) {$\theta$};

        
        \draw[<->, >=stealth, thick, blue!80] ($(A12)!0.1!(A2)$) -- ($(A2)!0.1!(A12)$) 
            node[midway, below, font=\small\bfseries] {$\ell'$};

        \draw[<->, >=stealth, thick] ($(A2)!0.1!(A3)$) -- ($(A3)!0.1!(A2)$) 
            node[midway, sloped, above, font=\small\bfseries, text=black] {$\ell$};

        \draw[<->, >=stealth, very thick, purple] (A6) -- (A10) 
            node[midway, sloped, fill=white, inner sep=1.5pt, font=\bfseries] {$H_1$ death ($d$)};

    \end{tikzpicture}
    \caption{\textbf{Proof geometry: crown-shaped $n$-gon ($n=12$).} The inner polygon \textcolor{blue!80}{(dashed blue)} determines the filtration death time $d$. The effective side length $\ell'$ is derived from the bond length $\ell$ and the interior bond angle.}
    \label{fig:crown_proof}
\end{figure}
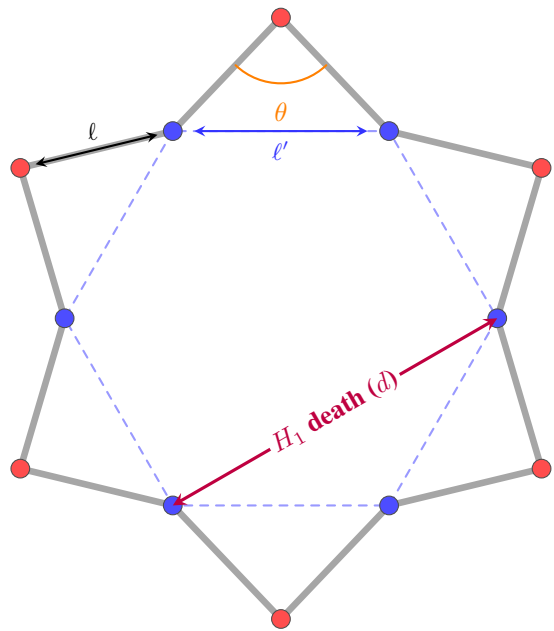

\begin{figure}[htbp]
    \centering
    
    \begin{subfigure}[c]{0.45\textwidth}
        \centering
        \begin{tikzpicture}[scale=1.5, font=\sffamily]
            \def\N{16}
            \def\R{1.5}
            \fill[green!10] (0,0) circle (\R);
            
            \draw[gray, line width=1.5pt] 
                ({90}:\R) 
                \foreach \i in {2,...,\N} { -- ({90 - 360/\N * (\i - 1)}:\R) } 
                -- cycle;

            \foreach \i in {1,...,\N} {
                \node[circle, fill=gray!80, inner sep=2pt, draw=black] (c\i) at ({90 - 360/\N * (\i - 1)}:\R) {};
            }

            \draw[<->, >=stealth, very thick, red] (c1) -- (c7) 
                node[midway, fill=green!10, text=red, inner sep=1.5pt, font=\bfseries\small] {$d_{circle}$};
        \end{tikzpicture}
        \caption{Circular Ring}
        \label{fig:circle}
    \end{subfigure}
    \hfill
    \begin{subfigure}[c]{0.45\textwidth}
        \centering
        \begin{tikzpicture}[scale=1.5, font=\sffamily]
            \path[use as bounding box] (0,0) circle (1.5);

            \fill[green!10] (0,0) ellipse (2.156 and 0.6);

            \foreach \x/\y [count=\i] in {
                0.000/0.600, 0.585/0.578, 1.165/0.505, 1.731/0.358,
                2.156/-0.000, 1.731/-0.358, 1.165/-0.505, 0.584/-0.578,
                -0.000/-0.600, -0.585/-0.577, -1.166/-0.505, -1.732/-0.358,
                -2.156/0.001, -1.730/0.358, -1.164/0.505, -0.584/0.578
            } {
                \coordinate (e\i) at (\x, \y);
            }

            \draw[gray, line width=1.5pt] (e1) 
                \foreach \i in {2,...,16} { -- (e\i) } 
                -- cycle;

            \foreach \i in {1,...,16} {
                \node[circle, fill=gray!80, inner sep=2pt, draw=black] at (e\i) {};
            }
            
            \draw[<->, >=stealth, very thick, red] (e1) -- (e8) 
                node[midway, fill=green!10, text=red, inner sep=1.5pt, font=\bfseries\small] {$d_{ellipse}$};
        \end{tikzpicture}
        \caption{Elliptical Ring}
        \label{fig:ellipse}
    \end{subfigure}
    
    \caption{\textbf{Impact of cycle shape on $H_1$ death $(n=16)$.} In an elongated elliptical macrocycle, the death time $d_{ellipse}$ is significantly reduced compared to $d_{circle}.$ The death times were obtained numerically using \texttt{torch\_topological}.}
    \label{fig:ellipse_shortcut}
\end{figure}
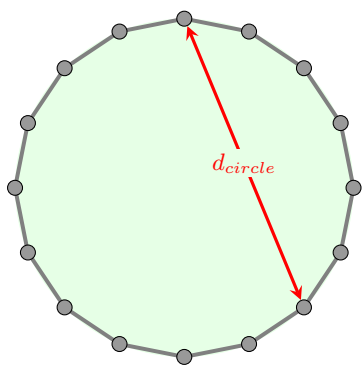
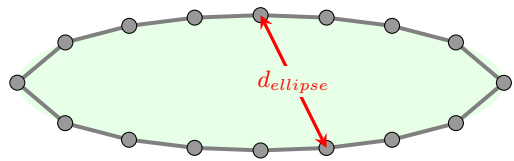

\section{Details on $F^{H_0}_{\mathrm{death}}$: Gradient Masking for Stability}
\label{ap:h0}

\begin{figure}[t]
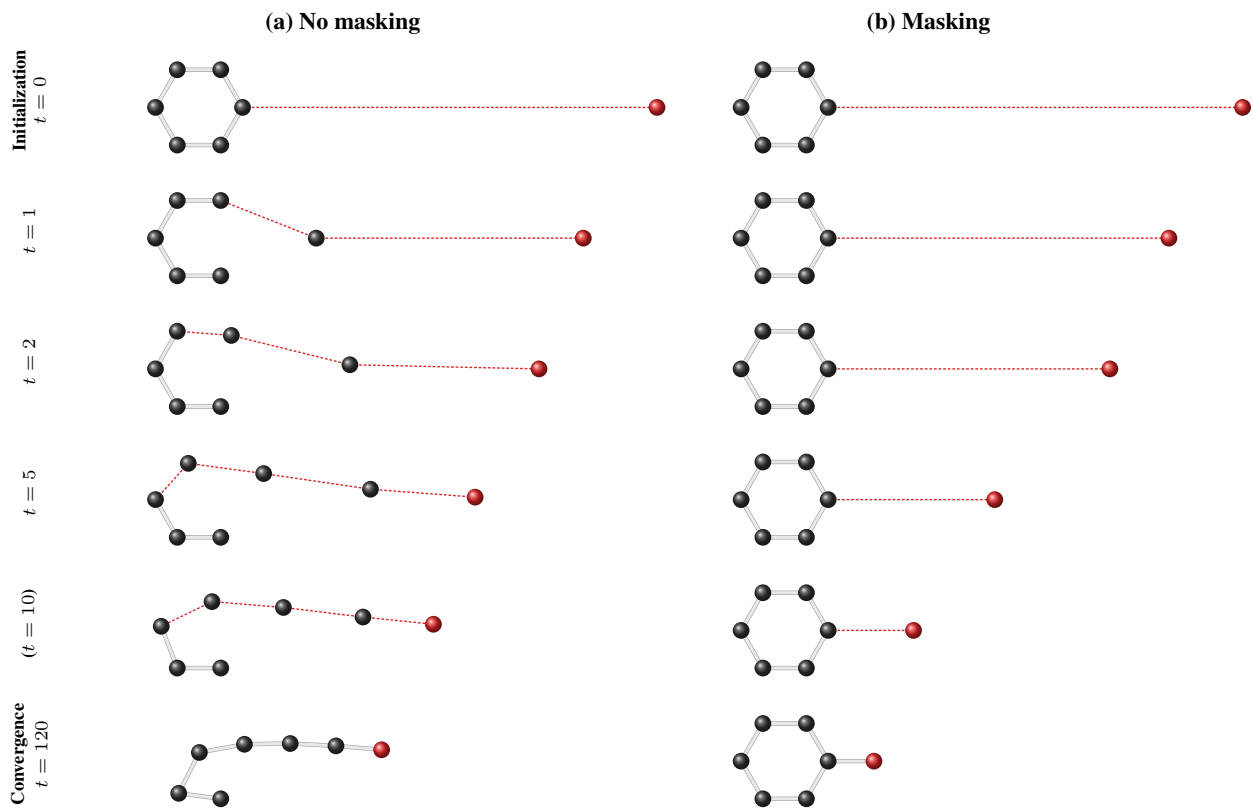

    \centering
    \setlength{\tabcolsep}{1pt} 
    \renewcommand{\arraystretch}{0.1}

    \providecommand{\commonboxTight}{\path[use as bounding box] (-5.5,-2.5) rectangle (16.5,2.5);}

    \begin{tabular}{ c c c }
         & \multicolumn{1}{c}{\small \textbf{(a) No masking}} & \multicolumn{1}{c}{\small \textbf{(b) Masking}} \\[1ex]

        \makecell[c]{\rotatebox{90}{\shortstack[c]{\scriptsize \textbf{Initialization} \\ \scriptsize $t=0$}}} & 
        \adjustbox{valign=m, max width=0.45\linewidth}{
            \begin{tikzpicture} 
                \commonboxTight 
\definecolor{chemcarbon}{HTML}{333333}
\definecolor{chemoxygen}{HTML}{E02222}
\definecolor{chembondfill}{HTML}{E8E8E8}
\definecolor{chembondoutline}{HTML}{999999}
\definecolor{chemactive}{HTML}{D62728}

\draw[chembondoutline, line width=0.6pt, double=chembondfill, double distance=3.5pt, line cap=round] (1.663,0.000) -- (0.831,1.440);
\draw[chembondoutline, line width=0.6pt, double=chembondfill, double distance=3.5pt, line cap=round] (0.831,1.440) -- (-0.831,1.440);
\draw[chembondoutline, line width=0.6pt, double=chembondfill, double distance=3.5pt, line cap=round] (-1.663,0.000) -- (-0.831,-1.440);
\draw[chemactive, line width=1.5pt, dashed, dash pattern=on 3pt off 2pt] (1.663,0.000) -- (17.500,0.000);
\draw[chembondoutline, line width=0.6pt, double=chembondfill, double distance=3.5pt, line cap=round] (-0.831,1.440) -- (-1.663,0.000);
\draw[chembondoutline, line width=0.6pt, double=chembondfill, double distance=3.5pt, line cap=round] (-0.831,-1.440) -- (0.831,-1.440);
\draw[chembondoutline, line width=0.6pt, double=chembondfill, double distance=3.5pt, line cap=round] (1.663,0.000) -- (0.831,-1.440);

\shade[shading=ball, ball color=chemcarbon] (1.663,0.000) circle (0.32cm);
\shade[shading=ball, ball color=chemcarbon] (0.831,1.440) circle (0.32cm);
\shade[shading=ball, ball color=chemcarbon] (-0.831,1.440) circle (0.32cm);
\shade[shading=ball, ball color=chemcarbon] (-1.663,0.000) circle (0.32cm);
\shade[shading=ball, ball color=chemcarbon] (-0.831,-1.440) circle (0.32cm);
\shade[shading=ball, ball color=chemcarbon] (0.831,-1.440) circle (0.32cm);
\shade[shading=ball, ball color=chemoxygen] (17.500,0.000) circle (0.32cm); 
            \end{tikzpicture}
        } & 
        \adjustbox{valign=m, max width=0.45\linewidth}{
            \begin{tikzpicture} 
                \commonboxTight 
\definecolor{chemcarbon}{HTML}{333333}
\definecolor{chemoxygen}{HTML}{E02222}
\definecolor{chembondfill}{HTML}{E8E8E8}
\definecolor{chembondoutline}{HTML}{999999}
\definecolor{chemactive}{HTML}{D62728}

\draw[chembondoutline, line width=0.6pt, double=chembondfill, double distance=3.5pt, line cap=round] (1.663,0.000) -- (0.831,1.440);
\draw[chembondoutline, line width=0.6pt, double=chembondfill, double distance=3.5pt, line cap=round] (0.831,1.440) -- (-0.831,1.440);
\draw[chembondoutline, line width=0.6pt, double=chembondfill, double distance=3.5pt, line cap=round] (-1.663,0.000) -- (-0.831,-1.440);
\draw[chemactive, line width=1.5pt, dashed, dash pattern=on 3pt off 2pt] (1.663,0.000) -- (17.500,0.000);
\draw[chembondoutline, line width=0.6pt, double=chembondfill, double distance=3.5pt, line cap=round] (-0.831,1.440) -- (-1.663,0.000);
\draw[chembondoutline, line width=0.6pt, double=chembondfill, double distance=3.5pt, line cap=round] (-0.831,-1.440) -- (0.831,-1.440);
\draw[chembondoutline, line width=0.6pt, double=chembondfill, double distance=3.5pt, line cap=round] (1.663,0.000) -- (0.831,-1.440);

\shade[shading=ball, ball color=chemcarbon] (1.663,0.000) circle (0.32cm);
\shade[shading=ball, ball color=chemcarbon] (0.831,1.440) circle (0.32cm);
\shade[shading=ball, ball color=chemcarbon] (-0.831,1.440) circle (0.32cm);
\shade[shading=ball, ball color=chemcarbon] (-1.663,0.000) circle (0.32cm);
\shade[shading=ball, ball color=chemcarbon] (-0.831,-1.440) circle (0.32cm);
\shade[shading=ball, ball color=chemcarbon] (0.831,-1.440) circle (0.32cm);
\shade[shading=ball, ball color=chemoxygen] (17.500,0.000) circle (0.32cm); 
            \end{tikzpicture}
        } \\[1ex]

        \makecell[c]{\rotatebox{90}{\scriptsize \textbf{$t=1$}}} & 
        \adjustbox{valign=m, max width=0.45\linewidth}{
            \begin{tikzpicture} 
                \commonboxTight 
                \input{h0_figs/fig_baseline_step001.tex} 
            \end{tikzpicture}
        } & 
        \adjustbox{valign=m, max width=0.45\linewidth}{
            \begin{tikzpicture} 
                \commonboxTight 
\definecolor{chemcarbon}{HTML}{333333}
\definecolor{chemoxygen}{HTML}{E02222}
\definecolor{chembondfill}{HTML}{E8E8E8}
\definecolor{chembondoutline}{HTML}{999999}
\definecolor{chemactive}{HTML}{D62728}

\draw[chembondoutline, line width=0.6pt, double=chembondfill, double distance=3.5pt, line cap=round] (1.663,0.000) -- (0.831,1.440);
\draw[chembondoutline, line width=0.6pt, double=chembondfill, double distance=3.5pt, line cap=round] (0.831,1.440) -- (-0.831,1.440);
\draw[chembondoutline, line width=0.6pt, double=chembondfill, double distance=3.5pt, line cap=round] (-1.663,0.000) -- (-0.831,-1.440);
\draw[chemactive, line width=1.5pt, dashed, dash pattern=on 3pt off 2pt] (1.663,0.000) -- (14.683,0.000);
\draw[chembondoutline, line width=0.6pt, double=chembondfill, double distance=3.5pt, line cap=round] (-0.831,1.440) -- (-1.663,0.000);
\draw[chembondoutline, line width=0.6pt, double=chembondfill, double distance=3.5pt, line cap=round] (-0.831,-1.440) -- (0.831,-1.440);
\draw[chembondoutline, line width=0.6pt, double=chembondfill, double distance=3.5pt, line cap=round] (1.663,0.000) -- (0.831,-1.440);

\shade[shading=ball, ball color=chemcarbon] (1.663,0.000) circle (0.32cm);
\shade[shading=ball, ball color=chemcarbon] (0.831,1.440) circle (0.32cm);
\shade[shading=ball, ball color=chemcarbon] (-0.831,1.440) circle (0.32cm);
\shade[shading=ball, ball color=chemcarbon] (-1.663,0.000) circle (0.32cm);
\shade[shading=ball, ball color=chemcarbon] (-0.831,-1.440) circle (0.32cm);
\shade[shading=ball, ball color=chemcarbon] (0.831,-1.440) circle (0.32cm);
\shade[shading=ball, ball color=chemoxygen] (14.683,0.000) circle (0.32cm); 
            \end{tikzpicture}
        } \\[1ex]

        \makecell[c]{\rotatebox{90}{\scriptsize \textbf{$t=2$}}} & 
        \adjustbox{valign=m, max width=0.45\linewidth}{
            \begin{tikzpicture} 
                \commonboxTight 
                \input{h0_figs/fig_baseline_step002.tex} 
            \end{tikzpicture}
        } & 
        \adjustbox{valign=m, max width=0.45\linewidth}{
            \begin{tikzpicture} 
                \commonboxTight 
\definecolor{chemcarbon}{HTML}{333333}
\definecolor{chemoxygen}{HTML}{E02222}
\definecolor{chembondfill}{HTML}{E8E8E8}
\definecolor{chembondoutline}{HTML}{999999}
\definecolor{chemactive}{HTML}{D62728}

\draw[chembondoutline, line width=0.6pt, double=chembondfill, double distance=3.5pt, line cap=round] (1.663,0.000) -- (0.831,1.440);
\draw[chembondoutline, line width=0.6pt, double=chembondfill, double distance=3.5pt, line cap=round] (0.831,1.440) -- (-0.831,1.440);
\draw[chembondoutline, line width=0.6pt, double=chembondfill, double distance=3.5pt, line cap=round] (-1.663,0.000) -- (-0.831,-1.440);
\draw[chemactive, line width=1.5pt, dashed, dash pattern=on 3pt off 2pt] (1.663,0.000) -- (12.428,0.000);
\draw[chembondoutline, line width=0.6pt, double=chembondfill, double distance=3.5pt, line cap=round] (-0.831,1.440) -- (-1.663,0.000);
\draw[chembondoutline, line width=0.6pt, double=chembondfill, double distance=3.5pt, line cap=round] (-0.831,-1.440) -- (0.831,-1.440);
\draw[chembondoutline, line width=0.6pt, double=chembondfill, double distance=3.5pt, line cap=round] (1.663,0.000) -- (0.831,-1.440);

\shade[shading=ball, ball color=chemcarbon] (1.663,0.000) circle (0.32cm);
\shade[shading=ball, ball color=chemcarbon] (0.831,1.440) circle (0.32cm);
\shade[shading=ball, ball color=chemcarbon] (-0.831,1.440) circle (0.32cm);
\shade[shading=ball, ball color=chemcarbon] (-1.663,0.000) circle (0.32cm);
\shade[shading=ball, ball color=chemcarbon] (-0.831,-1.440) circle (0.32cm);
\shade[shading=ball, ball color=chemcarbon] (0.831,-1.440) circle (0.32cm);
\shade[shading=ball, ball color=chemoxygen] (12.428,0.000) circle (0.32cm); 
            \end{tikzpicture}
        } \\[1ex]

        \makecell[c]{\rotatebox{90}{\scriptsize \textbf{$t=5$}}} & 
        \adjustbox{valign=m, max width=0.45\linewidth}{
            \begin{tikzpicture} 
                \commonboxTight 
                \input{h0_figs/fig_baseline_step005.tex} 
            \end{tikzpicture}
        } & 
        \adjustbox{valign=m, max width=0.45\linewidth}{
            \begin{tikzpicture} 
                \commonboxTight 
\definecolor{chemcarbon}{HTML}{333333}
\definecolor{chemoxygen}{HTML}{E02222}
\definecolor{chembondfill}{HTML}{E8E8E8}
\definecolor{chembondoutline}{HTML}{999999}
\definecolor{chemactive}{HTML}{D62728}

\draw[chembondoutline, line width=0.6pt, double=chembondfill, double distance=3.5pt, line cap=round] (1.663,0.000) -- (0.831,1.440);
\draw[chembondoutline, line width=0.6pt, double=chembondfill, double distance=3.5pt, line cap=round] (0.831,1.440) -- (-0.831,1.440);
\draw[chembondoutline, line width=0.6pt, double=chembondfill, double distance=3.5pt, line cap=round] (-1.663,0.000) -- (-0.831,-1.440);
\draw[chemactive, line width=1.5pt, dashed, dash pattern=on 3pt off 2pt] (1.663,0.000) -- (8.029,0.000);
\draw[chembondoutline, line width=0.6pt, double=chembondfill, double distance=3.5pt, line cap=round] (-0.831,1.440) -- (-1.663,0.000);
\draw[chembondoutline, line width=0.6pt, double=chembondfill, double distance=3.5pt, line cap=round] (-0.831,-1.440) -- (0.831,-1.440);
\draw[chembondoutline, line width=0.6pt, double=chembondfill, double distance=3.5pt, line cap=round] (1.663,0.000) -- (0.831,-1.440);

\shade[shading=ball, ball color=chemcarbon] (1.663,0.000) circle (0.32cm);
\shade[shading=ball, ball color=chemcarbon] (0.831,1.440) circle (0.32cm);
\shade[shading=ball, ball color=chemcarbon] (-0.831,1.440) circle (0.32cm);
\shade[shading=ball, ball color=chemcarbon] (-1.663,0.000) circle (0.32cm);
\shade[shading=ball, ball color=chemcarbon] (-0.831,-1.440) circle (0.32cm);
\shade[shading=ball, ball color=chemcarbon] (0.831,-1.440) circle (0.32cm);
\shade[shading=ball, ball color=chemoxygen] (8.029,0.000) circle (0.32cm); 
            \end{tikzpicture}
        } \\[1ex]

        \makecell[c]{\rotatebox{90}{\scriptsize \textbf{($t=10$)}}} & 
        \adjustbox{valign=m, max width=0.45\linewidth}{
            \begin{tikzpicture} 
                \commonboxTight 
                \input{h0_figs/fig_baseline_step010.tex} 
            \end{tikzpicture}
        } & 
        \adjustbox{valign=m, max width=0.45\linewidth}{
            \begin{tikzpicture} 
                \commonboxTight 
\definecolor{chemcarbon}{HTML}{333333}
\definecolor{chemoxygen}{HTML}{E02222}
\definecolor{chembondfill}{HTML}{E8E8E8}
\definecolor{chembondoutline}{HTML}{999999}
\definecolor{chemactive}{HTML}{D62728}

\draw[chembondoutline, line width=0.6pt, double=chembondfill, double distance=3.5pt, line cap=round] (1.663,0.000) -- (0.831,1.440);
\draw[chembondoutline, line width=0.6pt, double=chembondfill, double distance=3.5pt, line cap=round] (0.831,1.440) -- (-0.831,1.440);
\draw[chembondoutline, line width=0.6pt, double=chembondfill, double distance=3.5pt, line cap=round] (-1.663,0.000) -- (-0.831,-1.440);
\draw[chemactive, line width=1.5pt, dashed, dash pattern=on 3pt off 2pt] (1.663,0.000) -- (4.925,0.000);
\draw[chembondoutline, line width=0.6pt, double=chembondfill, double distance=3.5pt, line cap=round] (-0.831,1.440) -- (-1.663,0.000);
\draw[chembondoutline, line width=0.6pt, double=chembondfill, double distance=3.5pt, line cap=round] (-0.831,-1.440) -- (0.831,-1.440);
\draw[chembondoutline, line width=0.6pt, double=chembondfill, double distance=3.5pt, line cap=round] (1.663,0.000) -- (0.831,-1.440);

\shade[shading=ball, ball color=chemcarbon] (1.663,0.000) circle (0.32cm);
\shade[shading=ball, ball color=chemcarbon] (0.831,1.440) circle (0.32cm);
\shade[shading=ball, ball color=chemcarbon] (-0.831,1.440) circle (0.32cm);
\shade[shading=ball, ball color=chemcarbon] (-1.663,0.000) circle (0.32cm);
\shade[shading=ball, ball color=chemcarbon] (-0.831,-1.440) circle (0.32cm);
\shade[shading=ball, ball color=chemcarbon] (0.831,-1.440) circle (0.32cm);
\shade[shading=ball, ball color=chemoxygen] (4.925,0.000) circle (0.32cm); 
            \end{tikzpicture}
        } \\[1ex]

       \makecell[c]{\rotatebox{90}{\shortstack[c]{\scriptsize \textbf{Convergence} \\ \scriptsize $t=120$}}} & 
        \adjustbox{valign=m, max width=0.45\linewidth}{
            \begin{tikzpicture} 
                \commonboxTight 
\definecolor{chemcarbon}{HTML}{333333}
\definecolor{chemoxygen}{HTML}{E02222}
\definecolor{chembondfill}{HTML}{E8E8E8}
\definecolor{chembondoutline}{HTML}{999999}
\definecolor{chemactive}{HTML}{D62728}

\draw[chembondoutline, line width=0.6pt, double=chembondfill, double distance=3.5pt, line cap=round] (5.233,0.584) -- (3.485,0.674);
\draw[chembondoutline, line width=0.6pt, double=chembondfill, double distance=3.5pt, line cap=round] (3.485,0.674) -- (1.735,0.645);
\draw[chembondoutline, line width=0.6pt, double=chembondfill, double distance=3.5pt, line cap=round] (0.013,0.332) -- (-0.775,-1.230);
\draw[chembondoutline, line width=0.6pt, double=chembondfill, double distance=3.5pt, line cap=round] (5.233,0.584) -- (6.977,0.435);
\draw[chembondoutline, line width=0.6pt, double=chembondfill, double distance=3.5pt, line cap=round] (1.735,0.645) -- (0.013,0.332);
\draw[chembondoutline, line width=0.6pt, double=chembondfill, double distance=3.5pt, line cap=round] (-0.775,-1.230) -- (0.831,-1.440);

\shade[shading=ball, ball color=chemcarbon] (5.233,0.584) circle (0.32cm);
\shade[shading=ball, ball color=chemcarbon] (3.485,0.674) circle (0.32cm);
\shade[shading=ball, ball color=chemcarbon] (1.735,0.645) circle (0.32cm);
\shade[shading=ball, ball color=chemcarbon] (0.013,0.332) circle (0.32cm);
\shade[shading=ball, ball color=chemcarbon] (-0.775,-1.230) circle (0.32cm);
\shade[shading=ball, ball color=chemcarbon] (0.831,-1.440) circle (0.32cm);
\shade[shading=ball, ball color=chemoxygen] (6.977,0.435) circle (0.32cm); 
            \end{tikzpicture}
        } & 
        \adjustbox{valign=m, max width=0.45\linewidth}{
            \begin{tikzpicture} 
                \commonboxTight 
\definecolor{chemcarbon}{HTML}{333333}
\definecolor{chemoxygen}{HTML}{E02222}
\definecolor{chembondfill}{HTML}{E8E8E8}
\definecolor{chembondoutline}{HTML}{999999}
\definecolor{chemactive}{HTML}{D62728}

\draw[chembondoutline, line width=0.6pt, double=chembondfill, double distance=3.5pt, line cap=round] (1.663,0.000) -- (0.831,1.440);
\draw[chembondoutline, line width=0.6pt, double=chembondfill, double distance=3.5pt, line cap=round] (0.831,1.440) -- (-0.831,1.440);
\draw[chembondoutline, line width=0.6pt, double=chembondfill, double distance=3.5pt, line cap=round] (-1.663,0.000) -- (-0.831,-1.440);
\draw[chembondoutline, line width=0.6pt, double=chembondfill, double distance=3.5pt, line cap=round] (1.663,0.000) -- (3.413,0.000);
\draw[chembondoutline, line width=0.6pt, double=chembondfill, double distance=3.5pt, line cap=round] (-0.831,1.440) -- (-1.663,0.000);
\draw[chembondoutline, line width=0.6pt, double=chembondfill, double distance=3.5pt, line cap=round] (-0.831,-1.440) -- (0.831,-1.440);
\draw[chembondoutline, line width=0.6pt, double=chembondfill, double distance=3.5pt, line cap=round] (1.663,0.000) -- (0.831,-1.440);

\shade[shading=ball, ball color=chemcarbon] (1.663,0.000) circle (0.32cm);
\shade[shading=ball, ball color=chemcarbon] (0.831,1.440) circle (0.32cm);
\shade[shading=ball, ball color=chemcarbon] (-0.831,1.440) circle (0.32cm);
\shade[shading=ball, ball color=chemcarbon] (-1.663,0.000) circle (0.32cm);
\shade[shading=ball, ball color=chemcarbon] (-0.831,-1.440) circle (0.32cm);
\shade[shading=ball, ball color=chemcarbon] (0.831,-1.440) circle (0.32cm);
\shade[shading=ball, ball color=chemoxygen] (3.413,0.000) circle (0.32cm); 
            \end{tikzpicture}
        } \\

    \end{tabular}

    \caption{\textbf{Failure mode analysis.} Comparison of optimization dynamics under \Cref{eq:h0ap}. (a) Symmetric gradients cause "unzipping." (b) Masked gradients preserve structure.}
    \label{fig:comparison_dynamics}
\end{figure}

\paragraph{Method.} As presented in \Cref{sec:methodoverview}, connectivity is enforced by minimizing the $H_0$ persistence death times that exceed a threshold $\ell^\star$. In the Vietoris-Rips filtration, a finite $H_0$ death time corresponds to the scale at which two connected components merge. Formally, the connectivity loss is defined over the set of finite death times $\mathcal{D}_0(X)$ in the 0-dimensional persistence diagram:
\begin{equation}
\label{eq:h0ap}
    F^{H_0}_{\mathrm{death}}(X) = \sum\limits_{d \in \mathcal{D}_0(X)} \Big(\mathrm{ReLU} \big(d - \ell^\star\big)\Big)^2
\end{equation}
To optimize this objective, we introduce a \textbf{masked gradient} scheme. Each death value $d > \ell^\star$ is determined by the distance between two specific atoms $(u, v)$ that trigger the merging of two components (i.e., $\|x_u - x_v\| = d$). We define the centroid of the system as $\mathbf{c} = \frac{1}{N} \sum_{i=1}^N x_i$. The gradient applied to each endpoint of this critical edge goes through by a binary mask $m$:
\begin{equation*}
    g_u = m_u \cdot \nabla_{x_u} F^{H_0}_{\mathrm{death}}, \quad m_u = \mathbbm{1}\left(\|x_u - \mathbf{c}\| \geq \|x_v - \mathbf{c}\|\right)
\end{equation*}
In other words, for every pair of atoms responsible for the merging of two connected components (that are separated by more than $\ell^\star$), we block the gradient flow for the atom closer to the centroid ($m=0$) and allow it only for the atom further away ($m=1$).

\paragraph{Instability of symmetric optimization.}
The necessity of this masking becomes apparent when analyzing the case of a stable connected cluster $\mathcal{C}$ (stable in the sense that all the $H_0$ death times are less than $\ell^\star$) interacting with a distant outlier component $o$. Let's assume that the merging of the outlier with the cluster determines a death time $d = \|x_u - x_o\| > \ell^\star$, for some $u \in \mathcal{C}$ (i.e. the outlier is at a distance $d$ from the cluster).
Without masking, the gradient of \Cref{eq:h0ap} is applied symmetrically. The internal atom $u$ moves as
\begin{equation*}
    \Delta x_u \propto -\nabla_{x_u} F^{H_0}_{\mathrm{death}} \propto (x_o - x_u)
\end{equation*}
Since $o$ is an outlier, this vector points away from the cluster centroid. Consequently, $u$ is pulled out of its local equilibrium within $\mathcal{C}$. This displacement stretches the distances between $u$ and its neighbors. If the force exerted by the outlier is sufficiently strong (i.e., if it is sufficiently far), these internal distances cross the threshold $\ell^\star$, creating new high-persistence $H_0$ features that trigger their own penalty terms in Eq.~\ref{eq:h0ap}. This creates a cascading failure where the cluster is sequentially pulled apart, resulting in the \textit{linearization} or "unzipping" of the cycle structure, as demonstrated in Figure~\ref{fig:comparison_dynamics}a. By clamping the inner atom $u$ (via $m_u=0$), our method decouples the outlier's transport from the cluster's internal dynamics, ensuring structural preservation (Figure~\ref{fig:comparison_dynamics}b).

\paragraph{Non-conservative field.} Note that this masking renders the resulting guidance vector field non-conservative, effectively acting as a stabilized approximation of the exact score function described in \Cref{eq:tda_cond_score}. Nevertheless, the theoretical intuition provided in \Cref{sec:posteriorsampling} remains relevant, as the masked update still locally maximizes the conditional likelihood $p_t(y|x)$ to drive the system toward the desired topology, differing only by the suppression of specific unstable gradient~directions.

\section{Topological Features and Their Gradients Throughout Denoising}\label{app:tracking}

To better understand the success of \name, we analyze the evolution of the topological features and their associated gradients throughout the 1000 steps of the reverse diffusion process.

\paragraph{Feature convergence and constraint satisfaction.}
As shown in Figure~\ref{fig:birth_death}, the guidance successfully steers the molecule toward the target topology. All optimized features converge rapidly to their respective targets: the largest cycle size ($H_1$ death) reaches the target interval within the first few hundred steps, while cycle connectivity ($H_1$ birth) and molecular connectivity ($H_0$ death) are effectively kept below the threshold $\ell^\star$. Once satisfied, these constraints remain stable throughout the remainder of the denoising process, ensuring that the global macrocyclic structure is established early and preserved.

\paragraph{Gradient analysis.}
Figure~\ref{fig:gradient} illustrates that, on average, the topological gradients are of the same magnitude as the denoising gradients. We observe that $H_1$ death gradients are dominant at the very beginning of the process, driving the initial ring expansion. The $H_1$ birth gradients, while sparse throughout the trajectory, are constant in norm by design, providing sharp corrections when edges exceed the threshold $\ell^\star$. Crucially, in the last 100 steps (where fine-grained chemical details such as bond orders and angles are finalized) the gradients are dominated almost exclusively by the denoising term. This explains the success of our method: the guidance effectively pushes the molecule into a stable macrocyclic state early on, allowing the denoiser to perform optimal local validity optimization in the final stages without interference.

\paragraph{Comparison with squared $H_1$ birth.}
We compare this behavior with the one achieved by squaring the formula of $F^{H_1}_{\mathrm{birth}}$ as is done for the other guidance terms. As seen in Figure~\ref{fig:birth_death_square}, the $H_1$ birth feature in the squared setting stays very close to the target $\ell^\star$ but tends to oscillate around it rather than remaining strictly below. This behavior is confirmed by the gradient analysis in Figure~\ref{fig:gradient_square}, which shows that $H_1$ birth gradients remain very significant and exceeds the denoising gradients, even during the last 100 steps. Following the reasoning above, this persistent perturbation interferes with the denoiser's validity optimization in the critical final phase, explaining the lower performance observed with the squared $H_1$ birth term.

\begin{figure}[h!]
    \centering
    \begin{minipage}{0.48\linewidth}
        \centering
        \includegraphics[width=\linewidth]{figures/icml_birth_death.pdf}
    \end{minipage}\hfill
    \begin{minipage}{0.48\linewidth}
        \centering
        \includegraphics[width=\linewidth]{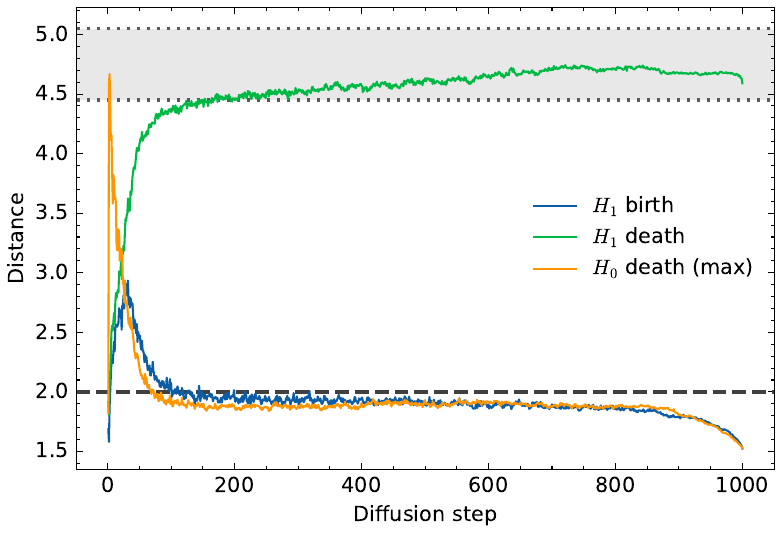}
    \end{minipage}
    \caption{Topological features (no square on $F^{H_1}_{\mathrm{birth}}$) for individual runs (left) and averaged runs (right).}
    \label{fig:birth_death}
\end{figure}

\begin{figure}
    \centering
    \includegraphics[width=0.5\linewidth]{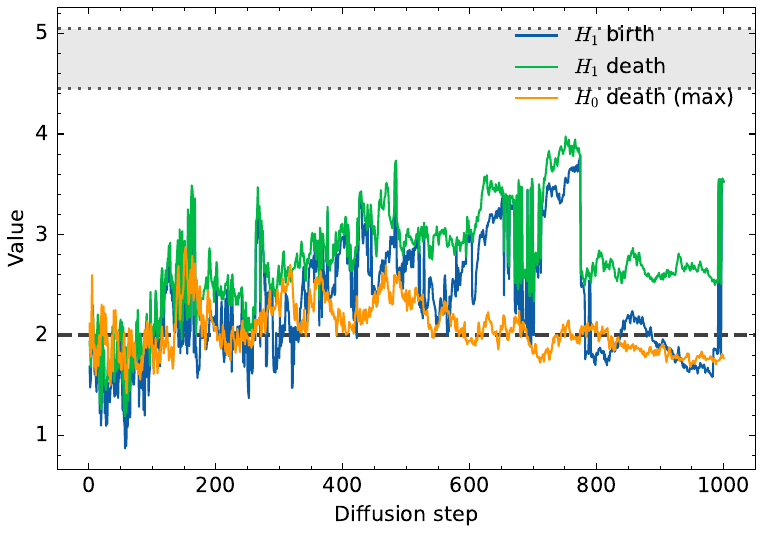}
    \caption{Topological features without guidance}
\end{figure}

\begin{figure}
    \centering
    \begin{minipage}{0.48\linewidth}
        \centering
        \includegraphics[width=\linewidth]{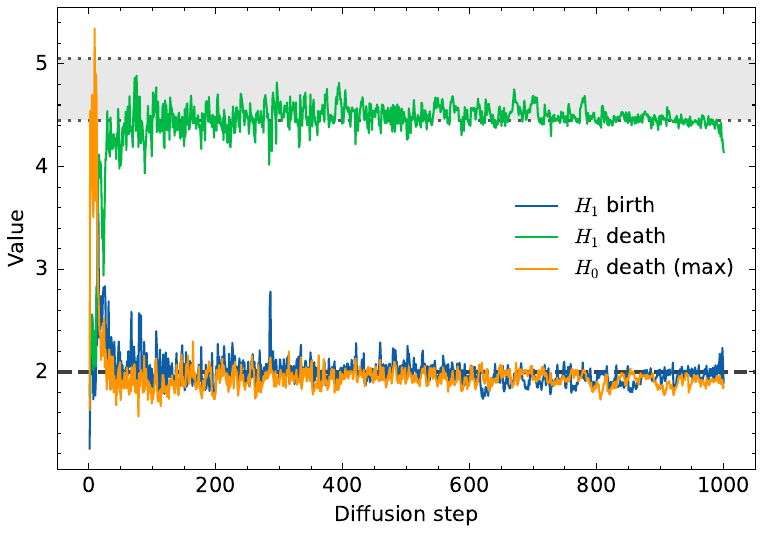}
    \end{minipage}\hfill
    \begin{minipage}{0.48\linewidth}
        \centering
        \includegraphics[width=\linewidth]{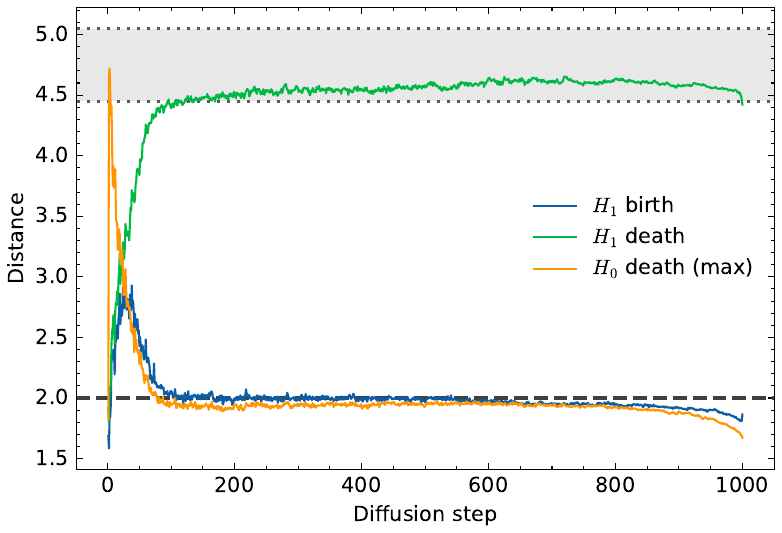}
    \end{minipage}
    \caption{Topological features \textbf{with squared $F^{H_1}_{\mathrm{birth}}$} for individual runs (left) and averaged runs (right).}
    \label{fig:birth_death_square}
\end{figure}

\begin{figure}
    \centering
    \begin{minipage}{0.48\linewidth}
        \centering
        \includegraphics[width=\linewidth]{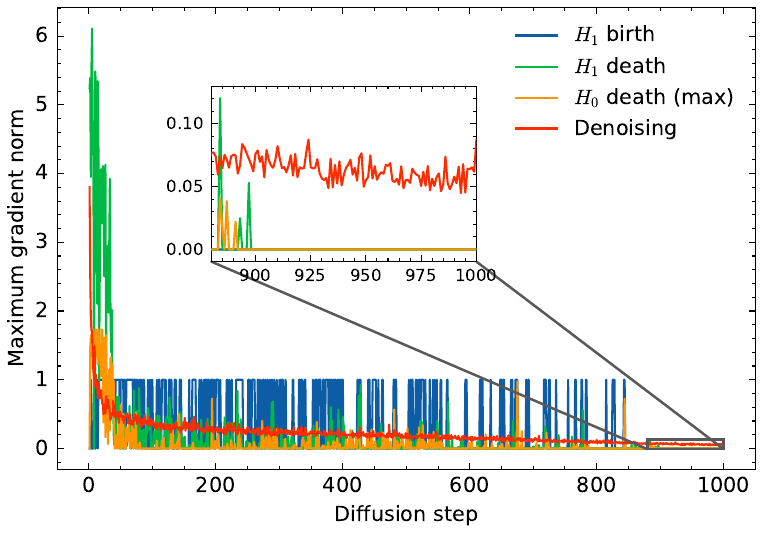}
    \end{minipage}\hfill
    \begin{minipage}{0.48\linewidth}
        \centering
        \includegraphics[width=\linewidth]{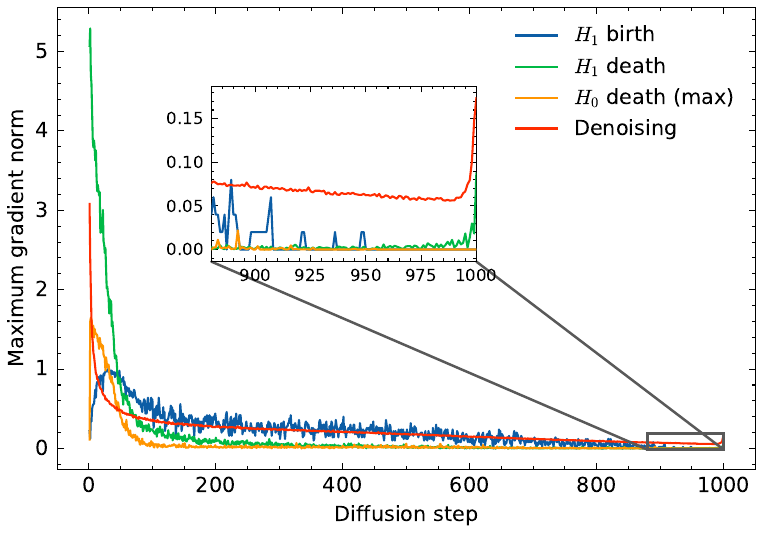}
    \end{minipage}
    \caption{Gradient statistics (no square on $F^{H_1}_{\mathrm{birth}}$) for individual runs (left) and averaged runs (right).}
    \label{fig:gradient}
\end{figure}

\begin{figure}
    \centering
    \begin{minipage}{0.48\linewidth}
        \centering
        \includegraphics[width=\linewidth]{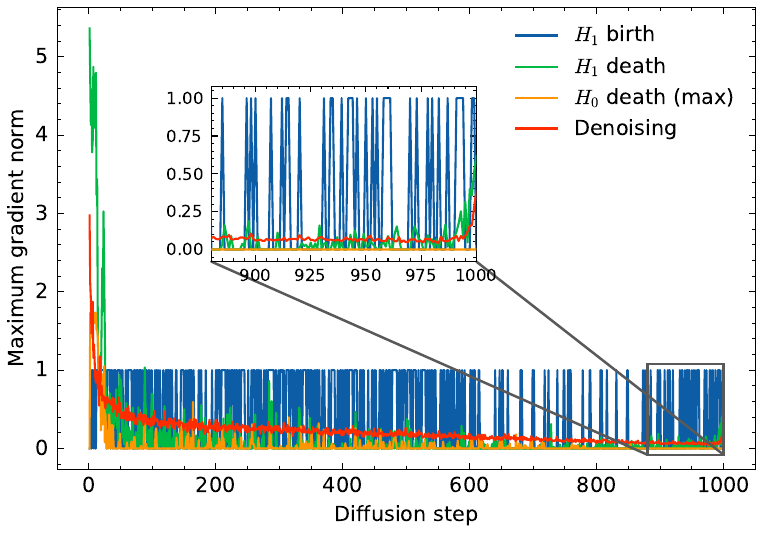}
    \end{minipage}\hfill
    \begin{minipage}{0.48\linewidth}
        \centering
        \includegraphics[width=\linewidth]{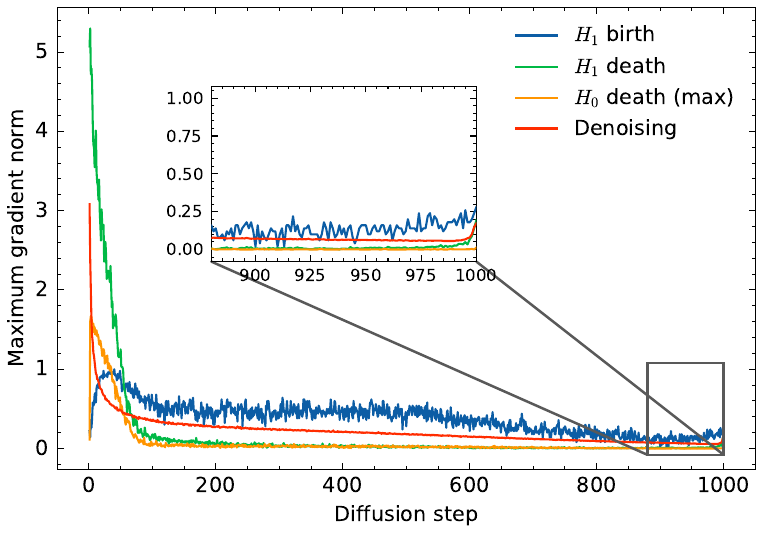}
    \end{minipage}
    \caption{Gradient statistics \textbf{with squared $F^{H_1}_{\mathrm{birth}}$} for individual runs (left) and averaged runs (right).}
    \label{fig:gradient_square}
\end{figure}
  
\newpage

\section{Detailed Algorithms}\label{app:algo}

We present the detailed algorithms used by \name. Algorithm~\ref{alg:denoise_with_topo_guidance_batch} shows how \name fits into the overall scheme of denoising. Algorithm~\ref{alg:macroguide} describes the high-level structure of \name, whereas algorithms~\ref{alg:h1_guidance} and \ref{alg:h0_guidance} describe its components: $H_1$ and $H_0$ guidances, respectively.

\paragraph{Relationship to continuous score matching.}
We note a specific detail regarding the implementation of topological guidance in Algorithm~\ref{alg:denoise_with_topo_guidance_batch} compared to the theoretical formulation in \Cref{eq:diffusion}. While \Cref{eq:diffusion} defines guidance as a modification to the instantaneous score function $\nabla_{x_t} \log p_t(y|x_t)$, our algorithm implements this as a sequential predictor-corrector step. Specifically, we first apply the base denoising update to obtain an intermediate state $x_{t-1}$, and then calculate the topological gradient $\Delta$ on this cleaner state. In this discrete setting, the scalar weight $\lambda$ acts as an effective step size that implicitly absorbs the SDE integration constants (e.g., $\sigma_t^2$) required to map the score-space gradient to a coordinate-space displacement.

\begin{algorithm}[h]
\caption{\textsc{DenoiseWith\name}}
\label{alg:denoise_with_topo_guidance_batch}
\begin{algorithmic}
\STATE {\bfseries Input:} denoiser $\epsilon_\theta$; noise schedule $\{\sigma_t\}_{t=1}^T$; initial sample $\mathbf{x}_T$; guidance parameters $(\lambda, \ell^*, d_{\min}, d_{\max})$, guidance schedule $\mathcal{T}_g \subset \{1,\dots,T\}$

\FOR{$t=T$ {\bfseries down to} $1$}
  \STATE $\hat{\epsilon} \leftarrow \epsilon_\theta(\mathbf{x}_t,\sigma_t)$
  \STATE $\mathbf{x}_{t-1} \leftarrow \textsc{DenoiseUpdate}(\mathbf{x}_t,\hat{\epsilon},\sigma_t)$
  \IF{ $t \in \mathcal{T}_g$}

    \STATE $\Delta \leftarrow \textsc{\name}(\mathbf{x}_{t-1}; \ell^*, d_{\min}, d_{\max})$ \COMMENT{Algorithm~\ref{alg:macroguide}}
    \STATE $\mathbf{x}_{t-1} \leftarrow \mathbf{x}_{t-1} + \lambda \cdot \Delta$ 
  \ENDIF
\ENDFOR
\STATE {\bfseries return} $\mathbf{x}_0$
\end{algorithmic}
\end{algorithm}

\begin{algorithm}[h]
   \caption{\name}
   \label{alg:macroguide}
\begin{algorithmic}
   \STATE {\bfseries Input:} Coordinates $\mathbf{X}$, $\ell^*, d_{\min}, d_{\max}$
   
   \STATE $\Pi \leftarrow \text{VietorisRips}(\mathbf{X})$
   
   \STATE $\mathbf{g}_{H_1} \leftarrow \text{\textsc{$H_1$Guidance}}(\mathbf{X}, \Pi, \ell^*, d_{\min}, d_{\max})$ \COMMENT{Algorithm \ref{alg:h1_guidance}}

   \STATE $\mathbf{g}_{H_0} \leftarrow {\textsc{$H_0$Guidance}}(\mathbf{X}, \Pi, \ell^*)$ \COMMENT{Algorithm \ref{alg:h0_guidance}}
   
   \STATE $\Delta \leftarrow -(\mathbf{g}_{H_1} + \mathbf{g}_{H_0})$
   \STATE {\bfseries Return} $\Delta$
\end{algorithmic}
\end{algorithm}

\begin{algorithm}[t!]
   \caption{\textsc{$H_1$Guidance}}
   \label{alg:h1_guidance}
\begin{algorithmic}
   \STATE {\bfseries Input:} coordinates $X$, persistence info $\Pi$, thresholds $\ell^\star, d_{\min}, d_{\max}$
   \STATE Initialize objectives $F^{H_1}_{\mathrm{birth}} \leftarrow 0, \quad F^{H_1}_{\mathrm{death}} \leftarrow 0$
   \FOR{each sample $k$ in batch}
       \STATE Extract $H_1$ diagram pairs $P$ from $\Pi_k$
       \IF{$P$ is empty}
           \STATE {\bfseries continue}
       \ENDIF
       \STATE \COMMENT{Identify the $H_1$ component (ring) that dies last}
       \STATE Find index $i^\star = \arg\max_i \text{death}(P_i)$
       \STATE Retrieve critical vertices for feature $i^\star$:
       \STATE \quad $(u_b, v_b)$ defining birth time $b^{(1)}_{i^\star}$
       \STATE \quad $(u_d, v_d)$ defining death time $d^{(1)}_{i^\star}$
       
       \STATE {\bfseries 1. Cycle Connectivity ($H_1$ birth):}
       \STATE $b^{(1)}_{i^\star} \leftarrow \|x_{u_b} - x_{v_b}\|$
       \STATE $F^{H_1}_{\mathrm{birth}} \leftarrow F^{H_1}_{\mathrm{birth}} + \mathrm{ReLU}(b^{(1)}_{i^\star} - \ell^\star)$
       
       \STATE {\bfseries 2. Cycle Size ($H_1$ death):}
       \STATE $d^{(1)}_{i^\star} \leftarrow \|x_{u_d} - x_{v_d}\|$
       \STATE \COMMENT{Penalize if outside target interval $[d_{\min}, d_{\max}]$}
       \IF{$d^{(1)}_{i^\star} < d_{\min}$}
           \STATE $F^{H_1}_{\mathrm{death}} \leftarrow F^{H_1}_{\mathrm{death}} + (d_{\min} - d^{(1)}_{i^\star})^2$
       \ELSIF{$d^{(1)}_{i^\star} > d_{\max}$}
           \STATE $F^{H_1}_{\mathrm{death}} \leftarrow F^{H_1}_{\mathrm{death}} + (d^{(1)}_{i^\star} - d_{\max})^2$
       \ENDIF
   \ENDFOR
   \STATE $\mathcal{F}_{\mathrm{TDA}}^{(1)} \leftarrow F^{H_1}_{\mathrm{birth}} + F^{H_1}_{\mathrm{death}}$
   \STATE Compute gradients $\mathbf{g}_{H_1} \leftarrow \nabla_{X} \mathcal{F}_{\mathrm{TDA}}^{(1)}$
   \STATE {\bfseries Return} $\mathbf{g}_{H_1}$
\end{algorithmic}
\end{algorithm}

\begin{algorithm}[h!]
   \caption{\textsc{$H_0$Guidance}}
   \label{alg:h0_guidance}
\begin{algorithmic}
   \STATE {\bfseries Input:} coordinates $X$, persistence info $\Pi$, threshold $\ell^\star$
   \STATE Initialize connectivity objective $F^{H_0}_{\mathrm{death}} \leftarrow 0$
   \FOR{each sample pair $(X^{(k)}, \pi)$ in $(X, \Pi)$}
   \STATE Extract finite death times $\mathcal{D}_0(X^{(k)})$ and corresponding pairs $P$ from $\pi$
   \STATE Identify active edges $E \leftarrow \{ (u,v) \in P \mid \text{death}(u,v) > \ell^\star \}$
   \IF{$E$ is empty}
   \STATE {\bfseries continue}
   \ENDIF
   \STATE $\mathbf{c} \leftarrow \frac{1}{N} \sum_j x_j$ \COMMENT{Calculate centroid of molecule $k$}
   \FOR{each critical edge $(u, v)$ in $E$}
   \STATE \COMMENT{Apply masked gradient: move only the atom furthest from centroid}
   \IF{$\|x_u - \mathbf{c}\| > \|x_v - \mathbf{c}\|$}
   \STATE $u \text{ is active } (m_u=1), \quad v \text{ is masked } (m_v=0)$
   \STATE $x_{active} \leftarrow x_u, \quad x_{masked} \leftarrow \text{stop\_gradient}(x_v)$
   \ELSE
   \STATE $v \text{ is active } (m_v=1), \quad u \text{ is masked } (m_u=0)$
   \STATE $x_{active} \leftarrow x_v, \quad x_{masked} \leftarrow \text{stop\_gradient}(x_u)$
   \ENDIF
   \STATE $d^{(0)}_{j} \leftarrow \|x_{active} - x_{masked}\|$
   \STATE $F^{H_0}_{\mathrm{death}} \leftarrow F^{H_0}_{\mathrm{death}} + \big(\mathrm{ReLU}(d^{(0)}_{j} - \ell^\star)\big)^2$
   \ENDFOR
   \ENDFOR
   \STATE $\mathbf{g}_{H_0} \leftarrow \nabla_{X} F^{H_0}_{\mathrm{death}}$
   \STATE {\bfseries Return} $\mathbf{g}_{H_0}$
\end{algorithmic}
\end{algorithm}

\FloatBarrier
\newpage
\section{Experimental Details}
\label{app:exp}

\subsection{MolDiff Training Dataset}
\label{app:geomdrugs}

The training split used for MolDiff contains 231,521 data points from GEOM-Drugs, out of which only 323 molecules had a ring of at least 12 atoms \cite{axelrod2022geom, peng2023moldiff}.

\subsection{Protein Conditioning with MolSnapper}
\label{app:protein}

Following the original setup, we selected a subset of atoms from the reference ligand to define a pharmacophore and used the provided protein pocket corresponding to PDB ID 1H00 \cite{ziv2025molsnapper, beattie2003cyclin}. We included all nitrogen atoms from the reference ligand, as these are the most likely to participate in specific interactions with the protein pocket and thus provide a strong conditioning signal. In total, the pharmacophore consists of five atoms: three nitrogens and two carbons.

\subsection{Computational Resources}
All experiments were conducted on NVIDIA A10 GPUs, except for finetuning which was performed on NVIDIA H100 GPUs. 

\section{Additional Baselines}
\label{app:baselines}
This section provides descriptions of the baselines used to benchmark \name in Tables \ref{tab:all_models_unconditional} and \ref{tab:all_models_conditional}.

\subsection{Finetuning}
\label{app:finetuning}
We finetune MolDiff using the publicly released pretrained checkpoint and continue training on a macrocycle-specific dataset~\cite{peng2023moldiff}. The finetuning procedure follows the original training protocol, and was run for 50{,}000 iterations. Due to the increased size of macrocyclic molecules relative to those in the original training set, which leads to higher memory consumption during training, we reduce the batch size from 256 to 64 to fit within hardware constraints.

The macrocyclic dataset is derived from Macrocycle-DB \cite{jiang2026macrocycle}, which contains 45{,}525 macrocyclic compounds with resolved three-dimensional structures. We follow the data processing pipeline of MolDiff~\cite{peng2023moldiff} and apply the same filtering criteria, in particular removing molecules containing atom types not present in the pretrained model. After filtering, 40{,}496 molecules remain. We split the resulting dataset into training and validation sets using an 85:15 ratio, yielding 34{,}422 training molecules and 6{,}074 validation molecules, which are used to monitor and control the finetuning process.

Because MolSnapper uses similar underlying architecture and checkpoints as MolDiff, it is possible to use the same finetuned denoising network in both models.

\subsection{Naive Topological Guidance}
\label{app:naive}\paragraph{Naive geometric cycle guidance.}
We introduce a simple geometric guidance term that biases generated molecular point clouds toward forming a single macrocyclic structure. Importantly, we do \emph{not} assume that the atoms are ordered along a ring. Instead, given a molecule with at least $n$ atoms, we select an arbitrary subset of $n$ atoms (e.g., the first $n$ atoms in the representation) and impose a virtual cyclic ordering solely for the purpose of defining the guidance objective. This ordering has no chemical meaning and is used only to provide a differentiable proxy for cyclic topology.

Let
\[
X = (x_0, \dots, x_{n-1}) \in \mathbb{R}^{n \times 3}
\]
denote the coordinates of the selected atoms. We define two complementary geometric constraints: a local adjacency constraint encouraging ring closure, and a global separation constraint preventing degenerate, collapsed configurations.

\paragraph{Local cyclic adjacency.}
We enforce approximate uniformity of distances between consecutive atoms in the imposed cyclic order. Defining
\[
\ell_i = \lVert x_i - x_{i+1} \rVert_2, \qquad x_n \equiv x_0,
\]
we penalize distances that fall outside a target interval $[\ell_{\min}, \ell_{\max}]$. This is implemented using a hinge-squared penalty:
\[
\mathcal{L}_{\text{edge}}(X)
=
\sum_{i=0}^{n-1}
\Bigl[
\max(0, \ell_{\min} - \ell_i)^2
+
\max(0, \ell_i - \ell_{\max})^2
\Bigr].
\]
In our experiments, we set $\ell_{\min} = 1.0$ \AA \ and $\ell_{\max} = 2.0$ \AA. This term encourages the selected atoms to form a closed loop with approximately consistent edge lengths, without requiring exact bond formation or atom ordering.

\paragraph{Global anti-collapse constraint.}
To avoid degenerate solutions in which the loop collapses or folds onto itself, we introduce a coarse global constraint based on opposite atoms in the imposed cycle. For even $n$, we define
\[
o_i = \lVert x_i - x_{i + n/2} \rVert_2,
\]
and penalize opposite pairs that are closer than a minimum separation $o_{\min}$:
\[
\mathcal{L}_{\text{opp}}(X)
=
\sum_{i=0}^{n-1}
\max(0, o_{\min} - o_i)^2.
\]
We set $o_{\min} = 3.0$ \AA \ in all experiments. This term encourages a non-degenerate ring geometry with a finite diameter, discouraging self-intersections and collapsed configurations.

\paragraph{Overall guidance objective.}
The guidance is then given by the sum of $\mathcal{L}_{\text{edge}}$ and $\mathcal{L}_{\text{opp}}$.
While this objective does not explicitly compute topological invariants such as persistent homology, it provides a computationally inexpensive and fully differentiable surrogate that promotes cycle-like geometries. In particular, it biases generation toward configurations exhibiting both local loop closure and global ring structure, despite the absence of any predefined atom ordering or explicit topological constraints.

\subsection{Solid-torus Noise Initialization}
\label{app:torus}

We replace standard Gaussian initialization with a prior distribution supported on a solid torus, defined by major radius $R=2.5$\,\AA\ and tube radius $r_{\text{max}}=1.5$\,\AA. The coordinates $\mathbf{x} \in \mathbb{R}^3$ are sampled according to:
\begin{equation*}
\begin{aligned}
    x &= (R + \rho \cos \phi) \cos \theta, \\
    y &= (R + \rho \cos \phi) \sin \theta, \\
    z &= \rho \sin \phi,
\end{aligned}
\end{equation*}
where $\theta, \phi \sim \mathcal{U}[0, 2\pi)$ are uniform angular coordinates. The radial offset $\rho$ determines the deviation from the torus core and is sampled as $\rho = r_{\text{max}} \cdot \xi$, where $\xi \sim \text{Beta}(0.5, 2.0)$. This choice of hyperparameters ($\alpha < 1$) concentrates the probability mass near the central filament ($\rho \to 0$) while maintaining support throughout the tube.

\paragraph{Prior mismatch.} Note that this initialization strategy fundamentally violates the stationary assumptions of the standard diffusion reverse process \cite{song2020score}, which is trained to map from an isotropic Gaussian prior. The negative results of Tables~\ref{tab:all_models_unconditional} and \ref{tab:all_models_conditional} are therefore not surprising.

\clearpage
\section{Additional Visualizations}
\subsection{Unconditional Generation}
\label{app:vis_unc}
\Cref{fig:vis_unc} displays molecules generated by MolDiff+\name, for the parameter values described in Section~\ref{sec:unconditional}.
\begin{figure}[htb]
\centering
\setlength{\tabcolsep}{3pt}
\renewcommand{\arraystretch}{0} 

\newcommand{\cellW}{0.23\textwidth}
\newcommand{\cellH}{0.16\textheight} 

\begin{tabular}{@{} *{4}{>{\centering\arraybackslash}p{\cellW}} @{}}
\adjustbox{max width=\cellW, max height=\cellH, valign=c}{\includegraphics{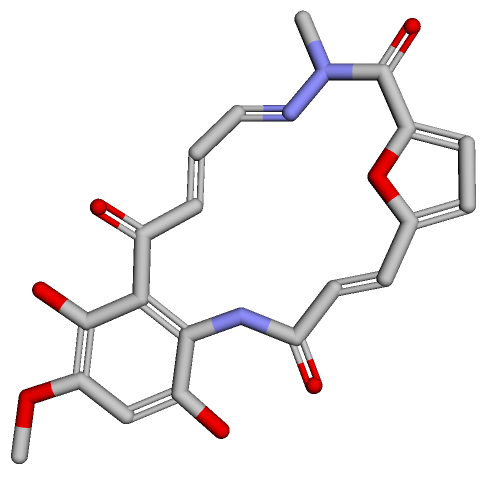}} &
\adjustbox{max width=\cellW, max height=\cellH, valign=c}{\includegraphics{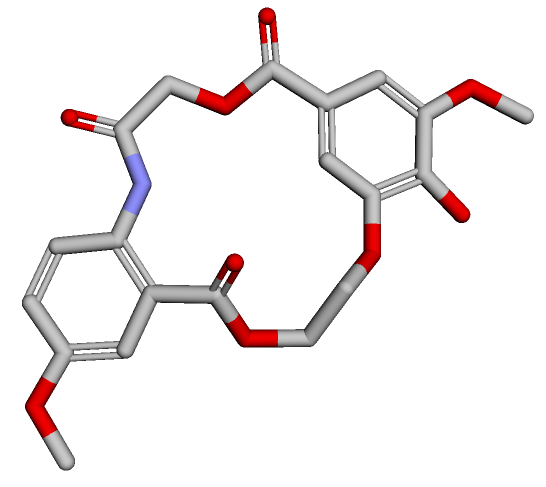}} &
\adjustbox{max width=\cellW, max height=\cellH, valign=c}{\includegraphics{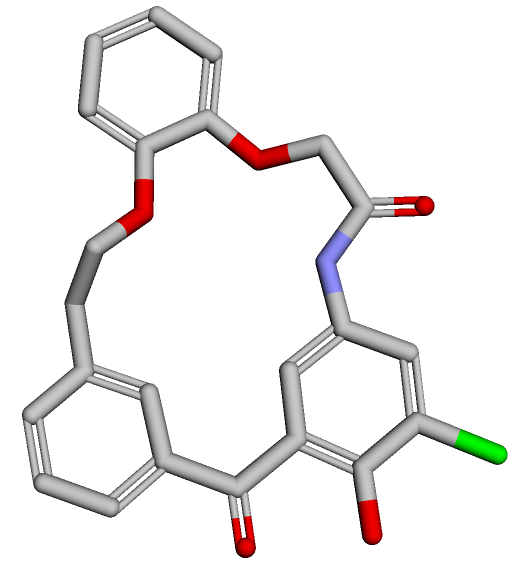}} &
\adjustbox{max width=\cellW, max height=\cellH, valign=c}{\includegraphics{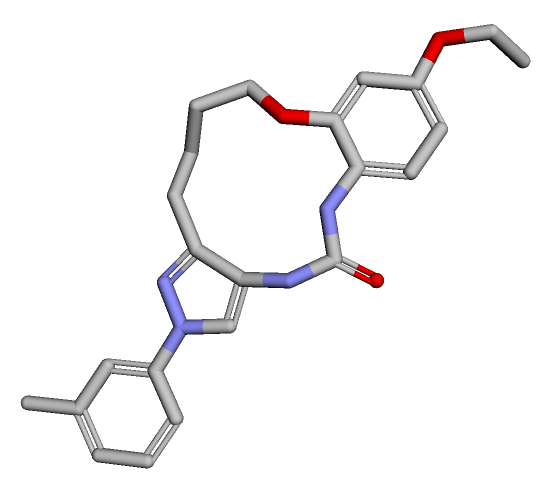}} \\
\adjustbox{max width=\cellW, max height=\cellH, valign=c}{\includegraphics{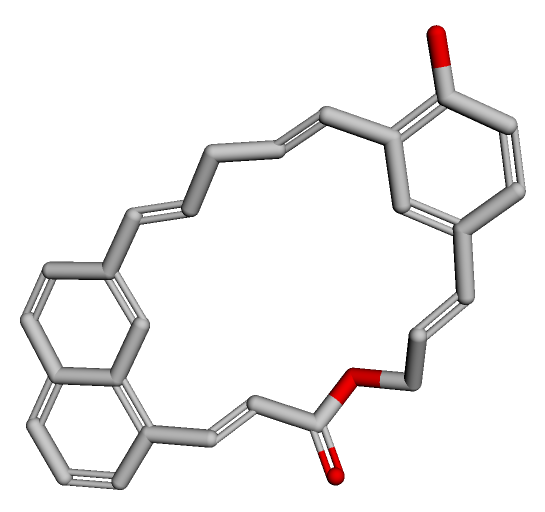}} &
\adjustbox{max width=\cellW, max height=\cellH, valign=c}{\includegraphics{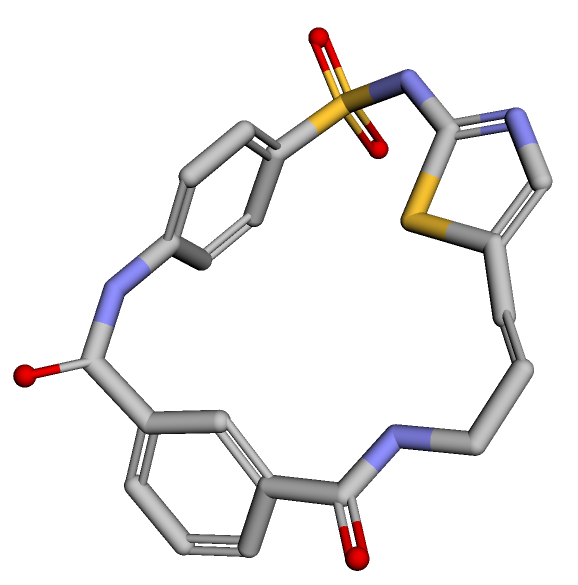}} &
\adjustbox{max width=\cellW, max height=\cellH, valign=c}{\includegraphics{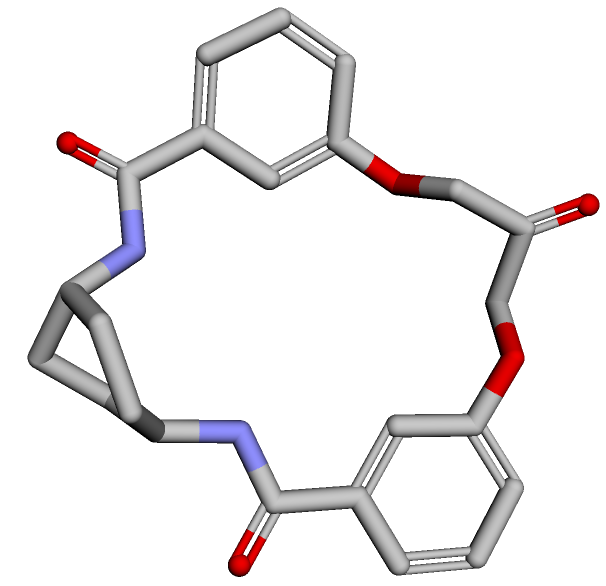}} &
\adjustbox{max width=\cellW, max height=\cellH, valign=c}{\includegraphics{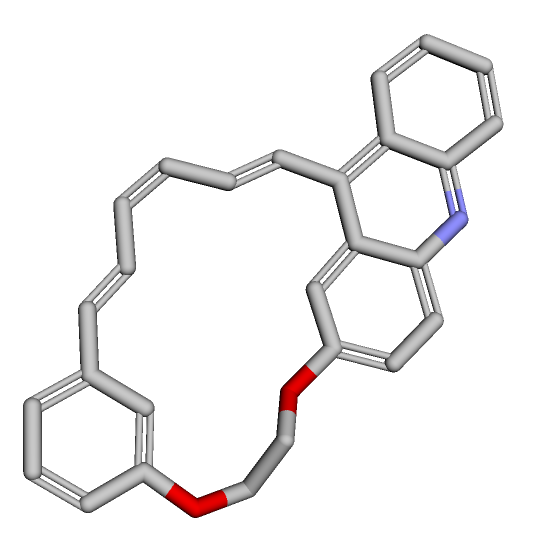}} \\
\adjustbox{max width=\cellW, max height=\cellH, valign=c}{\includegraphics{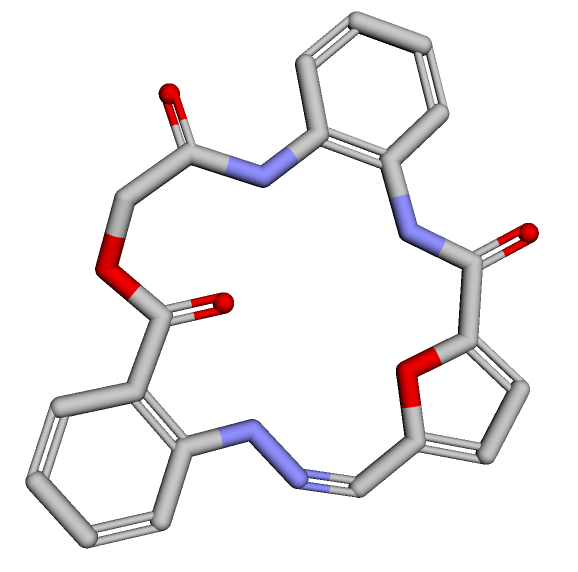}} &
\adjustbox{max width=\cellW, max height=\cellH, valign=c}{\includegraphics{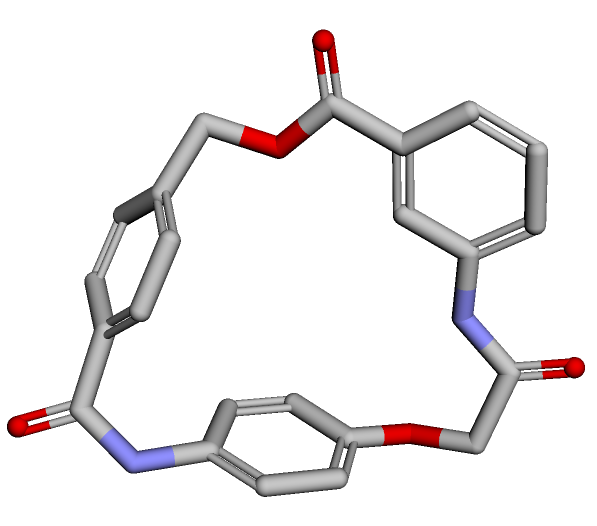}} &
\adjustbox{max width=\cellW, max height=\cellH, valign=c}{\includegraphics{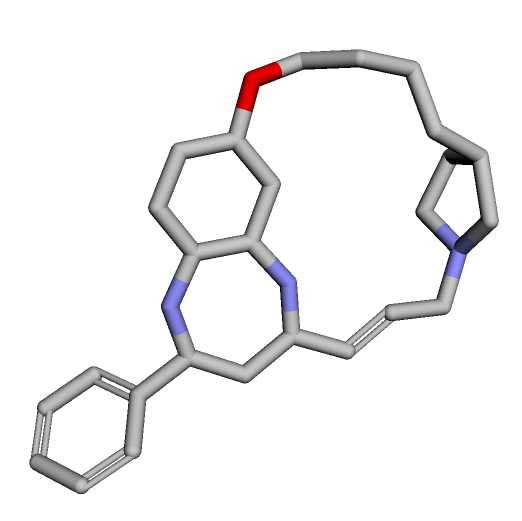}} &
\adjustbox{max width=\cellW, max height=\cellH, valign=c}{\includegraphics{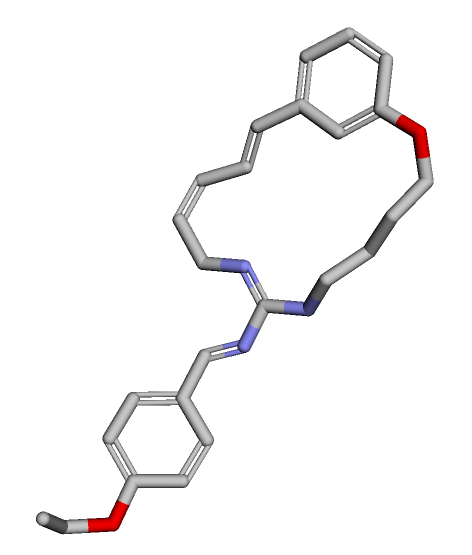}} \\
\adjustbox{max width=\cellW, max height=\cellH, valign=c}{\includegraphics{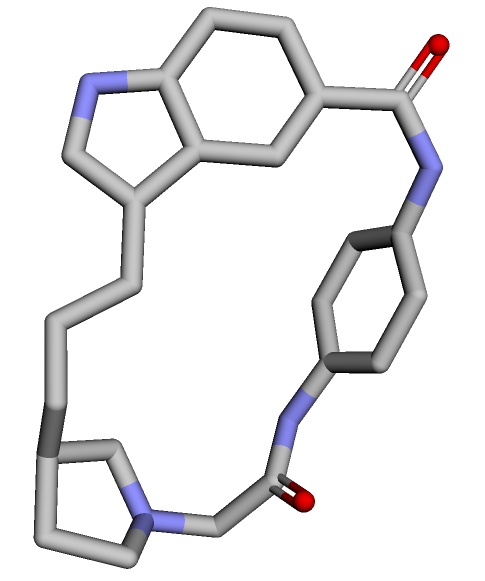}} &
\adjustbox{max width=\cellW, max height=\cellH, valign=c}{\includegraphics{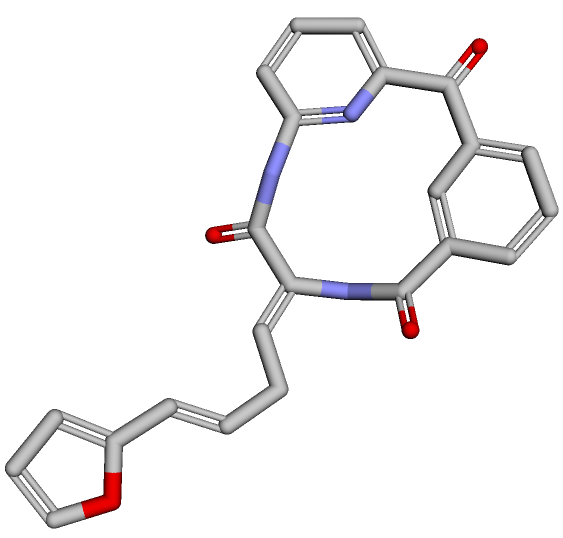}} &
\adjustbox{max width=\cellW, max height=\cellH, valign=c}{\includegraphics{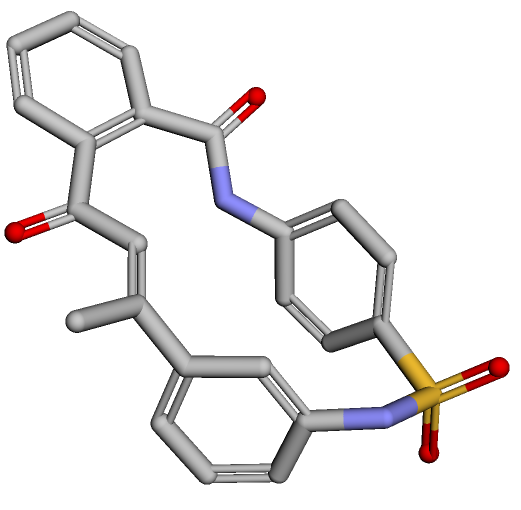}} &
\adjustbox{max width=\cellW, max height=\cellH, valign=c}{\includegraphics{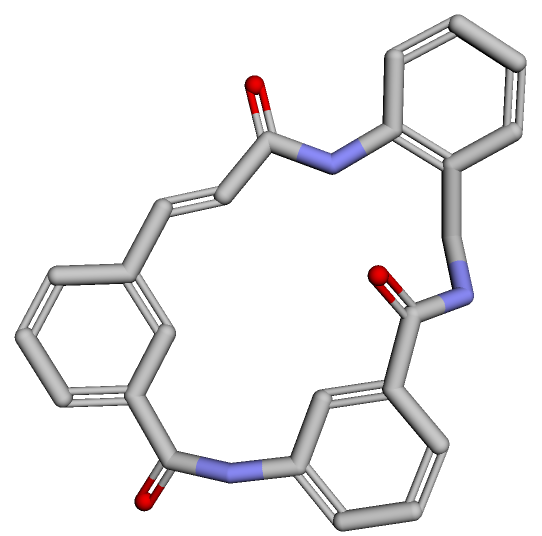}} \\

\end{tabular}

\caption{\textbf{Visualizations of generated molecules in the unconditional setting.}}
\label{fig:vis_unc}
\end{figure}

\clearpage
\subsection{Conditional Generation}
\label{app:vis_cond}
\Cref{fig:vis_cond} displays molecules generated by MolSnapper+\name, for the parameter values described in Section~\ref{sec:conditional}.

\begin{figure}[ht]
    \centering
    
    \def\boxwidth{5.2cm} 
    \def\boxheight{3cm} 
    \def\gapsize{0.4cm}

    \newcommand{\cropimg}[1]{%
        \sbox0{\includegraphics{#1}}%
        \pgfmathsetmacro{\imgRatio}{\wd0/\ht0}%
        \pgfmathsetmacro{\boxRatio}{\boxwidth/\boxheight}%
        \begin{tikzpicture}[baseline={(0,0)}]
            \clip (0,0) rectangle (\boxwidth, \boxheight);
            
            \node[anchor=center, inner sep=0pt, outer sep=0pt] at (0.5*\boxwidth, 0.5*\boxheight) {%
                \ifdim \imgRatio pt > \boxRatio pt
                    \includegraphics[height=\boxheight]{#1}%
                \else
                    \includegraphics[width=\boxwidth]{#1}%
                \fi
            };
            
            \draw[line width=1pt] (0,0) rectangle (\boxwidth, \boxheight);
        \end{tikzpicture}%
    }

    \cropimg{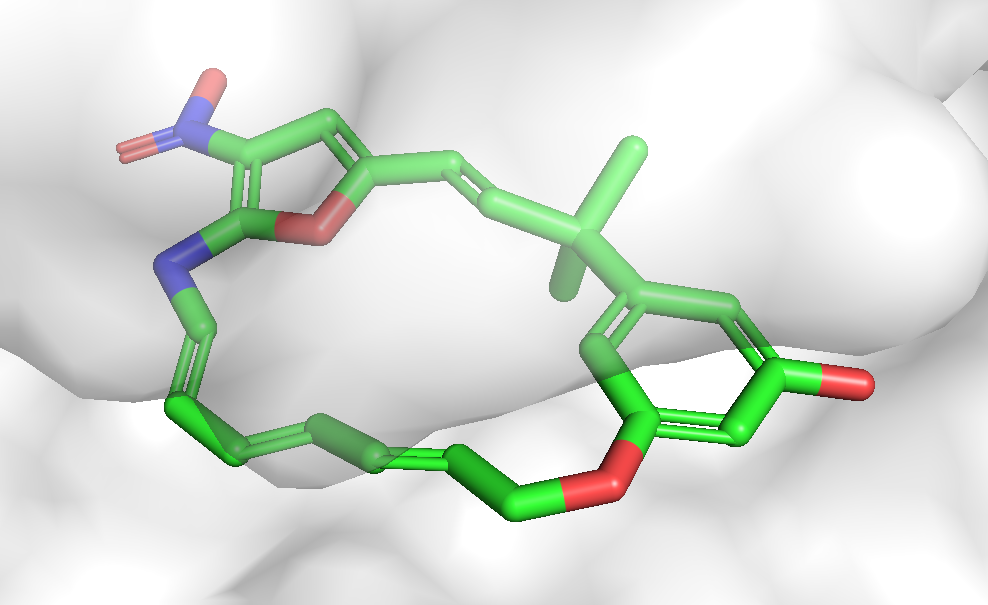}\hspace{\gapsize}%
    \cropimg{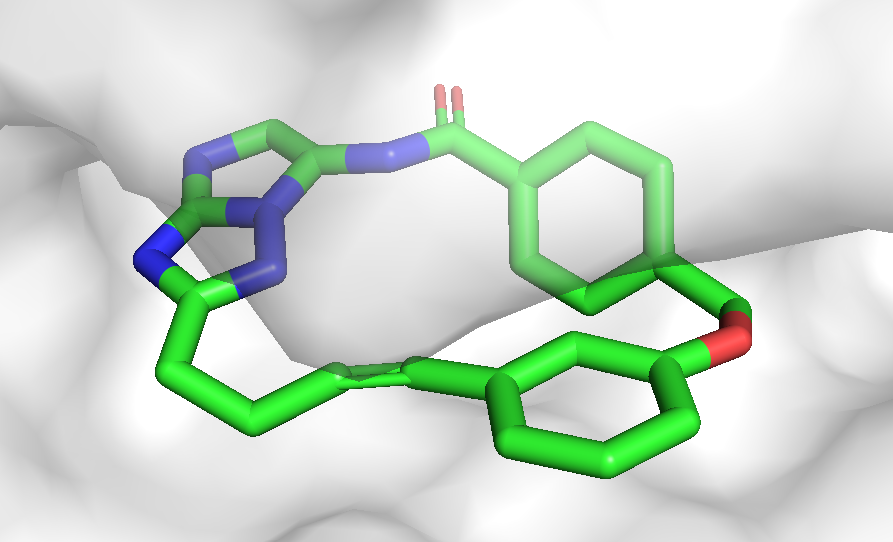}\hspace{\gapsize}%
    \cropimg{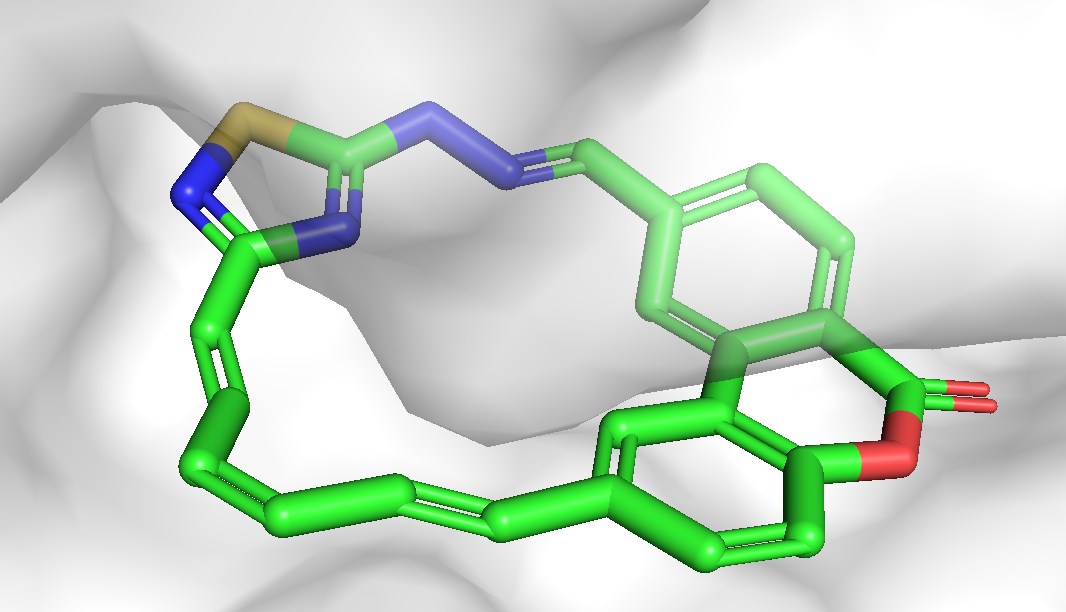}
    
    \par\vspace{\gapsize}

     \cropimg{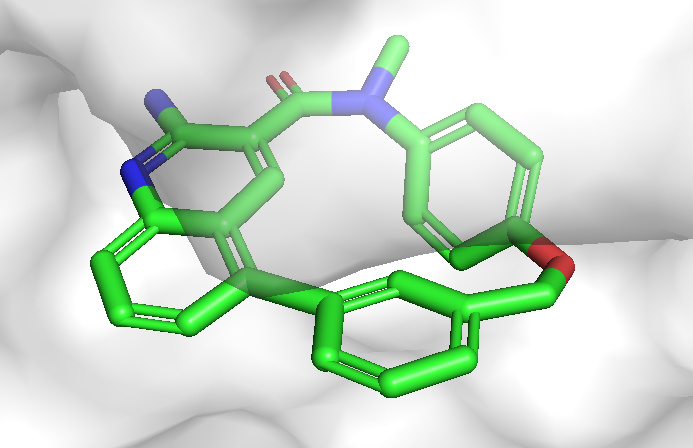}\hspace{\gapsize}%
    \cropimg{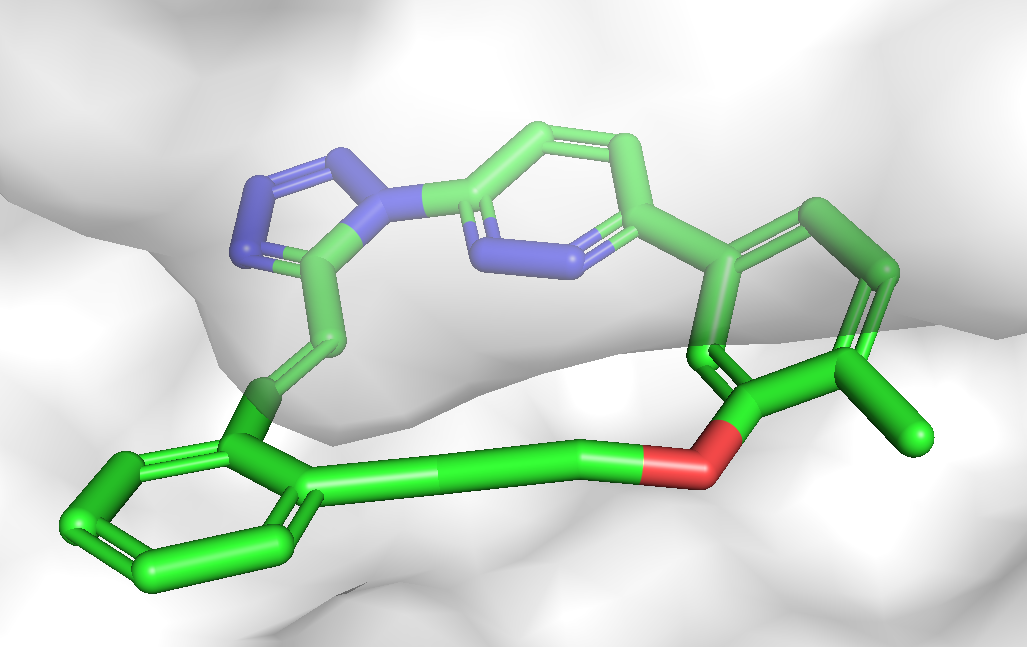}\hspace{\gapsize}%
    \cropimg{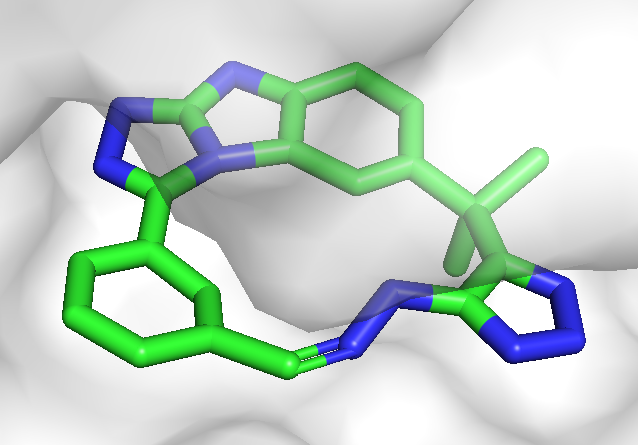}
    
    \par\vspace{\gapsize}
     \cropimg{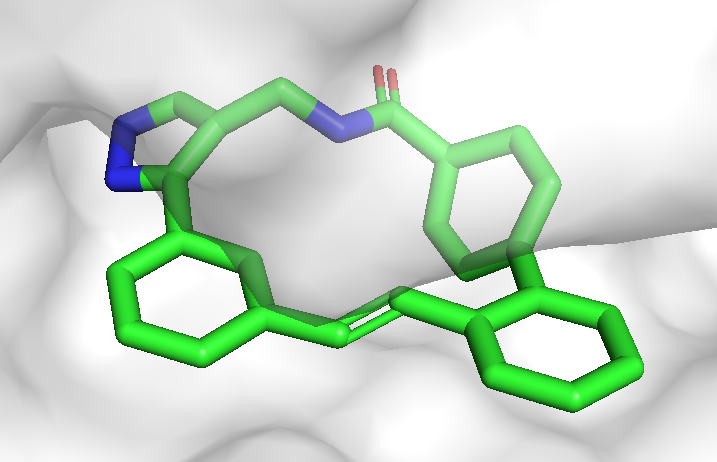}\hspace{\gapsize}%
    \cropimg{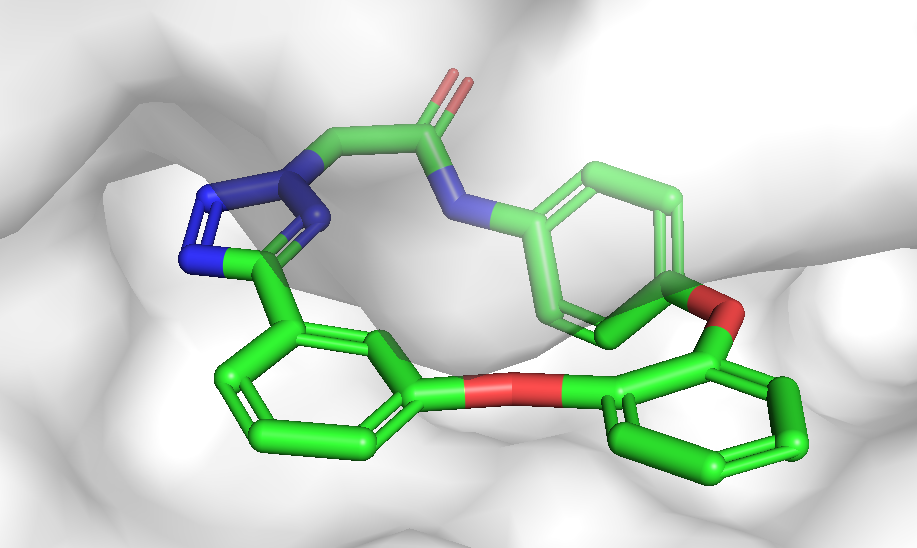}\hspace{\gapsize}%
    \cropimg{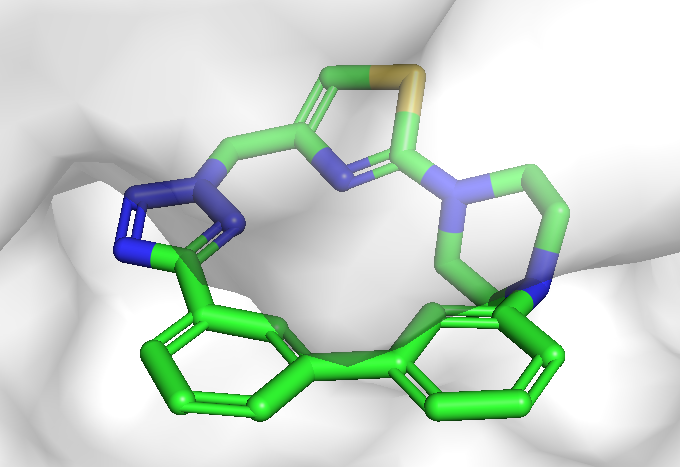}
    
    \par\vspace{\gapsize}
     \cropimg{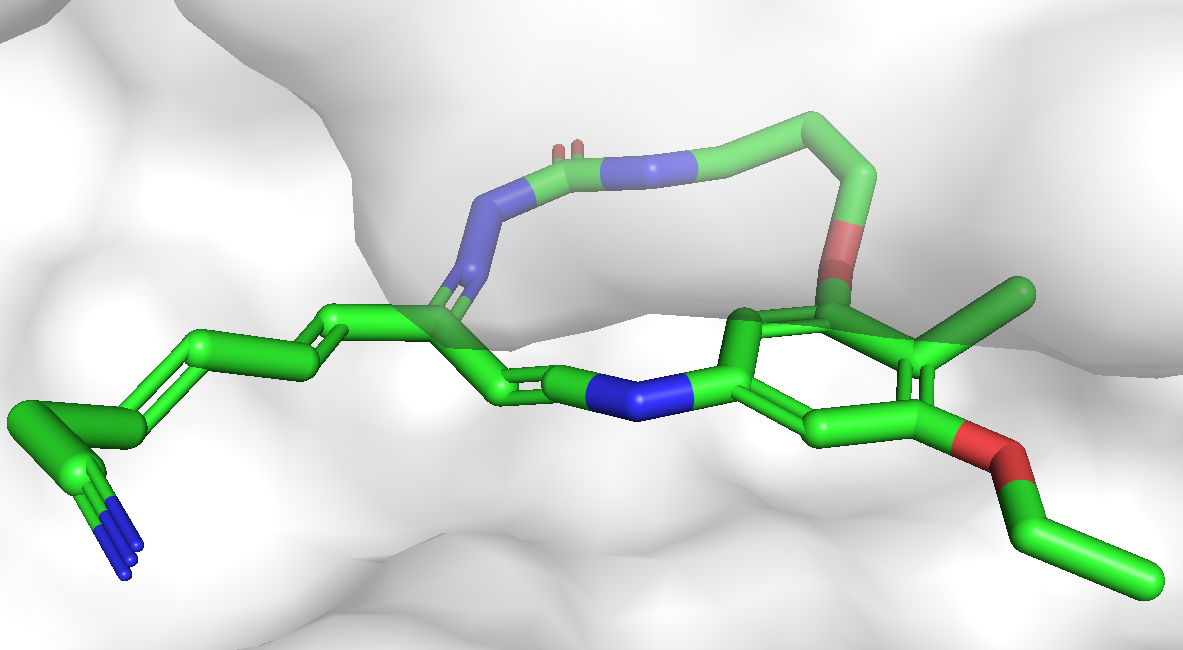}\hspace{\gapsize}%
    \cropimg{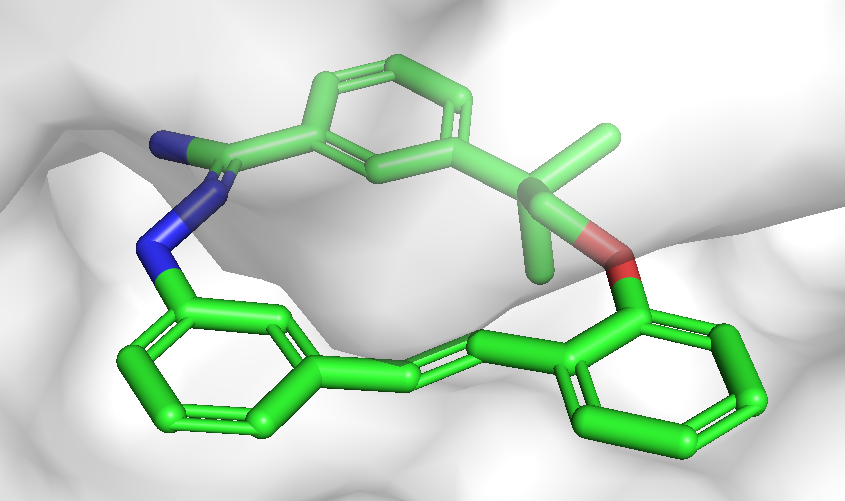}\hspace{\gapsize}%
    \cropimg{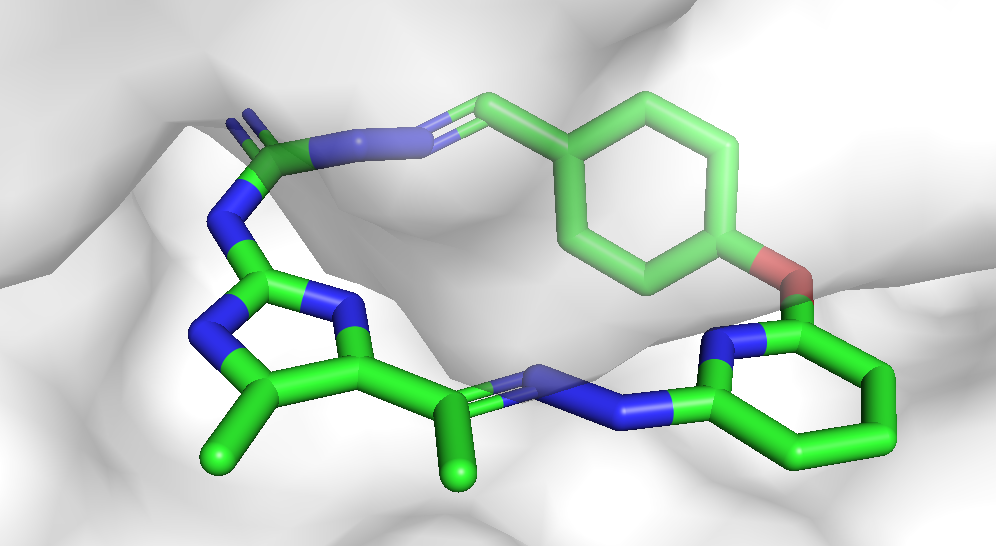}

    \caption{\textbf{Visualizations of generated molecules in the protein-conditioned setting.} The top parts of molecules are less visible because they are obstructed by a semi-transparent protein pocket.}
    \label{fig:vis_cond}

\end{figure}
\clearpage

\section{Additional Results}
\label{app:results}

\subsection{Standard Deviations}
\label{app:stds}

In Table~\ref{tab:std_macro} we report standard deviations for our experiments. The metric results were computed on 5 sets of 1000 molecules.

\begin{table}[h!]
\begin{center}
\begin{small}
\caption{\textbf{Standard deviations of unconditional and protein-conditioned macrocycle generation performance.} The last 6 values do not appear for MolDiff as the metrics are defined only in the protein-conditioning setting. Variances for unguided MolDiff and MolSnapper are not reported as generating multiple sets of 1000 macrocycles proved computationally prohibitive due to the very low baseline macrocycle success rates.}
\begin{tabular}{lcc}
\toprule
    Metrics ($\uparrow$; [0-1])  & \makecell[c]{MolDiff\\+\name}& \makecell[c]{MolSnapper\\+\name}\\ \hline
    Validity     &$\pm{.003}$&$\pm{.008}$\\ 
    Connectivity  & $\pm{.001}$ &$\pm{.000}$\\
    Successfulness  & $\pm{.003}$ &$\pm{.008}$\\
\hline
    Out of successful:&   &\\
        \hspace{0.2em}Macrocycles& $\pm{.002}$&$\pm{.003}$\\
\hline
    Out of macrocycles:& & \\
        \hspace{0.2em}Diversity&$\pm{.005}$&$\pm{.001}$\\
        \hspace{0.2em}Novelty& $\pm{.000}$&$\pm{.001}$\\
        \hspace{0.2em}Uniqueness& $\pm{.001}$&$\pm{.001}$\\
        \hspace{0.2em}All PoseBusters tests&  -&$\pm{.015}$\\
            \hspace{1.2em}Ligand PoseBusters& $\pm{.013}$&$\pm{.018}$\\
                \hspace{2em}Bond lengths    & $\pm{.004}$&$\pm{.010}$\\ 
                \hspace{2em}Bond angles    & $\pm{.002}$&$\pm{.012}$\\ 
                \hspace{2em}Internal steric clash    & $\pm{.013}$&$\pm{.017}$\\ 
                \hspace{2em}Aromatic ring flatness& $\pm{.001}$ &$\pm{.002}$\\ 
                \hspace{2em}Non-ar. ring non-flatness & $\pm{.001}$&$\pm{.001}$\\ 
                \hspace{2em}Double bond flatness& $\pm{.004}$&$\pm{.002}$\\ 
                \hspace{2em}Internal energy    & $\pm{.004}$&$\pm{.010}$\\ 
            \hspace{1.2em}Protein PoseBusters&  - &$\pm{.008}$\\
                \hspace{2em}Protein-ligand max. distance &  -&$\pm{.000}$\\
                \hspace{2em}Min. distance to protein& -&$\pm{.008}$\\
                \hspace{2em}Volume overlap with protein&  -&$\pm{.000}$\\
        \hspace{0.2em}Pharmacophore satisfaction &  -&$\pm{.014}$ \\
        \hspace{0.2em}Macrocycle Lipinski & -&$\pm{.017}$\\
\bottomrule

\end{tabular}
\label{tab:std_macro}
\end{small}
\end{center}
\vskip -0.1in
\end{table}
\subsection{Ablations of Guidance Terms}
\label{app:ablations}
We perform ablations to show the influence of $H_1$ birth optimization (closing the cycle) and $H_0$ death optimization (keeping the point cloud connected) on macrocycle generation, both in the unconditional and conditional setting. We keep the $H_1$ death optimization unchanged, as it is the core of the method. We generate sets of 1000 molecules with MolDiff+\name and MolSnapper+\name, and report results in Table~\ref{tab:ablations_macro} and Table~\ref{tab:ablations_prot} respectively.  

We observe a drop in macrocycle generation when the birth of the $H_1$ component is not constrained as this may lead to the creation of non-closed cycles. On the other hand, the lack of $H_0$ death term, while not detrimental to macrocycle generation performance or macrocycle quality, results in a drop in connected molecules and hence a drop in successfulness (Table~\ref{tab:ablations_macro}).

In the protein conditioning setting, the lack of $H_1$ birth term produces a similar drop in performance. However, $H_0$ death optimization can be safely removed, probably because of the confined volume of the protein pocket (Table~\ref{tab:ablations_prot}).

\begin{table}[h!]
\begin{center}
\begin{small}
\caption{\textbf{Ablations of different guidance terms in the unconditional setting.} The metrics most affected by the ablation are in bold. Some metrics are not reported because not enough successful macrocycles were produced. Results were obtained from sets of 1000 molecules.}

\begin{tabular}{lcccc}
\toprule
    Metrics ($\uparrow$; [0-1])  & \makecell[c]{MolDiff\\+\name\\(ours)} &\makecell[c]{\makecell[c]{No $H_1$ birth}}& \makecell[c]{\makecell[c]{No $H_0$}} &\makecell[c]{No $H_1$ birth and no $H_0$}\\ \hline
    Validity     &  0.989&0.990&0.990&0.990\\ 
    Connectivity  &  0.999&0.954&\textbf{0.863}&\textbf{0.022}\\
    Successfulness  &  0.988&0.944&0.854&0.022\\
\hline
    Out of successful:&    && &\\
        \rowcolor[gray]{.9}\hspace{1em}\textbf{Macrocycles}&  0.997&\textbf{0.075}&0.995&\textbf{0.091}\\
\hline
    Out of macrocycles:&  &&  &\\
        \hspace{1em}Diversity& 0.771&0.735&0.774&-\\
        \hspace{1em}Novelty&  1.000&1.000&1.000&-\\
        \hspace{1em}Uniqueness&  1.000&1.000&1.000&-\\
        \rowcolor{gray!20}\hspace{1em}\textbf{All PoseBusters tests}&  0.805&0.732&0.840&-\\ 
            \hspace{2em}Bond lengths    &  0.990&1.000&0.994&-\\ 
            \hspace{2em}Bond angles    &  0.987&0.761&0.985&-\\ 
            \hspace{2em}Internal steric clash    &  0.844&0.972&0.893&-\\ 
            \hspace{2em}Aromatic ring flatness&  0.999&1.000&1.000&-\\ 
            \hspace{2em}Non-ar. ring non-flatness&  0.999&1.000&1.000&-\\ 
            \hspace{2em}Double bond flatness&  0.989&1.000&0.968&-\\ 
            \hspace{2em}Internal energy    &  0.984&1.000&0.991&-\\ 
\bottomrule
    
\end{tabular}
\label{tab:ablations_macro}
\end{small}
\end{center}
\vskip -0.1in
\end{table}

\begin{table}[h!]
\begin{center}
\begin{small}
\caption{\textbf{Ablations of different guidance terms in the conditional setting.} The metrics most affected by the ablation are in bold. Some metrics are not reported because not enough successful macrocycles were produced. Results were obtained from sets of 1000 molecules.}
\begin{tabular}{lcccc}
\toprule
    Metrics ($\uparrow$; [0-1])  & \makecell[c]{MolSnapper\\+\name\\(ours)} & \makecell[c]{\makecell[c]{No $H_1$ birth}}& \makecell[c]{\makecell[c]{No $H_0$}} &\makecell[c]{No $H_1$ birth and no $H_0$}\\ \hline
    Validity     &0.925 & 0.903& 0.921&0.929\\ 
    Connectivity  &1.000 & 1.000& 1.000&1.000\\
    Successfulness  &0.925 & 0.903& 0.921&0.929\\
\hline
    Out of successful:& & & &\\
        \rowcolor[gray]{.9}\hspace{0.2em}\textbf{Macrocycles}&0.995 & \textbf{0.078}& 0.989&\textbf{0.016}\\
\hline
    Out of macrocycles:&  & & &\\
        \hspace{0.2em}Diversity&0.712 & 0.734& 0.711&-\\
        \hspace{0.2em}Novelty&1.000 & 1.000& 1.000&-\\
        \hspace{0.2em}Uniqueness&1.000 & 1.000& 0.999&-\\
        \rowcolor{gray!20}\hspace{0.2em}\textbf{All PoseBusters tests}&0.575 & 0.414& 0.591&-\\
            \hspace{1.2em}Ligand PoseBusters&0.626 & 0.471& 0.654&-\\
                \hspace{2em}Bond lengths    &0.860 & 0.829& 0.923&-\\ 
                \hspace{2em}Bond angles    &0.888 & 0.643& 0.907&-\\ 
                \hspace{2em}Internal steric clash    &0.854 & 0.914& 0.854&-\\ 
                \hspace{2em}Aromatic ring flatness&0.992 & 0.986& 0.993&-\\ 
                \hspace{2em}Non-ar. ring non-flatness &0.999 & 1.000& 1.000&-\\ 
                \hspace{2em}Double bond flatness&0.993 & 0.986& 0.995&-\\ 
                \hspace{2em}Internal energy    &0.921 & 0.929& 0.926&-\\ 
            \hspace{1.2em}Protein PoseBusters&0.911 & 0.843& 0.902&-\\
                \hspace{2em}Protein-ligand max. distance &1.000 & 1.000& 1.000&-\\
                \hspace{2em}Min. distance to protein&0.907 & 0.843& 0.902&-\\
                \hspace{2em}Volume overlap with protein&1.000 & 1.000& 1.000&-\\
        \hspace{0.2em}Pharmacophore satisfaction &0.789  & 0.829& 0.804&-\\
        \hspace{0.2em}Macrocycle Lipinski &0.638  & 0.743& 0.580&-\\
\bottomrule

\end{tabular}
\label{tab:ablations_prot}
\end{small}
\end{center}
\vskip -0.1in
\end{table}

\subsection{Ablations of Squares}
\label{app:ablations_square}
We perform additional experiments with and without squaring the key equations in the topological loss. The results in Table~\ref{tab:ablations_square} and Table~\ref{tab:ablations_square_prot} show that adding a square to $H_1$ birth decreases the rate of macrocycle generation, as well as their PoseBusters performance, both in the unconditional and conditional settings.

Omitting the $H_1$ death square results in a decrease in molecule validity and the PoseBusters checks performance for macrocycles. Finally, the lack of $H_0$ death square seems the least detrimental to model performance, and even improves the performance for some metrics. This term modification can therefore act as a good starting point for adapting \name for specific tasks.

\begin{table}[h!]
\begin{center}
\begin{small}
\caption{\textbf{Influence of the square on each guidance term in the unconditional setting.} The metrics most affected by the ablation are in bold. Results were obtained from sets of 1000 molecules. }

\begin{tabular}{lcccc}
\toprule
    Metrics ($\uparrow$; [0-1])   & \makecell[c]{MolDiff\\+\name\\(ours)} & \makecell[c]{\makecell[c]{+$H_1$ birth square}}& \makecell[c]{No $H_1$ death square} & \makecell[c]{\makecell[c]{No $H_0$ death square}}\\ \hline
    Validity      & 0.989& 0.981&\textbf{0.924}&0.991\\ 
    Connectivity   & 0.999& 0.985&1.000&1.000\\
    Successfulness   & 0.988& 0.966&0.924&0.991\\
\hline
    Out of successful: & &   & &\\
        \rowcolor[gray]{.9}\hspace{1em}\textbf{Macrocycles} & 0.997& \textbf{0.633}&0.998&1.000\\
\hline
    Out of macrocycles: & & &  &\\
        \hspace{1em}Diversity & 0.771&0.789&0.782&0.752\\
        \hspace{1em}Novelty & 1.000& 1.000&1.000&1.000\\
        \hspace{1em}Uniqueness & 1.000& 1.000&1.000&1.000\\
        \rowcolor{gray!20}\hspace{1em}\textbf{All PoseBusters tests} & 0.805& \textbf{0.628}&\textbf{0.711}&0.791\\ 
            \hspace{2em}Bond lengths     & 0.990& 0.964&0.935&0.989\\ 
            \hspace{2em}Bond angles     & 0.987& 0.702&0.938&0.987\\ 
            \hspace{2em}Internal steric clash     & 0.844& 0.913&0.837&0.815\\ 
            \hspace{2em}Aromatic ring flatness & 0.999& 0.998&0.999&0.996\\ 
            \hspace{2em}Non-ar. ring non-flatness & 0.999& 0.998&0.999&0.996\\ 
            \hspace{2em}Double bond flatness & 0.989& 0.987&0.977&0.987\\ 
            \hspace{2em}Internal energy     & 0.984& 0.959&0.953&0.987\\ 
\bottomrule
    
\end{tabular}
\label{tab:ablations_square}
\end{small}
\end{center}
\vskip -0.1in
\end{table}

\begin{table}[h!]
\begin{center}
\begin{small}
\caption{\textbf{Influence of the square on each guidance term in the conditional setting.} The metrics most affected by the ablation are in bold. Results were obtained from sets of 1000 molecules.}
\begin{tabular}{lcccc}
\toprule
    Metrics ($\uparrow$; [0-1])  & \makecell[c]{MolSnapper\\+\name\\(ours)}  & \makecell[c]{\makecell[c]{+$H_1$ birth square}}& \makecell[c]{No $H_1$ death square} &\makecell[c]{\makecell[c]{No $H_0$ death square}}\\ \hline
    Validity     &0.925 & 0.914& \textbf{0.883}&0.915\\ 
    Connectivity  &1.000 & 1.000& 1.000&1.000\\
    Successfulness  &0.925 & 0.914& 0.883&0.915\\
\hline
    Out of successful:& & & &\\
        \rowcolor[gray]{.9}\hspace{0.2em}\textbf{Macrocycles}&0.995 & \textbf{0.521}& 0.998&0.991\\
\hline
    Out of macrocycles:&  & & &\\
        \hspace{0.2em}Diversity&0.712 & 0.711& 0.717&0.703\\
        \hspace{0.2em}Novelty&1.000 & 1.000& 1.000&1.000\\
        \hspace{0.2em}Uniqueness&1.000 & 1.000& 1.000&1.000\\
        \rowcolor{gray!20}\hspace{0.2em}\textbf{All PoseBusters tests}&0.575 & \textbf{0.309}& 0.529&0.572\\
            \hspace{1.2em}Ligand PoseBusters&0.626 & 0.332& 0.580&0.607\\
                \hspace{2em}Bond lengths    &0.860 & 0.800& 0.815&0.869\\ 
                \hspace{2em}Bond angles    &0.888 & 0.502& 0.879&0.905\\ 
                \hspace{2em}Internal steric clash    &0.854 & 0.851& 0.846&0.847\\ 
                \hspace{2em}Aromatic ring flatness&0.992 & 0.987& 0.993&0.981\\ 
                \hspace{2em}Non-ar. ring non-flatness &0.999 & 1.000& 1.000&1.000\\ 
                \hspace{2em}Double bond flatness&0.993 & 0.989& 0.992&0.999\\ 
                \hspace{2em}Internal energy    &0.921 & 0.905& 0.910&0.899\\ 
            \hspace{1.2em}Protein PoseBusters&0.911 & 0.891& 0.915&0.940\\
                \hspace{2em}Protein-ligand max. distance &1.000 & 1.000& 1.000&1.000\\
                \hspace{2em}Min. distance to protein&0.907 & 0.891& 0.915&0.940\\
                \hspace{2em}Volume overlap with protein&1.000 & 1.000& 0.999&1.000\\
        \hspace{0.2em}Pharmacophore satisfaction &0.789  & 0.824& 0.820&0.744\\
        \hspace{0.2em}Macrocycle Lipinski &0.638  & 0.670& 0.640&0.630\\
\bottomrule

\end{tabular}
\label{tab:ablations_square_prot}
\end{small}
\end{center}
\vskip -0.1in
\end{table}

\subsection{Guidance Strength}
\label{app:strength}

Tables~\ref{tab:results_strength} and \ref{tab:strength_prot} demonstrate that the chosen guidance strength of 1 performs consistently well across both unconditional and conditional settings.

\begin{table}[htb]
\begin{center}
\begin{small}
\caption{\textbf{Influence of guidance strength on performance in the unconditional setting.} The original results from Table~\ref{tab:results_unconditional_macro} correspond to the strength of 1. Results were obtained from 1000 samples.}
\begin{tabular}{lccccc}
\toprule
    Metrics ($\uparrow$; [0-1])  &    0.1&0.5&\makecell[c]{MolDiff\\+\name\\1.0 (ours)}  & 1.5&2.0\\ \hline
    Validity     &   0.994&0.995&0.989& 0.991&0.992\\ 
    Connectivity  &   0.999&0.996&0.999& 0.996&0.988\\
    Successfulness  &   0.993&0.991&0.988& 0.987&0.980\\
\hline
    Out of successful:&   && & &\\
        \rowcolor[gray]{.9}\hspace{1em}\textbf{Macrocycles}&   0.769&0.967&0.997& 0.990&0.754\\
\hline
    Out of macrocycles:&    && & &\\
        \hspace{1em}Diversity&   0.752&0.763&0.771& 0.780&0.796\\
        \hspace{1em}Novelty&   1.000&1.000&1.000& 1.000&1.000\\
        \hspace{1em}Uniqueness&   1.000&0.999&1.000& 0.992&1.000\\
        \rowcolor{gray!20}\hspace{1em}\textbf{All PoseBusters tests}&   0.662&0.754&0.805& 0.870&0.548\\ 
            \hspace{2em}Bond lengths    &   0.986&0.996&0.990& 0.997&0.961\\ 
            \hspace{2em}Bond angles    &   0.808&0.947&0.987& 0.939&0.608\\ 
            \hspace{2em}Internal steric clash    &   0.823&0.813&0.844& 0.960&0.963\\ 
            \hspace{2em}Aromatic ring flatness&   0.992&0.997&0.999& 1.000&1.000\\ 
            \hspace{2em}Non-ar. ring non-flatness&   0.995&0.997&0.999& 0.999&1.000\\ 
            \hspace{2em}Double bond flatness&   0.986&0.990&0.989& 0.990&0.991\\ 
            \hspace{2em}Internal energy    &   0.976&0.977&0.984& 0.975&0.951\\ 
\bottomrule
    
\end{tabular}
\label{tab:results_strength}
\end{small}
\end{center}
\vskip -0.1in
\end{table}
\begin{table}[h!]
\begin{center}
\begin{small}
\caption{\textbf{Influence of guidance strength on performance in the conditional setting.} The original results from Table~\ref{tab:results_protein_macro} correspond to the strength of 1. Results were obtained from 1000 samples.}
\begin{tabular}{lccccc}
\toprule
    Metrics ($\uparrow$; [0-1])  &   0.1& 0.5&\makecell[c]{MolSnapper\\+\name\\1.0 (ours)} & 1.5& 2.0\\ \hline
    Validity     &  0.924&0.933 &0.925 & 0.876& 0.891\\ 
    Connectivity  &  1.000&1.000 &1.000 & 1.000& 1.000\\
    Successfulness  &  0.924&0.933 &0.925 & 0.876& 0.891\\
\hline
    Out of successful:&   & && & \\
        \rowcolor[gray]{.9}\hspace{0.2em}\textbf{Macrocycles}&  0.411&0.867 &0.995 & 0.995& 0.845\\
\hline
    Out of macrocycles:&    & && & \\
        \hspace{0.2em}Diversity&  0.655&0.695 &0.712 & 0.724& 0.724\\
        \hspace{0.2em}Novelty&  1.000&1.000 &1.000 & 1.000& 1.000\\
        \hspace{0.2em}Uniqueness&  1.000&1.000 &1.000 & 1.000& 1.000\\
        \rowcolor{gray!20}\hspace{0.2em}\textbf{All PoseBusters tests}&  0.384&0.507 &0.575 & 0.495& 0.154\\
            \hspace{1.2em}Ligand PoseBusters&  0.426&0.561 &0.626 & 0.561& 0.206\\
                \hspace{2em}Bond lengths    &  0.876&0.890 &0.860 & 0.854& 0.789\\ 
                \hspace{2em}Bond angles    &  0.658&0.791 &0.888 & 0.802& 0.311\\ 
                \hspace{2em}Internal steric clash    &  0.824&0.847 &0.854 & 0.874& 0.870\\ 
                \hspace{2em}Aromatic ring flatness&  0.995&0.994 &0.992 & 0.987& 0.991\\ 
                \hspace{2em}Non-ar. ring non-flatness &  1.000&0.999 &0.999 & 0.999& 0.999\\ 
                \hspace{2em}Double bond flatness&  0.982&0.984 &0.993 & 0.993& 0.985\\ 
                \hspace{2em}Internal energy    &  0.876&0.927 &0.921 & 0.911& 0.902\\ 
            \hspace{1.2em}Protein PoseBusters&  0.895&0.886 &0.911 & 0.857& 0.744\\
                \hspace{2em}Protein-ligand max. distance &  1.000&1.000 &1.000 & 1.000& 1.000\\
                \hspace{2em}Min. distance to protein&  0.895&0.886 &0.907 & 0.857& 0.744\\
                \hspace{2em}Volume overlap with protein&  1.000&1.000 &1.000 & 1.000& 0.993\\
        \hspace{0.2em}Pharmacophore satisfaction &  0.800& 0.808&0.789  & 0.828& 0.838\\
        \hspace{0.2em}Macrocycle Lipinski &  0.603&0.642 &0.638  & 0.673& 0.659\\
\bottomrule

\end{tabular}
\label{tab:strength_prot}
\end{small}
\end{center}
\vskip -0.1in
\end{table}

\clearpage
\subsection{Atom Type Distribution}
\label{app:atoms}

Table~\ref{tab:atoms} reports the distribution of atom types in the generated macrocycles. 

\begin{table}[h!]
\begin{center}
\begin{small}
\caption{\textbf{Atom distribution of generated macrocycles.} Represented as a fraction of all atoms.}

\begin{tabular}{lccccccc}
\toprule
Method & C & N & O & F & P & S & Cl \\
\midrule
MolDiff (no guid.)& 0.7978& 0.0798& 0.1089&  0.0011& 0.0000&  0.0113&  
    0.0011\\
+\name (ours) 
    & 0.8785& 0.0477& 0.0722&  0.0001& 0.0000&  0.0014&  
    0.0001\\
    \midrule
MolSnapper (no guid.) 
    &  0.7508&  0.1749&  0.0714&  0.0002&  0.0000&  0.0024&  
    0.0003\\
+\name (ours) 
    &  0.7540&  0.1921&  0.0483&  0.0000&  0.0000&  0.0051&  0.0005\\
\bottomrule
\end{tabular}

\label{tab:atoms}
\end{small}
\end{center}
\vskip -0.1in
\end{table}

\subsection{Ring Size Distribution}
\label{app:rings}

Table~\ref{tab:rings} reports the number of small rings present in the generated molecules. Generated macrocycles typically contain a few additional rings, indicating that they are not merely simple large cycles but exhibit nontrivial structural complexity. Moreover, \name reduces the number of 3- and 4-membered rings, which are the most strained and therefore undesirable, leading to chemically more favourable structures.

\begin{table}[h!]
\begin{center}
\begin{small}
\caption{\textbf{Ring size distribution in generated macrocycles.} Represented as the average number of cycles of a certain size per molecule.}

\begin{tabular}{lccccccc}
\toprule
Method & 3-cycles ($\downarrow$)& 4-cycles ($\downarrow$)& 5-cycles& 6-cycles& 7-cycles& 8-cycles& 9-cycles\\
\midrule
MolDiff (no guid.)& 0.054& 0.022& 0.482&  2.552& 0.154&  0.022&  
    1.885\\
+\name (ours) 
    & 0.065& 0.022& 0.449&  1.933& 0.033&  0.003&  
    1.229\\
    \midrule
MolSnapper (no guid.) 
    &  0.258&  0.082&  1.005&  2.919&  0.133&  0.009&  
    2.216\\
+\name (ours) 
    &  0.183&  0.077&  1.123&  2.127&  0.039&  0.001&  1.341\\
\bottomrule
\end{tabular}

\label{tab:rings}
\end{small}
\end{center}
\vskip -0.1in
\end{table}

\subsection{Runtime Comparison}
\label{app:complexity}

\subsubsection{Applying Guidance Every Few Steps}
\label{app:subsampling}

To reduce computational cost, the topological guidance can be applied only every few denoising iterations. Table~\ref{tab:results_protein_macro_every_bold} shows that applying \name once every up to 5 steps results in only a slight degradation of the performances, while substantially decreasing the overall computational overhead.

\begin{table}[h!]
\begin{center}
\begin{small}
\caption{\textbf{Performance of protein-conditioned macrocycle generation with guidance applied every few steps.} The first two columns correspond to Table~\ref{tab:results_protein_macro}. Values outperforming or matching the baseline MolSnapper metrics are in bold.}
\begin{tabular}{lcccccc}
\toprule
    Metrics ($\uparrow$; [0-1])  &   \makecell[c]{\makecell[c]{MolSnapper\\(no guid.)}}&Every 1&Every 2& Every 3& Every 4&Every 5\\ \hline
    Validity     &   0.858 &\textbf{0.925}&\textbf{0.930}&\textbf{0.942}& \textbf{0.914}&\textbf{0.940}\\ 
    Connectivity  &   1.000 &\textbf{1.000}&\textbf{1.000}&\textbf{1.000}& \textbf{1.000}&\textbf{1.000}\\
    Successfulness  &   0.858 &\textbf{0.925}&\textbf{0.930}&\textbf{0.942}& \textbf{0.914}&\textbf{0.940}\\
\hline
    Out of successful:&     &&& & &\\
        \rowcolor[gray]{.9}\hspace{0.2em}\textbf{Macrocycles}&   0.003 &\textbf{0.995}&\textbf{0.943}&\textbf{0.911}& \textbf{0.864}&\textbf{0.852}\\
\hline
    Out of macrocycles:&   &&&  & &\\
        \hspace{0.2em}Diversity&  0.626&\textbf{0.712}&\textbf{0.709}& \textbf{0.713}& \textbf{0.711}&\textbf{0.723}\\
        \hspace{0.2em}Novelty&   1.000&\textbf{1.000}&\textbf{1.000}&\textbf{1.000}& \textbf{1.000}&\textbf{1.000}\\
        \hspace{0.2em}Uniqueness&   1.000&\textbf{1.000}&\textbf{1.000}&\textbf{1.000}& \textbf{1.000}&\textbf{1.000}\\
        \rowcolor{gray!20}\hspace{0.2em}\textbf{All PoseBusters tests}&    0.440&\textbf{0.575}&\textbf{0.590}&\textbf{0.569}& \textbf{0.535}&\textbf{0.503}\\
            \hspace{1.2em}Ligand PoseBusters&   0.539&\textbf{0.626}&\textbf{0.658}&\textbf{0.661}& \textbf{0.637}&\textbf{0.624}\\
                \hspace{2em}Bond lengths    &   0.844&\textbf{0.860}&\textbf{0.900}&\textbf{0.927}& \textbf{0.925}&\textbf{0.928}\\ 
                \hspace{2em}Bond angles    &   0.913&0.888&\textbf{0.918}&\textbf{0.916}& 0.890&0.850\\ 
                \hspace{2em}Internal steric clash    &   0.862&0.854&0.856&0.851& 0.839&\textbf{0.869}\\ 
                \hspace{2em}Aromatic ring flatness&   0.996 &0.992&0.995&0.994& \textbf{0.997}&0.990\\ 
                \hspace{2em}Non-ar. ring non-flatness &   0.993&\textbf{0.999}&\textbf{1.000}&\textbf{0.999}& \textbf{0.999}&\textbf{1.000}\\ 
                \hspace{2em}Double bond flatness&   0.980&\textbf{0.993}&\textbf{0.992}&\textbf{0.986}& \textbf{0.985}&\textbf{0.983}\\ 
                \hspace{2em}Internal energy    &   0.818&\textbf{0.921}&\textbf{0.928}&\textbf{0.920}& \textbf{0.916}&\textbf{0.920}\\ 
            \hspace{1.2em}Protein PoseBusters&    0.806&\textbf{0.911}&\textbf{0.895}&\textbf{0.875}& \textbf{0.838}&0.792\\
                \hspace{2em}Protein-ligand max. distance &    1.000&\textbf{1.000}&\textbf{1.000}&\textbf{1.000}& \textbf{1.000}&\textbf{1.000}\\
                \hspace{2em}Min. distance to protein&    0.806&\textbf{0.907}&\textbf{0.895}&\textbf{0.875}& \textbf{0.838}&0.792\\
                \hspace{2em}Volume overlap with protein&    1.000&\textbf{1.000}&\textbf{1.000}&\textbf{1.000}& \textbf{1.000}&0.999\\
        \hspace{0.2em}Pharmacophore satisfaction &    0.769&\textbf{0.789} &\textbf{0.791}&\textbf{0.800}& \textbf{0.820}&\textbf{0.823}\\
        \hspace{0.2em}Macrocycle Lipinski &    0.551&\textbf{0.638} &\textbf{0.692}&\textbf{0.698}& \textbf{0.682}&\textbf{0.693}\\
\bottomrule

\end{tabular}
\label{tab:results_protein_macro_every_bold}
\end{small}
\end{center}
\vskip -0.1in
\end{table}

\subsubsection{Starting Topological Guidance Late}
\label{app:starting_late}
Table~\ref{tab:late} analyzes the effect of delaying the onset of topological guidance. The guidance can be introduced halfway through the denoising process with minimal impact on molecular validity, macrocycle-related metrics, and PoseBusters performance.

\begin{table}[h!]
\begin{center}
\begin{small}
\caption{\textbf{Performance of protein-conditioned macrocycle generation with guidance starting late.} The number denotes the number of remaining denoising steps when the guidance is applied, out of 1000 steps total. The first two columns correspond to Table~\ref{tab:results_protein_macro}. Values outperforming or matching the baseline MolSnapper metrics are in bold. Non-baseline results were obtained from sets of 200 molecules.}
\setlength{\tabcolsep}{5pt}
\begin{tabular}{l@{}ccccccccccc}
\toprule
    Metrics ($\uparrow$; [0-1])  &   \makecell[c]{\makecell[c]{MolSnapper\\(no guid.)}}&\makecell[c]{\makecell[c]{1000\\steps}}&\makecell[c]{\makecell[c]{900\\steps}}& \makecell[c]{\makecell[c]{800\\steps}}& \makecell[c]{\makecell[c]{700\\steps}}&\makecell[c]{\makecell[c]{600\\steps}}& \makecell[c]{\makecell[c]{500\\steps}}&\makecell[c]{\makecell[c]{400\\steps}} &\makecell[c]{\makecell[c]{300\\steps}} &\makecell[c]{\makecell[c]{200\\steps}} &\makecell[c]{\makecell[c]{100\\steps}}\\ \hline
    Validity     &   0.858 &\textbf{0.925}&\textbf{0.925}&\textbf{0.925}& \textbf{0.910}&\textbf{0.920}& \textbf{0.940}& \textbf{0.905}& \textbf{0.910}& \textbf{0.860}&0.840\\ 
    Connectivity  &   1.000 &\textbf{1.000}&\textbf{1.000}&\textbf{1.000}& \textbf{1.000}&\textbf{1.000}& \textbf{1.000}& \textbf{1.000}& \textbf{1.000}& \textbf{1.000}&\textbf{1.000}\\
    Successfulness  &   0.858 &\textbf{0.925}&\textbf{0.925}&\textbf{0.925}&\textbf{0.910}&\textbf{0.920}& \textbf{0.940}& \textbf{0.905}& \textbf{0.910}&\textbf{ 0.860}&0.840\\
\hline
    Out of successful:&     &&& & & & & & & &\\
        \rowcolor[gray]{.9}\hspace{0.2em}\textbf{Macrocycles}&   0.003 &\textbf{0.995}&\textbf{0.995}&\textbf{0.989}& \textbf{0.984}&\textbf{0.995}& \textbf{0.995}& \textbf{1.000}& \textbf{0.995}& \textbf{0.965}&\textbf{0.935}\\
\hline
    Out of macrocycles:&   &&&  & & & & & & &\\
        \hspace{0.2em}Diversity&  0.626&\textbf{0.712}&\textbf{0.711}& \textbf{0.698}& \textbf{0.695}&\textbf{0.672}& \textbf{0.686}& \textbf{0.676}& \textbf{0.670}& \textbf{0.672}&\textbf{0.704}\\
        \hspace{0.2em}Novelty&   1.000&\textbf{1.000}&\textbf{1.000}&\textbf{1.000}& \textbf{1.000}&\textbf{1.000}& \textbf{1.000}& \textbf{1.000}& \textbf{1.000}& \textbf{1.000}&\textbf{1.000}\\
        \hspace{0.2em}Uniqueness&   1.000&\textbf{1.000}&\textbf{1.000}&\textbf{1.000}& \textbf{1.000}&\textbf{1.000}& \textbf{1.000}& \textbf{1.000}& \textbf{1.000}& \textbf{1.000}&\textbf{1.000}\\
        \rowcolor{gray!20}\hspace{0.2em}\textbf{All PoseBusters tests}&    0.440&\textbf{0.575}&\textbf{0.598}&\textbf{0.552}& \textbf{0.547}&\textbf{0.503}& \textbf{0.551}&\textbf{ 0.486}& 0.376& 0.319&0.178\\
            \hspace{1.2em}Ligand PoseBusters&   0.539&\textbf{0.626}&\textbf{0.658}&\textbf{0.623}& \textbf{0.654}&\textbf{0.601}& \textbf{0.684}& \textbf{0.652}& \textbf{0.547}& 0.446&0.306\\
                \hspace{2em}Bond lengths    &   0.844&\textbf{0.860}&\textbf{0.880}&\textbf{0.869}& \textbf{0.922}&\textbf{0.863}& \textbf{0.914}& \textbf{0.884}& \textbf{0.917}& \textbf{0.886}&0.834\\ 
                \hspace{2em}Bond angles    &   0.913&0.888&0.902&0.852& \textbf{0.927}&0.902& 0.888& \textbf{0.923}& 0.834& 0.789&0.561\\ 
                \hspace{2em}Internal steric clash    &   0.862&0.854&\textbf{0.864}&\textbf{0.896}& 0.855&0.814& \textbf{0.877}& \textbf{0.873}& \textbf{0.890}& 0.813&0.841\\ 
                \hspace{2em}Aromatic ring flatness&   0.996 &0.992&\textbf{1.000}&\textbf{1.000}& 0.994&\textbf{1.000}& 0.995& 0.994& 0.994& \textbf{1.000}&\textbf{1.000}\\ 
                \hspace{2em}Non-ar. ring non-flatness &   0.993&\textbf{0.999}&1.000&\textbf{1.000}& \textbf{1.000}&\textbf{1.000}& 0.995& \textbf{1.000}& \textbf{1.000}& \textbf{1.000}&\textbf{1.000}\\ 
                \hspace{2em}Double bond flatness&   0.980&\textbf{0.993}&\textbf{0.995}&\textbf{0.989}& \textbf{0.994}&\textbf{0.995}& \textbf{0.989}& \textbf{0.989}& 0.961& 0.910&0.930\\ 
                \hspace{2em}Internal energy    &   0.818&\textbf{0.921}&\textbf{0.929}&\textbf{0.951}& \textbf{0.894}&\textbf{0.891}& \textbf{0.925}& \textbf{0.923}& 0.796& \textbf{0.867}&\textbf{0.822}\\ 
            \hspace{1.2em}Protein PoseBusters&    0.806&\textbf{0.911}&\textbf{0.913}&\textbf{0.902}& \textbf{0.844}&\textbf{0.852}& 0.786& 0.790& 0.740& 0.765&0.561\\
                \hspace{2em}Protein-ligand max. distance &    1.000&\textbf{1.000}&\textbf{1.000}&\textbf{1.000}& \textbf{1.000}&\textbf{1.000}&\textbf{ 1.000}& \textbf{1.000}& \textbf{1.000}& \textbf{1.000}&\textbf{1.000}\\
                \hspace{2em}Min. distance to protein&    0.806&\textbf{0.907}&\textbf{0.913}&\textbf{0.902}& \textbf{0.844}&\textbf{0.852}& 0.786& 0.790& 0.740& 0.765&0.561\\
                \hspace{2em}Volume overlap with protein&    1.000&\textbf{1.000}&\textbf{1.000}&\textbf{1.000}& \textbf{1.000}&\textbf{1.000}& \textbf{1.000}& \textbf{1.000}& \textbf{1.000}& \textbf{1.000}&\textbf{1.000}\\
        \hspace{0.2em}Pharmacophore satisfaction &    0.769&\textbf{0.789} &\textbf{0.804}&\textbf{0.809}&\textbf{ 0.816}&\textbf{0.820}& \textbf{0.893}& \textbf{0.823}& \textbf{0.818}& \textbf{0.819}&\textbf{0.815}\\
        \hspace{0.2em}Macrocycle Lipinski &    0.551&\textbf{0.638} &\textbf{0.598}&\textbf{0.639}& \textbf{0.598}&\textbf{0.557}& \textbf{0.620}& \textbf{0.602}& \textbf{0.713}& \textbf{0.669}&\textbf{0.732}\\
\bottomrule

\end{tabular}
\label{tab:late}
\end{small}
\end{center}
\vskip -0.1in
\end{table}

\clearpage
\section{Integrating Bond-level Information: Challenges}\label{app:bond}
The following strategies have been tested in order to integrate bond-level information in our macrocycle guidance function.
\begin{enumerate}
\item Maximizing the $H_1$ component \textbf{death edge probability} in our method.
\item Computing a \textbf{representative cycle} of our $H_1$ component. However this task is both computationally expensive and ambiguous, as a representative cycle is not unique.
\item Pre-selecting the \textbf{ordered set of atoms} to form the cycle (before sampling), and maximizing edge probabilities while minimizing chord probabilities.
\item Pre-selecting the \textbf{unordered set of atoms} to form the cycle (before sampling), and applying $H_1$ guidance on this subset during sampling based on distances computed from probabilities.
\end{enumerate}

None of the attempts mentioned yield convincing results, which we explain with the following three reasons. First, \name already achieves a $99\%$ macrocycle rate without integrating bond information. Consequently one can hardly hope for significant improvements. Second, any minimization of a chord probability is challenged by the fact that minimizing the probability of a bond can be done both by pushing the atoms apart and by bringing them very close, without a clear way to choose which one. Third, any backpropagation of the bond-predictor model is much more computationally expensive than our current guidance, as well as relying on the quality and robustness of this model. As a consequence, we believe that trying further to integrate bond-level information is not a priority.

\end{document}